\newcommand{\R}{\mathbb{R}}
\newcommand{\N}{\mathbb{N}}
\newcommand{\tr}{\operatorname{Tr}}
\newcommand{\E}{\mathbbm{E}}
\renewcommand{\P}{\mathbbm{P}}
\newcommand{\V}{\mathbbm{V}}
\renewcommand{\div}{\operatorname{div}}
\DeclareRobustCommand{\cev}[1]{%
  {\mathpalette\do@cev{#1}}%
}
\newcommand{\do@cev}[2]{%
  \vbox{\offinterlineskip
    \sbox\z@{$\m@th#1 x$}%
    \ialign{##\cr
      \hidewidth\reflectbox{$\m@th#1\vec{}\mkern4mu$}\hidewidth\cr
      \noalign{\kern-\ht\z@}
      $\m@th#1#2$\cr
    }%
  }%
}
\theoremstyle{plain}
\newtheorem{theorem}{Theorem}[section]
\newtheorem{proposition}[theorem]{Proposition}
\newtheorem{lemma}[theorem]{Lemma}
\newtheorem{corollary}[theorem]{Corollary}
\theoremstyle{definition}
\newtheorem{problem}[theorem]{Problem}
\theoremstyle{remark}
\newtheorem{remark}[theorem]{Remark}
\title{An optimal control perspective\\on diffusion-based generative modeling}
\author{\name Julius Berner$^\star$$^\dagger$ \email jberner@caltech.edu \\
      \addr Caltech
      \AND
      \name Lorenz Richter$^\star$ \email richter@zib.de \\
      \addr Zuse Institute Berlin \\
      dida Datenschmiede GmbH
      \AND
      \name Karen Ullrich \email karenu@meta.com\\
      \addr Meta AI}
\begin{document}
\maketitle

\begin{abstract}
We establish a connection between stochastic optimal control and generative models based on stochastic differential equations (SDEs), such as recently developed diffusion probabilistic models. In particular, we derive a Hamilton--Jacobi--Bellman equation that governs the evolution of the log-densities of the underlying SDE marginals. This perspective allows to transfer methods from optimal control theory to generative modeling. First, we show that the evidence lower bound is a direct consequence of the well-known verification theorem from control theory. Further, we can formulate diffusion-based generative modeling as a minimization of the Kullback--Leibler divergence between suitable measures in path space. Finally, we develop a novel diffusion-based method for sampling from unnormalized densities -- a problem frequently occurring in statistics and computational sciences. We demonstrate that our time-reversed diffusion sampler (DIS) can outperform other diffusion-based sampling approaches on multiple numerical examples.
\end{abstract}

\def\thefootnote{$\star$}\footnotetext{Equal contribution (the author order was determined by \texttt{numpy.random.rand(1)}).}
\def\thefootnote{$\dagger$}\footnotetext{Work done during an internship at Meta AI.}
\def\thefootnote{\arabic{footnote}}

\section{Introduction}
\emph{Diffusion (probabilistic) models} (DPMs) have established themselves as state-of-the-art in generative modeling and likelihood estimation of high-dimensional image data~\citep{ho2020denoising,kingma2021variational,nichol2021improved}. 
In order to better understand their inner mechanisms, multiple connections to adjacent fields have already been proposed: For instance, in a discrete-time setting, one can interpret DPMs as types of \emph{variational autoencoders} (VAEs) for which the \emph{evidence lower bound} (ELBO) corresponds to a multi-scale denoising score matching objective~\citep{ho2020denoising}. In continuous time, DPMs can be interpreted in terms of stochastic differential equations (SDEs)~\citep{song2020score,huang2021variational,vahdat2021score} or as infinitely deep VAEs~\citep{kingma2021variational}. Both interpretations allow for the derivation of a continuous-time ELBO, and encapsulate various methods, such as \emph{denoising score matching with Langevin dynamics} (SMLD), \emph{denoising diffusion probabilistic models} (DDPM), and continuous-time \emph{normalizing flows} as special cases~\citep{song2020score,huang2021variational}.

\begin{figure}
\begin{minipage}{0.48\textwidth}
    \centering
    \vspace{1em}\includegraphics[width=\linewidth]{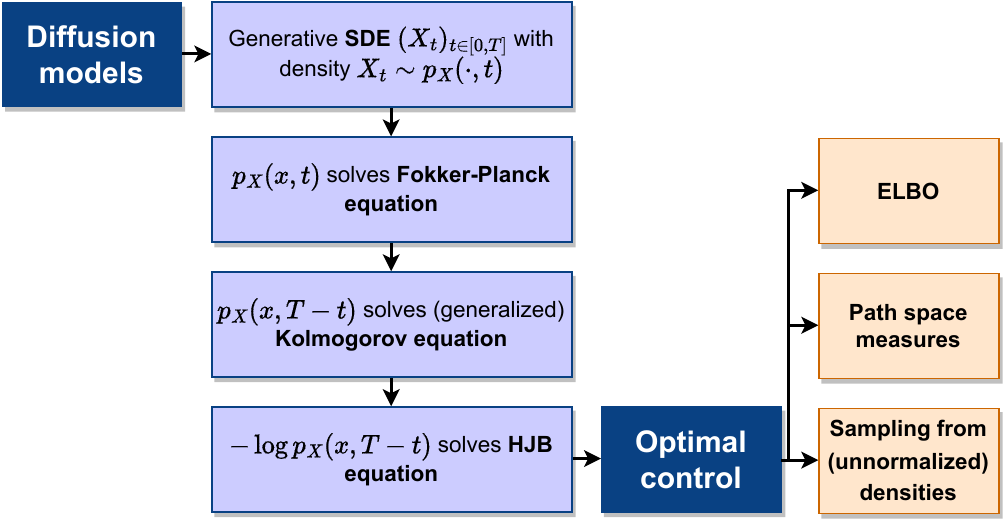}
    \vspace{-0.5em}
    \caption{The connection between diffusion models and optimal control is outlined in \Cref{sec: different perspectives}. Three possible implications of this connection are the derivation of the ELBO via the verification theorem (\Cref{sec:control}), a path space measure interpretation of diffusion models (\Cref{sec:path}), and a novel approach for sampling from (unnormalized) densities (\Cref{sec: sampling from unnormalized densities}).}
    \label{fig:summary}
\end{minipage}\hspace{0.04\textwidth}%
\begin{minipage}{0.48\textwidth}
    \centering
    \begin{tikzpicture}[      
       every node/.style={anchor=south west,inner sep=0pt},
       x=1mm, y=1mm,
     ] 
    \node (fig1) at (0,0)
    {\includegraphics[width=\linewidth]{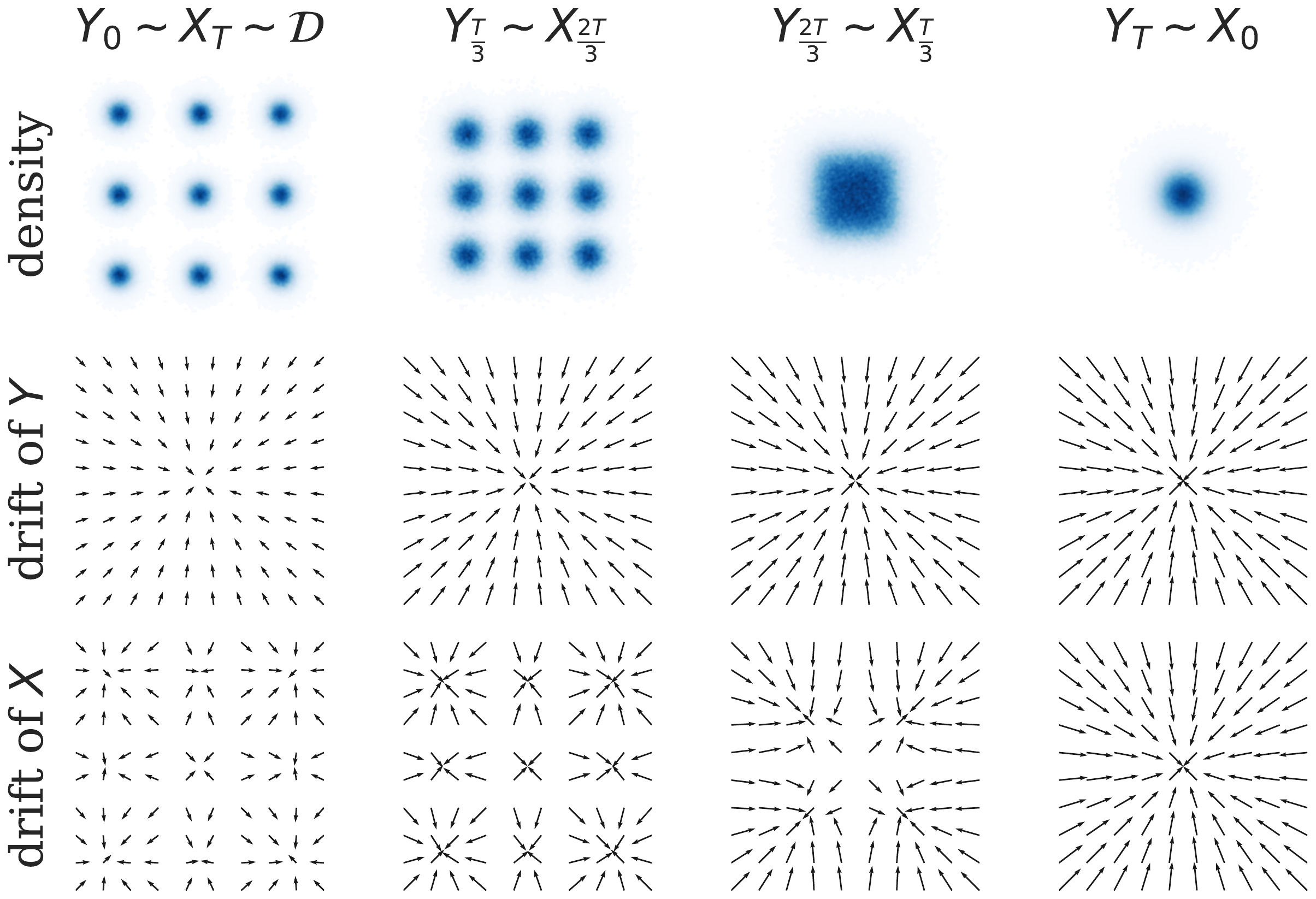}};
   \draw[<-](7,16.5)--(76,16.5) node[midway,fill=white,anchor=center]{\tiny $X$};
   \draw[->](7,34)--(76,34) node[midway,fill=white,anchor=center]{\tiny $Y$};
    \end{tikzpicture}
    \vspace{-0.7em}
    \caption{
    For a stochastic process $Y$ and its time-reversed process $X$, we show the density $p_Y(\cdot,t)=p_X(\cdot,T-t)$ (top), the drift $f(\cdot,t)$ of $Y$ (middle), and the drift $\mu(\cdot,t)$ of $X$ (bottom) for $t\in \left[0,\frac{T}{3},\tfrac{2T}{3},T \right]$, see~\eqref{eq:rev_def} and~\eqref{eq: inference SDE}.}
    \label{fig:reverse_sde}
\end{minipage}

\end{figure}

In this work, we suggest another perspective. We show that the SDE framework naturally connects diffusion models to partial differential equations (PDEs) typically appearing in \emph{stochastic optimal control} and \emph{reinforcement learning}. 
The underlying insight is that the \emph{Hopf--Cole transformation} serves as a means of showing that the time-reversed log-density of the diffusion process satisfies a \emph{Hamilton--Jacobi--Bellman} (HJB) equation (\Cref{sec: HJB}). This relation allows us to connect diffusion models to a control problem that minimizes specific control costs with respect to a given controlled dynamics. 
We show that this readily yields the ELBO of the generative model (\Cref{sec:control}), which can subsequently be interpreted in terms of Kullback-Leibler (KL) divergences between measures on the path space of continuous-time stochastic processes (\Cref{sec:path}).

While one of our contributions lies in the formal connection between stochastic optimal control and diffusion models, as described in \Cref{sec:diff}, we moreover demonstrate the practical relevance of our analysis by transferring methods from control theory to generative modeling. More specifically, in \Cref{sec: sampling from unnormalized densities}, we design a novel algorithm for sampling from (unnormalized) densities -- a problem which frequently occurs in Bayesian statistics, computational physics, chemistry, and biology \citep{liu2001monte,stoltz2010free}. As opposed to related approaches~\citep{dai1991stochastic,richter2021solving,zhang2021path}, our method allows for more flexibility in choosing the initial distribution and reference SDE, offering the possibility to incorporate specific prior knowledge. Finally, in~\Cref{sec: numerical examples}, we show that our sampling strategy can significantly outperform related approaches across a number of relevant numerical examples. 

In summary, our work establishes new connections between generative modeling, sampling, and optimal control, which brings the following theoretical insights and practical algorithms (see Figure~\ref{fig:summary} for an overview):

\begin{itemize}
    \item \textbf{PDE and optimal control perspectives:} We identify the HJB equation as the governing PDE of the score function, which rigorously establishes the connection of generative modeling to optimal control (\Cref{sec: HJB}).
    The HJB equation can further be used for theoretic analyses as well as for practical algorithms for the numerical approximation of the score, e.g., based on neural PDE solvers.

    \item \textbf{ELBO:} Using the HJB equation, we show that the objectives in diffusion-based generative modeling can be solely derived from the principles of control theory (\Cref{sec:control}).
    
    \item \textbf{Path space perspective:} This perspective on diffusion models offers an intuitive representation of the objective (and the variational gap) in terms of KL divergences between measures on path space (\Cref{sec:path}). Crucially, it also allows to consider alternative divergences, such as, e.g., the log-variance divergence, leading to improved loss functions and algorithms.

    \item \textbf{Sampling:} Our control perspective leads to a novel, diffusion-based method to sample from unnormalized densities (\Cref{sec: sampling from unnormalized densities}). This method can already outperform previous diffusion-based state-of-the-art samplers. Moreover, the connection to diffusion models allows to transfer noise schedules, integrators, and further popular techniques from generative modeling to sampling. 
\end{itemize}

\subsection{Related work}
\paragraph{Diffusion models:} Starting from DPMs~\citep{sohl2015deep}, a number of works have contributed to the success of diffusion-based generative modeling, see, e.g.,~\cite{ho2020denoising,kingma2021variational,nichol2021improved,vahdat2021score,song2020improved}. We are building upon the SDE-based formulation developed by~\cite{song2020score} which connects diffusion models to score matching~\citep{hyvarinen2005estimation}. The underlying idea of time-reversing a stochastic process dates back to work by~\cite{nelson1967dynamical,anderson1982reverse,haussmann1986time,follmer1988random}. Notably,~\cite{pavon1989stochastic} connects the log-density of such a reverse-time SDE to an HJB equation. 
In this setting, we extend the results of~\cite{huang2021variational} on the (continuous-time) ELBO of diffusion models and provide further insights from the perspective of optimal control and path space measures.

For further previous work on optimal control in the context of generative modeling, we refer the reader to~\cite{tzen2019theoretical,de2021diffusion,pavon2022local,holdijk2022path}. Connections between optimal control and time-reversals of stochastic processes have also been analyzed by \cite{maoutsa2022deterministic, maoutsa2023geometric}. Finally, we refer to \cite{richter2023improved} for further generalizations based on the path space perspective and to \cite{zhang2023mean} for recent work on interpreting generative modeling via mean-field games.

\paragraph{Sampling from (unnormalized) densities:} Monte Carlo (MC) techniques are arguably the most common methods to sample from unnormalized densities and compute normalizing constants. Specifically, variations of \emph{Annealed Importance Sampling} (AIS)~\citep{neal2001annealed} and \emph{Sequential Monte Carlo}~\citep{del2006sequential,doucet2009tutorial} (SMC) are often referred to as the \enquote{gold standard} in the literature. They apply a combination of Markov chain Monte Carlo (MCMC) methods and importance sampling to a sequence of distributions interpolating between a tractable initial distribution and the target density. 
Even though MCMC methods are guaranteed to converge to the target distribution under mild assumptions, the convergence speed might be too slow in many practical settings~\citep{robert1999monte}.
Variational methods such as mean-field approximations~\citep{wainwright2008graphical} and normalizing flows provide an alternative\footnote{Note that diffusion models give rise to a continuous-time normalizing flow, the so-called \emph{probability flow ordinary differential equation (ODE)}, having the same marginals as the reverse-time diffusion process, see~\cite{song2020score} and~\Cref{app:reverse_sde}.}~\citep{papamakarios2021normalizing}.
By fitting a parametric family of tractable distributions to the target density, the problem of density estimation is cast into an optimization problem.
In \cite{salimans2015markov} it is noted that stochastic Markov chains can be interpreted as the variational approximation in an extended state space, thus bridging the two techniques and allowing to apply them jointly.

In this context, diffusion models have been employed to approximate the extended target distribution needed in the importance sampling step of AIS methods~\citep{doucet2022score}. Moreover, diffusion models have been trained on densities by simulating samples using importance sampling with the likelihood of the partially-trained model (computed via the probability flow ODE) as proposal distribution~\citep{jing2022torsional}.
In this work, we propose a novel variational method based on diffusion models, which we call \emph{time-reversed diffusion sampler} (DIS). It is based on minimizing the reverse KL divergence between a controlled SDE and the reverse-time SDE. This is intimately connected to methods developed in the field of stochastic optimal control based on minimizing the reverse KL divergence to a reference process starting at a Dirac delta distribution~\citep{dai1991stochastic,richter2021solving,zhang2021path,tzen2019theoretical,vargas2023bayesian}. Finally, we mention that concurrently to our work, \emph{Denoising Diffusion Samplers}~\citep{vargas2023denoising} have been proposed, which can be viewed as a special case of our approach, see~\Cref{app: diffusion sampling}.

\paragraph{Schr\"odinger bridges:} Schr\"odinger bridge (SB) problems~\citep{schroedinger1931ueber} aim to find the minimizer of the KL divergence to a reference process, typically a Brownian motion, subject to the constraint of satisfying given marginal distributions at the initial and terminal time. Such problems include diffusion-based methods as a special case where the reference process is given by the uncontrolled inference process~\citep{de2021diffusion,chen2021likelihood,koshizuka2022neural}. Since in this case only the reverse-time process is controlled, its initial distribution must correspond to the terminal distribution of the inference process. In practice, this constraint is usually only fulfilled in an approximate sense, resulting in a prior loss, see~\Cref{sec:path}. Note that the previously mentioned sampling methods in the field of optimal control rely on a Dirac delta as the initial distribution and solve the so-called \emph{Schr\"odinger half-bridge problem}, constraining the KL minimization problem only to the terminal distribution~\citep{dai1991stochastic}.

\subsection{Notation}
We denote the density of a random variable $Z$ by $p_Z$. For an $\R^d$-valued stochastic process $Z=(Z_t)_{t\in[0,T]}$ we define the function $p_Z$ by $p_Z(\cdot,t)\coloneqq p_{Z_t}$ for every $t\in[0,T]$. We further denote by $\P_{Z}$ the law of $Z$ on the space of continuous functions $C([0,T],\R^d)$.
For a time-dependent function $f$, we denote by $\cev{f}$ the time-reversal given by $\cev{f}(t)\coloneqq f(T-t)$. Finally, we define the divergence of matrix-valued functions row-wise. More details on our notation can be found in~\Cref{sec:notation}.

\section{SDE-based generative modeling as an optimal control problem}
\label{sec: different perspectives}

Diffusion models can naturally be interpreted through the lens of continuous-time stochastic processes~\citep{song2020score}. 
To this end, let us formalize our setting in the general context of SDE-based generative models.
We define our model as the stochastic process $X=(X_s)_{s \in [0,T]}$ characterized by the SDE
\begin{equation}
\label{eq:sde_generate}
\mathrm{d} X_s = \cev{\mu}(X_s,s)\,\mathrm{d}s + \cev{\sigma}(s) \,\mathrm{d} B_s
\end{equation}
with suitable\footnote{Motivated by \Cref{rem:time_rev}, we start with time-reversed drift and diffusion coefficients $\cev{\mu}$ and $\cev{\sigma}$. Further, we assume certain regularity on the coefficient functions of all appearing SDEs, see \Cref{sec:notation}} drift and diffusion coefficients $\mu\colon\R^d \times [0,T] \to \R^d$ and $\sigma\colon[0,T]\to\R^{d\times d}$.
Learning the model in~\eqref{eq:sde_generate} now corresponds to solving the following problem.
\begin{problem}[SDE-based generative modeling]
\label{prob: control}
Learn an initial condition $X_0$ as well as coefficient functions $\mu$ and $\sigma$ such that the distribution of $X_T$ approximates a given data distribution $\mathcal{D}$.
\end{problem}
While, in general, the initial condition $X_0$ as well as both the coefficient functions $\mu$ and $\sigma$ can be learned, typical applications often focus on learning only the drift $\mu$~\citep{ho2020denoising,kongdiffwave,corso2022diffdock}. The following remark justifies that this is sufficient to represent an arbitrary distribution $\mathcal{D}$.

\begin{remark}[Reverse-time SDE]
\label{rem:time_rev}
Naively, one can achieve $X_T\sim \mathcal{D}$ by setting 
\begin{equation}
\label{eq:rev_def}
    X_0 \sim Y_T \quad \text{and} \quad \mu \coloneqq \sigma\sigma^\top \nabla \log p_{Y} - f,
\end{equation}
where $f\colon\R^d \times [0,T] \to \R^d$ and $Y$ is a solution to the SDE
\begin{equation}
\label{eq: inference SDE}
\mathrm{d}Y_s = f(Y_s, s) \,\mathrm{d}s + \sigma(s) \,\mathrm{d} B_s, \quad Y_0 \sim \mathcal{D}.
\end{equation}
This well-known result dates back to~\cite{nelson1967dynamical,anderson1982reverse,haussmann1986time,follmer1988random}. More generally, it states that $X$ can be interpreted as the time-reversal of $Y$, which we will denote by $\cev{Y}$, in the sense that $p_X=\cev{p}_Y$ almost everywhere, see \Cref{fig:reverse_sde} and \Cref{app:reverse_sde}.
Even though the reverse-time SDE provides a valid solution to~\Cref{prob: control},
apparent practical challenges are to sample from the terminal distribution of the generative SDE, $X_0 \sim Y_T$, and to compute the so-called \emph{score} $\nabla \log p_{Y}$. 
In most scenarios, we either only have access to samples from the true data distribution $\mathcal{D}$ or access to its density. 
If samples from $\mathcal{D}$ are available, diffusion models provide a working solution to solving~\Cref{prob: control} (more details in~\Cref{sec:diff}). When, instead, having access to the density of $\mathcal{D}$, general time-reversed diffusions have, to the best of our knowledge, not yet been considered. In~\Cref{sec: sampling from unnormalized densities}, we thus propose a novel strategy for this scenario.
\end{remark}
As already apparent from the optimal drift $\mu$ in the previous remark, and noting again that $p_Y = \cev{p}_X$ in this case, the reverse-time log-density $\log \cev{p}_X$ of the process $X$ will play a prominent role in deriving a suitable objective for solving \Cref{prob: control}. Hence, in the next section, we derive the HJB equation governing the evolution of $\log \cev{p}_X$, which then provides the bridge to the fields of optimal control and reinforcement learning.

\subsection{PDE perspective: HJB equation for log-density}
\label{sec: HJB}

We start with the well-known \emph{Fokker-Planck equation}, which describes the evolution of the density of the solution $X$ to the SDE in~\eqref{eq:sde_generate} via the PDE
\begin{equation}
\partial_t p_{X} =  \div  \left( \div \left(\cev{D} p_{X}\right) - \cev{\mu} p_{X}  \right),
\end{equation}
where we set $D\coloneqq \frac{1}{2} \sigma \sigma^\top$ for notational convenience. This implies that the time-reversed density $\cev{p}_{X}$ satisfies a (generalized) \emph{Kolmogorov backward equation} given by
\begin{align}
\label{eq:kolmogorov_pde}
\partial_t \cev{p}_{X} =  \div  \left( - \div \left(D \cev{p}_{X}\right) +\mu \cev{p}_{X}  \right) = - \tr\big(D \nabla^2 \cev{p}_{X}\big) + \mu \cdot \nabla \cev{p}_{X} + \div (\mu)\cev{p}_{X}.
\end{align}
The second equality follows from the identities for divergences in \Cref{app:div} and the fact that $\sigma$ does not depend on the spatial variable\footnote{In~\Cref{app:reverse_sde}, we provide a proof for the reverse-time SDE in~\Cref{rem:time_rev} for general $\sigma$ depending on $x$ and $t$. However, for simplicity, we restrict ourselves to $\sigma$ only depending on the time variable $t$ in the following.} $x$. Now we use the \emph{Hopf--Cole transformation} to convert the linear 
PDE in~\eqref{eq:kolmogorov_pde} to an HJB equation which is prominent in control theory, see~\cite{pavon1989stochastic,fleming2012deterministic}, and \Cref{app:hc_transform}.

\begin{lemma}[HJB equation for log-density]
\label{lem: HJB for score}
Let us define $V \coloneqq -\log \cev{p}_{X}$. Then $V$ is a solution to the HJB equation
	\begin{align}
	\partial_t V  = - \tr\left(D\nabla^2 V\right) + \mu \cdot \nabla V - \div (\mu) + \frac{1}{2} \big\|\sigma^\top \nabla V\big\|^2
	\end{align}
 with terminal condition $V(\cdot,T) = -\log p_{X_0}$.
\end{lemma}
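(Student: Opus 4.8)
The plan is to apply the Hopf--Cole transformation directly to the Kolmogorov backward equation~\eqref{eq:kolmogorov_pde}, in the spirit of~\cite{pavon1989stochastic} and~\Cref{app:hc_transform}. Since $\cev{p}_X$ is, under the assumed regularity on the SDE coefficients, smooth and strictly positive, the substitution $\cev{p}_X = e^{-V}$ with $V = -\log\cev{p}_X$ is well-defined, and it remains to rewrite each term of~\eqref{eq:kolmogorov_pde} in terms of $V$.

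First I would compute, via the chain and product rules,
\begin{equation*}
\partial_t \cev{p}_X = -e^{-V}\partial_t V, \qquad \nabla\cev{p}_X = -e^{-V}\nabla V, \qquad \nabla^2\cev{p}_X = e^{-V}\big(\nabla V (\nabla V)^\top - \nabla^2 V\big).
\end{equation*}
Contracting the Hessian with $D$ then gives $\tr\big(D\nabla^2\cev{p}_X\big) = e^{-V}\big((\nabla V)^\top D\,\nabla V - \tr(D\nabla^2 V)\big)$, where the only nontrivial algebraic identity is $(\nabla V)^\top D\,\nabla V = \tfrac12\|\sigma^\top\nabla V\|^2$, which is immediate from $D = \tfrac12\sigma\sigma^\top$.

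Substituting these expressions into~\eqref{eq:kolmogorov_pde} and cancelling the common nonvanishing factor $-e^{-V}$ collapses the linear PDE into the asserted HJB equation: the second-order term contributes both $-\tr(D\nabla^2 V)$ and the quadratic penalty $\tfrac12\|\sigma^\top\nabla V\|^2$, the transport term $\mu\cdot\nabla\cev{p}_X$ becomes $\mu\cdot\nabla V$, and the zeroth-order term $\div(\mu)\cev{p}_X$ becomes $-\div(\mu)$ after the sign flip. The terminal condition follows at once from $V(\cdot,T) = -\log\cev{p}_X(\cdot,T) = -\log p_X(\cdot,0) = -\log p_{X_0}$, using the convention $\cev{p}_X(\cdot,t)=p_X(\cdot,T-t)$.

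I do not expect a substantial obstacle here; the argument is essentially a direct computation. The two points that require a little care are (i) ensuring $\cev{p}_X$ is strictly positive and twice continuously differentiable, so that $V$ and all its derivatives are meaningful -- this should follow from the regularity and non-degeneracy assumptions on the coefficients referenced in~\Cref{sec:notation} together with standard smoothness results for SDE transition densities -- and (ii) keeping track of signs when dividing through by $-e^{-V}$. The most error-prone step is the expansion of $\nabla^2 e^{-V}$, but it is entirely routine.
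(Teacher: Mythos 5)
Your proof is correct and follows essentially the same route as the paper: applying the Hopf--Cole transformation to the Kolmogorov backward equation~\eqref{eq:kolmogorov_pde}. The paper packages the same computation as an abstract lemma (\Cref{lem: Linearization of HJB equation} with $b\coloneqq-\mu$, $h\coloneqq\div(\mu)$) and defers the algebra to external references, whereas you carry out the substitution $\cev{p}_X=e^{-V}$ directly, which is a fully self-contained version of the identical argument.
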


For approaches to directly solve Kolmogorov backward or HJB equations via deep learning in rather high dimensions, we refer to, e.g.,~\cite{richter2021solving,berner2020solving,zhou2021actor,nusken2021interpolating,nusken2021solving,richter2022robust}, see also \Cref{app: further details on HJB,app: PDE-based methods for sampling}.
In the following, we leverage the HJB equation and tools from stochastic control theory to derive a suitable objective for \Cref{prob: control}.

\subsection{Optimal control perspective: ELBO derivation}
\label{sec:control}
We will derive the ELBO for our generative model \eqref{eq:sde_generate} using the following fundamental result from control theory, which shows that the solution to an HJB equation, such as the one stated in~\Cref{lem: HJB for score}, is related to an optimal control problem, see \cite{dai1991stochastic,pavon1989stochastic,nusken2021solving,fleming2006controlled,pham2009continuous} and \Cref{app:verification}. For a general introduction to stochastic optimal control theory we refer to \Cref{app: intro optimal control}.
\begin{theorem}[Verification theorem]
\label{thm:opt_control}
    Let $V$ be a solution to the HJB equation in~\Cref{lem: HJB for score}. Further, let $\mathcal{U}\subset C^1(\R^d \times [0,T], \R^d)$ be a suitable set of admissible controls and for every control $u\in\mathcal{U}$ let $Y^u$ be the solution to the controlled SDE\footnote{As usually done, we assume that the initial condition $Y_0^u$ of a solution $Y^u$ to a controlled SDE does not depend on the control $u$.}
    \begin{equation}
    \label{eq: controlled Y}
         \mathrm{d} Y^u_s = \left( \sigma u- \mu\right)(Y^u_s,s)\,\mathrm{d}s + \sigma(s) \,\mathrm{d} B_s.
    \end{equation}
    Then it holds almost surely that
    \begin{align}
    \label{eq: control costs}
        V(Y_0^u,  0) = \min_{u\in \mathcal{U}} \E \left[ \mathcal{R}^u_\mu(Y^u)
        -\log p_{X_0}(Y_T^u) \big| Y_0^u   \right],
    \end{align}
    where $-\log p_{X_0}(Y_T^u) $ constitutes the terminal costs, and the running costs are defined as
    \begin{equation}
    \label{eq:running}
        \mathcal{R}^u_\mu(Y^u) \coloneqq \int_0^T \left(\div (\mu) + \frac{1}{2} \| u \|^2\right) (Y_s^u,s)\,\mathrm{d}s.
    \end{equation}
    Moreover, the unique minimum is attained by $u^* \coloneqq -\sigma^\top \nabla V$.
\end{theorem}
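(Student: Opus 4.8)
The plan is to run the classical \emph{verification argument} from stochastic control. Fix an admissible control $u\in\mathcal{U}$ and consider the real-valued process $s\mapsto V(Y^u_s,s)$ along the controlled trajectory. Since $\cev{p}_X$ is smooth and strictly positive under the standing regularity assumptions on the coefficients, $V=-\log\cev{p}_X$ is $C^{1,2}$, so It\^o's formula applies; using that the diffusion coefficient of $Y^u$ is $\sigma$ and that $D=\tfrac12\sigma\sigma^\top$, it yields
\begin{align*}
V(Y^u_T,T)-V(Y^u_0,0)&=\int_0^T\big(\partial_t V+(\sigma u-\mu)\cdot\nabla V+\tr(D\nabla^2 V)\big)(Y^u_s,s)\,\mathrm{d}s\\
&\quad+\int_0^T(\sigma^\top\nabla V)(Y^u_s,s)\cdot\mathrm{d}B_s.
\end{align*}

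Next I would substitute $\partial_t V$ from the HJB equation of \Cref{lem: HJB for score}. The terms $\tr(D\nabla^2 V)$ and $\mu\cdot\nabla V$ cancel, and writing $\sigma u\cdot\nabla V=u\cdot\sigma^\top\nabla V$ and completing the square gives
\begin{align*}
\partial_t V+(\sigma u-\mu)\cdot\nabla V+\tr(D\nabla^2 V)&=-\div(\mu)+\tfrac12\big\|\sigma^\top\nabla V\big\|^2+u\cdot\sigma^\top\nabla V\\
&=-\div(\mu)-\tfrac12\|u\|^2+\tfrac12\big\|u+\sigma^\top\nabla V\big\|^2.
\end{align*}
Taking $\E[\,\cdot\,|\,Y^u_0]$ on both sides of the It\^o identity (so that the stochastic integral drops out), and using the terminal condition $V(\cdot,T)=-\log p_{X_0}$ together with the definitions of $\mathcal{R}^u_\mu$ and of the running costs, one arrives at the key identity, valid for every $u\in\mathcal{U}$:
\begin{equation*}
V(Y^u_0,0)=\E\!\left[\mathcal{R}^u_\mu(Y^u)-\log p_{X_0}(Y^u_T)\,\big|\,Y^u_0\right]-\tfrac12\,\E\!\left[\int_0^T\big\|(u+\sigma^\top\nabla V)(Y^u_s,s)\big\|^2\,\mathrm{d}s\,\big|\,Y^u_0\right].
\end{equation*}

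The theorem then follows quickly. The subtracted term is non-negative, so $V(Y^u_0,0)\le\E[\mathcal{R}^u_\mu(Y^u)-\log p_{X_0}(Y^u_T)\,|\,Y^u_0]$ for all $u\in\mathcal{U}$; since $Y^u_0$ is assumed not to depend on $u$, the left-hand side is the same quantity for every control, and the choice $u^\ast=-\sigma^\top\nabla V$ (which lies in $\mathcal{U}$ thanks to $V\in C^{1,2}$) makes the residual vanish, so $u^\ast$ attains the minimum and the minimal value is $V(Y^u_0,0)$. Uniqueness follows from the same identity: any minimizer $u$ must satisfy $\E[\int_0^T\|(u+\sigma^\top\nabla V)(Y^u_s,s)\|^2\,\mathrm{d}s\,|\,Y^u_0]=0$, hence, by continuity of $u$ and $\nabla V$, $u=-\sigma^\top\nabla V$ on the region visited by the trajectories of $Y^u$, i.e.\ $u=u^\ast$ as an admissible control.

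The one genuinely delicate step is the vanishing of the conditional expectation of $\int_0^T(\sigma^\top\nabla V)(Y^u_s,s)\cdot\mathrm{d}B_s$, i.e.\ that this is a true martingale rather than merely a local one, and likewise that the running and terminal costs are integrable. This is precisely what requiring $\mathcal{U}$ to be a \emph{suitable} set of admissible controls is meant to encode: enough integrability (a finite-energy condition on $u$, polynomial growth control on $\nabla V$, and moment bounds on $Y^u$) to guarantee $\E\int_0^T\|(\sigma^\top\nabla V)(Y^u_s,s)\|^2\,\mathrm{d}s<\infty$. I would either build this directly into the definition of $\mathcal{U}$ or recover it via a localization/stopping-time argument combined with monotone or dominated convergence. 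Everything else — the It\^o expansion, the HJB substitution, the completion of the square, and the final rearrangement — is routine.
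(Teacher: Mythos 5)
Your proposal is correct and reproduces the paper's own argument almost verbatim: the paper proves a slightly more general verification theorem (\Cref{thm:verification}, with arbitrary start time $t$ and generic $b$, $h$, $g$) and specializes it, but the core steps — It\^o's lemma applied to $V$ along the controlled trajectory, substitution of $\partial_t V$ from the HJB equation, completion of the square $\frac{1}{2}\|\sigma^\top\nabla V+u\|^2=\frac{1}{2}\|\sigma^\top\nabla V\|^2+(\sigma^\top\nabla V)\cdot u+\frac{1}{2}\|u\|^2$, and the vanishing conditional expectation of the stochastic integral — are exactly the same. Your closing remarks on the integrability conditions required for the local martingale to be a true martingale match the paper's informal appeal to ``mild regularity assumptions'' and to $\mathcal{U}$ being a suitable class of admissible controls.
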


Plugging in the definition of $V$ from \Cref{lem: HJB for score}, this readily yields the following ELBO of our generative model in~\eqref{eq:sde_generate}. The variational gap can be found in \Cref{prop: optimal path space measure} and \Cref{rem:gap}.
\begin{corollary}[Evidence lower bound]
\label{cor:elbo}
For every $u\in\mathcal{U}$ it holds almost surely that
\begin{align}
    \label{eq: ELBO}
    \log p_{X_T}(Y_0^u) \ge \E\left[\log p_{X_0}(Y_T^u) - \mathcal{R}^u_\mu(Y^u) \big\vert Y_0^u
    \right],
\end{align}
where equality is obtained for $u^* \coloneqq \sigma^\top \nabla \log \cev{p}_{X}$.
\end{corollary}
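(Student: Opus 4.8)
The plan is to obtain \Cref{cor:elbo} as an immediate corollary of the verification theorem (\Cref{thm:opt_control}), with the only real work being the substitution of $V = -\log\cev{p}_X$ from \Cref{lem: HJB for score} and some bookkeeping with the time reversal at the boundary times.

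First I would invoke \Cref{thm:opt_control}: since $V$ solves the HJB equation of \Cref{lem: HJB for score}, the identity~\eqref{eq: control costs} says that for every admissible control $u\in\mathcal{U}$ we have $V(Y_0^u,0) \le \E[\mathcal{R}^u_\mu(Y^u) - \log p_{X_0}(Y_T^u)\mid Y_0^u]$ almost surely, with equality exactly for the minimizer $u^\ast = -\sigma^\top\nabla V$. Next I would rewrite the left-hand side: by definition $V(\cdot,0) = -\log\cev{p}_X(\cdot,0)$ and, since $\cev{p}_X(\cdot,t) = p_X(\cdot,T-t)$, we get $\cev{p}_X(\cdot,0) = p_X(\cdot,T) = p_{X_T}$, hence $V(\cdot,0) = -\log p_{X_T}$. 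Plugging this in and multiplying through by $-1$ turns the inequality into exactly~\eqref{eq: ELBO}. For the equality claim, I would just note that $u^\ast = -\sigma^\top\nabla V = -\sigma^\top\nabla(-\log\cev{p}_X) = \sigma^\top\nabla\log\cev{p}_X$, which by \Cref{thm:opt_control} attains the minimum in~\eqref{eq: control costs}; equivalently, by the identity $p_X = \cev{p}_Y$ from \Cref{rem:time_rev}, this is the score $\sigma^\top\nabla\log p_Y$ of the inference process.

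I do not expect a serious obstacle, since the corollary is essentially a restatement of \Cref{thm:opt_control}. The two points that need care are: (i) the time-reversal bookkeeping, i.e.\ correctly matching the terminal condition $V(\cdot,T) = -\log p_{X_0}$ of the HJB equation with the value $V(\cdot,0) = -\log p_{X_T}$ that appears in the ELBO, since a sign or a $T-t$ slip here would corrupt the statement; and (ii) verifying that $u^\ast = \sigma^\top\nabla\log\cev{p}_X$ is admissible, i.e.\ lies in $\mathcal{U}$, which is where the regularity assumptions hidden in the phrase ``suitable set of admissible controls'' and in the coefficient functions (ensuring $\nabla\log\cev{p}_X$ exists and is $C^1$) come in. Once these identifications are made, the corollary follows with no further computation.
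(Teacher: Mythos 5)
Your proposal is correct and is essentially identical to the paper's own derivation: the paper also obtains \Cref{cor:elbo} by plugging $V=-\log\cev{p}_X$ from \Cref{lem: HJB for score} into the verification theorem \Cref{thm:opt_control}, using $V(\cdot,0)=-\log\cev{p}_X(\cdot,0)=-\log p_{X_T}$ and $u^*=-\sigma^\top\nabla V=\sigma^\top\nabla\log\cev{p}_X$. Your two cautionary remarks (the $T-t$ bookkeeping at the boundary and the implicit admissibility of $u^*$) are exactly the right things to watch, and the paper handles them the same way, by assumption on $\mathcal{U}$ and the regularity of the densities.
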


Comparing \eqref{eq: control costs} and \eqref{eq: ELBO}, we see that the ELBO equals the negative control costs. With the initial condition $Y_0^u \sim \mathcal{D}$, it represents a lower bound on the negative log-likelihood of our generative model. In practice, one can now parametrize $u$ with, for instance, a neural network and rely on gradient-based optimization to maximize the ELBO using samples from $\mathcal{D}$. The optimality condition in~\Cref{cor:elbo} guarantees that $\cev{p}_X = p_{Y^{u^*}}$ almost everywhere if we ensure that $X_0 \sim Y^{u^*}_T$, see~\Cref{app:reverse_sde}. In particular, this implies that $X_T \sim \mathcal{D}$, i.e., our generative model solves \Cref{prob: control}.
However, we still face the problem of sampling $X_0 \sim Y^{u^*}_T$ since the distribution of $Y^{u^*}_T$ depends\footnote{In case one has access to samples from the data distribution $\mathcal{D}$, one could use these as initial data $Y_0^{u^*}$ in order to simulate $X_0\sim Y^{u^*}_T$. In doing so, however, one cannot expect to recover the entire distribution $\mathcal{D}$, but only the empirical distribution of the samples.}
on the initial distribution $\mathcal{D}$. In \Cref{sec:diff,sec: sampling from unnormalized densities} we will demonstrate ways to circumvent this problem.

\subsection{Path space perspective: KL divergence in continuous time}
\label{sec:path}

In this section, we show that the variational gap corresponding to \Cref{cor:elbo} can be interpreted as a KL divergence between measures on the space of continuous trajectories, also known as \emph{path space} \citep{ustunel2013transformation}. Later, we can use this result to develop our sampling method. 
To this end, let us define the path space measure $\P_{Y^u}$ as the distribution of the trajectories associated with the controlled process $Y^u$ as defined in \eqref{eq: controlled Y}, see also~\Cref{sec:notation}. Consequently, we denote by $\P_{Y^0}$ the path space measure associated with the uncontrolled process $Y^0$ with the choice $u \equiv 0$. 

We can now state a path space measure perspective on \Cref{prob: control} by identifying a formula for the target measure $\mathbbm{P}_{Y^{u^*}}$, which corresponds to the process in \eqref{eq: controlled Y} with the optimal control $u=u^* = \sigma^\top \nabla \log \cev{p}_X$. We further show that, with the correct initial condition $Y^u_0 \sim X_T$, this target measure corresponds to the measure of the time-reversed process, i.e., $\mathbbm{P}_{Y^{u^*}} = \mathbbm{P}_{\cev{X}}$, see~\Cref{rem:time_rev} and~\Cref{app:reverse_sde}. For the proof and further details, see \Cref{prop: optimal path space measure app} in the appendix.

\begin{proposition}[Optimal path space measure]
\label{prop: optimal path space measure}
The optimal path space measure can be defined via the work functional $\mathcal{W}\colon C([0,T],\R^d) \to \R$ and the Radon-Nikodym derivative
\begin{equation}
\label{eq: optimal change of measure}
    \frac{\mathrm{d} \mathbbm{P}_{Y^{u^*}}}{\mathrm{d}\P_{Y^0}} = \exp\left({-\mathcal{W}}\right) \quad \text{with} \quad \mathcal{W}(Y^0) \coloneqq \mathcal{R}^0_\mu(Y^0) -\log \frac{p_{X_0}\left(Y^0_T\right)}{p_{X_T}\left(Y^0_0\right)},
\end{equation}
where $\mathcal{R}^0_\mu(Y^0)$ is as in~\eqref{eq:running} with $u=0$.
Moreover, for any $u \in \mathcal{U}$, the expected variational gap
\begin{equation}
    G(u) \coloneqq \E\left[\log p_{X_T}(Y_0^u)\right] - \E\left[ \log p_{X_0}(Y_T^u) -  \mathcal{R}^u_\mu(Y^u)
    \right]
\end{equation}
of the ELBO in~\Cref{cor:elbo} satisfies that
\begin{align}
\label{eq: KL divergence to reversed}
    G(u) &= D_{\operatorname{KL}}(\P_{Y^{u}} | \P_{Y^{u^*}})  =D_{\operatorname{KL}}(\P_{Y^{u}} | \P_{\cev{X}}) - D_{\operatorname{KL}}(\P_{Y^{u}_0} | \P_{X_T}).
\end{align}
\end{proposition}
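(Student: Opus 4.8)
The plan is to establish the Radon–Nikodym derivative \eqref{eq: optimal change of measure} first, and then to leverage it together with Girsanov's theorem to compute the KL divergences in \eqref{eq: KL divergence to reversed}. For the first part, I would start from the verification theorem (\Cref{thm:opt_control}): along trajectories of $Y^0$, Itô's formula applied to $V(Y^0_s,s) = -\log\cev{p}_X(Y^0_s,s)$ together with the HJB equation from \Cref{lem: HJB for score} produces, after the drift terms cancel against the PDE, an identity expressing $\log\frac{p_{X_T}(Y^0_0)}{p_{X_0}(Y^0_T)}$ as $-\mathcal{R}^0_\mu(Y^0)$ plus a stochastic integral. Exponentiating this stochastic integral yields precisely a Girsanov density that turns $\P_{Y^0}$ into the law of the SDE \eqref{eq: controlled Y} with control $u^* = -\sigma^\top\nabla V = \sigma^\top\nabla\log\cev{p}_X$; identifying the exponent with $-\mathcal{W}(Y^0)$ and matching the work functional \eqref{eq: optimal change of measure} is then bookkeeping. (The detailed version of this argument is deferred to \Cref{prop: optimal path space measure app}; here I would cite it.)

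For the second part, I would compute $D_{\operatorname{KL}}(\P_{Y^u}\,|\,\P_{Y^{u^*}})$ by writing the Radon–Nikodym derivative as a chain through $\P_{Y^0}$:
\begin{equation*}
\frac{\mathrm{d}\P_{Y^u}}{\mathrm{d}\P_{Y^{u^*}}} = \frac{\mathrm{d}\P_{Y^u}}{\mathrm{d}\P_{Y^0}}\cdot\left(\frac{\mathrm{d}\P_{Y^{u^*}}}{\mathrm{d}\P_{Y^0}}\right)^{-1} = \frac{\mathrm{d}\P_{Y^u}}{\mathrm{d}\P_{Y^0}}\exp(\mathcal{W}),
\end{equation*}
where $\frac{\mathrm{d}\P_{Y^u}}{\mathrm{d}\P_{Y^0}}$ is the Girsanov density for adding the drift $\sigma u$ to $Y^0$. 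Taking $\log$ and then $\E$ under $\P_{Y^u}$, the Girsanov log-density contributes $\E[\int_0^T \frac12\|u\|^2(Y^u_s,s)\,\mathrm{d}s]$ (the stochastic-integral term is a martingale and vanishes in expectation under $\P_{Y^u}$), while $\E[\mathcal{W}(Y^u)] = \E[\mathcal{R}^0_\mu(Y^u)] - \E[\log p_{X_0}(Y^u_T)] + \E[\log p_{X_T}(Y^u_0)]$. Since $\mathcal{R}^u_\mu(Y^u) = \mathcal{R}^0_\mu(Y^u) + \E[\int_0^T\frac12\|u\|^2\,\mathrm{d}s]$ by the definition \eqref{eq:running}, summing the two pieces reproduces exactly $\E[\log p_{X_T}(Y^u_0)] - \E[\log p_{X_0}(Y^u_T) - \mathcal{R}^u_\mu(Y^u)] = G(u)$, giving the first equality. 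The second equality then follows from the chain rule for KL divergence on path space: disintegrating both $\P_{Y^u}$ and $\P_{Y^{u^*}}=\P_{\cev X}$ over their time-$0$ marginals, $D_{\operatorname{KL}}(\P_{Y^u}\,|\,\P_{\cev X}) = D_{\operatorname{KL}}(\P_{Y^u_0}\,|\,\P_{X_T}) + \E_{Y^u_0}[D_{\operatorname{KL}}(\P_{Y^u}(\cdot|Y^u_0)\,|\,\P_{\cev X}(\cdot|X_T))]$, and since the conditional laws given the starting point depend only on the drift and diffusion coefficients (not on the initial distribution), the conditional KL term equals $D_{\operatorname{KL}}(\P_{Y^u}\,|\,\P_{Y^{u^*}})$; rearranging gives \eqref{eq: KL divergence to reversed}.

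The main obstacle is the careful justification of the Girsanov change of measure and the vanishing of the stochastic-integral term in expectation: one needs the relevant Novikov-type integrability conditions to hold for the admissible controls in $\mathcal{U}$ and for $u^*$, and one must be attentive to which measure ($\P_{Y^0}$, $\P_{Y^u}$, or $\P_{Y^{u^*}}$) each expectation is taken under when invoking the martingale property — under $\P_{Y^u}$ the process $B$ is no longer the driving Brownian motion of $Y^0$, so the stochastic integrals must be re-expressed via the $\P_{Y^u}$-Brownian motion before claiming zero mean. The decomposition of path-space KL into a marginal term plus an averaged conditional term also requires that the processes genuinely share the same conditional dynamics given the initial point, which is exactly the content of the reverse-time SDE identification in \Cref{rem:time_rev}; I would invoke \Cref{app:reverse_sde} for this. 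All remaining manipulations are routine, so I would keep them terse and refer to \Cref{prop: optimal path space measure app} for the full details.
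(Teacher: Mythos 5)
Your proposal is correct, and the second half (chaining Radon--Nikodym derivatives through $\P_{Y^0}$, applying Girsanov, taking expectations under $\P_{Y^u}$, then disintegrating to pass from $\P_{Y^{u^*}}$ to $\P_{\cev X}$) coincides with the paper's own proof of \Cref{prop: optimal path space measure app} in all but notation. The genuine difference lies in how you establish the identity $\mathbbm{P}_{Y^{u^*}} = \mathbbm{Q}$, where $\mathbbm{Q}$ is the measure defined by $\tfrac{\mathrm{d}\mathbbm{Q}}{\mathrm{d}\P_{Y^0}} = \exp(-\mathcal{W})$. You propose to run It\^o's formula on $V(Y^0_s,s)$, cancel the linear terms against the HJB equation from \Cref{lem: HJB for score}, and thereby rewrite $-\mathcal{W}(Y^0)$ as exactly the Girsanov log-density $-\tfrac12\int\|\sigma^\top\nabla V\|^2\,\mathrm{d}s - \int \sigma^\top\nabla V \cdot\mathrm{d}B_s$ for the drift perturbation $\sigma u^*$ with $u^*=-\sigma^\top\nabla V$; this is a direct, constructive identification. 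The paper instead first computes $D_{\operatorname{KL}}(\P_{Y^u}\,|\,\mathbbm{Q})$ for a generic $u$, observes that this quantity equals the control cost from \Cref{thm:opt_control} up to the constant $\E[-V(Y^u_0,0)]$, and then invokes the verification theorem to deduce $D_{\operatorname{KL}}(\P_{Y^{u^*}}\,|\,\mathbbm{Q})=0$, hence $\mathbbm{Q}=\P_{Y^{u^*}}$. Your route is more self-contained (it does not lean on \Cref{thm:opt_control} for the identification, only on It\^o and Girsanov) while the paper's is more economical in exposition because it reuses already-proved machinery. One small caveat in your sketch: after cancelling with the HJB equation, the bounded-variation part of the It\^o expansion is $-\div(\mu) + \tfrac12\|\sigma^\top\nabla V\|^2$, not merely $-\div(\mu)$; it is only after moving the quadratic term into the exponent of the Girsanov density that the match with $-\mathcal{W}$ becomes exact, so ``$-\mathcal{R}^0_\mu$ plus a stochastic integral'' should read ``$-\mathcal{R}^0_\mu$ minus the quadratic-variation correction plus a stochastic integral.'' This is a presentational slip, not a gap.
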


Note that the optimal change of measure \eqref{eq: optimal change of measure} can be seen as a version of \emph{Doob's $h$-transform}, see~\cite{pra1990markov}. Furthermore, it can be interpreted as Bayes' rule for conditional probabilities, with the target $\mathbbm{P}_{Y^{u^*}}$ denoting the posterior, $\P_{Y^0}$ the prior measure, and $\exp\left({-\mathcal{W}}\right)$ being the likelihood. Formula~\eqref{eq: KL divergence to reversed} emphasizes again that we can solve~\Cref{prob: control} by approximating the optimal control $u^*$ and having matching initial conditions, see also~\Cref{rem:time_rev}. The next section provides an objective for this goal that can be used in practice.

\subsection{Connection to denoising score matching objective}
\label{sec:diff}
This section outlines that, under a reparametrization of the generative model in~\eqref{eq:sde_generate}, the ELBO in \Cref{cor:elbo} corresponds to the objective typically used for the training of continuous-time diffusion models. We note that the ELBO in \Cref{cor:elbo} in fact equals the one derived in \citet[Theorem 3]{huang2021variational}. Following the arguments therein and motivated by \Cref{rem:time_rev}, we can now use the reparametrization
$\mu \coloneqq \sigma u - f$ to arrive at an uncontrolled \emph{inference SDE} and a controlled \emph{generative SDE}
\begin{equation}
\label{eq:SDE_reparametrization}
\mathrm{d}Y_s = f(Y_s, s) \,\mathrm{d}s + \sigma(s) \,\mathrm{d} B_s, \quad Y_0\sim \mathcal{D},
\quad \text{and} \quad \mathrm{d}X^u_s = \big(\cev{\sigma} \cev{u} - \cev{f}\big)(X^u_s, s) \,\mathrm{d}s + \cev{\sigma}(s) \,\mathrm{d} B_s.
\end{equation}
In practice, the coefficients $f$ and $\sigma$ are usually\footnote{For coefficients typically used in practice (leading, for instance, to continuous-time analogs of SMLD and DDPM) we refer to~\cite{song2020score} and~\Cref{app:implementation}.} constructed such that $Y$ is an Ornstein-Uhlenbeck (OU) process with $Y_T$ being approximately distributed according to a standard normal distribution, see also~\eqref{eq:approx_terminal} in~\Cref{app:implementation}. This is why the process $Y$ is said to \textit{diffuse} the data. Setting $X^u_0\sim \mathcal{N}(0,\mathrm{I})$ thus satisfies that $p_{X^u_0} \approx p_{Y_T}$ and allows to easily sample $X^u_0$. 

The corresponding ELBO in~\Cref{cor:elbo} now takes the form
\begin{equation}
\label{eq:elbo_reparam}
    \log p_{X^u_T}(Y_0) \ge  \E\left[\log p_{X^u_0}(Y_T) - \mathcal{R}^u_{\sigma u - f}(Y) \big\vert Y_0
    \right],
\end{equation}
where, in analogy to \eqref{eq: KL divergence to reversed}, the expected variational gap is given by the forward KL divergence
$G(u) = D_{\operatorname{KL}}(\P_{Y} | \P_{\cev{X}^u}) - D_{\operatorname{KL}}(\P_{Y_0} | \P_{X_T^u})$,
see \Cref{prop: generative modeling as forward KL}. We note that the process $Y$ in the ELBO does not depend on the control $u$ anymore. Under suitable assumptions, this allows us to rewrite the expected negative ELBO (up to a constant not depending on $u$) as a denoising score matching objective~\citep{vincent2011connection}, i.e.,
\begin{align}
\label{eq: denoising score matching objective}
    \mathcal{L}_{\mathrm{DSM}}(u) \coloneqq \frac{T}{2} \E\left[ 
    \left\| u(Y_\tau,\tau) -  \sigma^\top(\tau)\nabla \log p_{Y_\tau|Y_0}(Y_\tau|Y_0) \right\|^2 \right],
\end{align}
where $\tau \sim \mathcal{U}([0,T])$ and $p_{Y_\tau|Y_0}$ denotes the conditional density of $Y_\tau$ given $Y_0$, see~\Cref{app:denoise}. We emphasize that the conditional density can be explicitly computed for the OU process, see also~\eqref{eq:ou_transition} in \Cref{app:implementation}. Due to its simplicity, variants of the objective \eqref{eq: denoising score matching objective} are typically used in implementations. Note, however, that the setting in this section requires that one has access to samples of $\mathcal{D}$ in order to simulate the process $Y$. In the next section, we consider a different scenario, where instead we only have access to the (unnormalized) density of $\mathcal{D}$.

\section{Sampling from unnormalized densities}
\label{sec: sampling from unnormalized densities}
\begin{figure*}[!t]
    \centering
    \includegraphics[width=\linewidth]{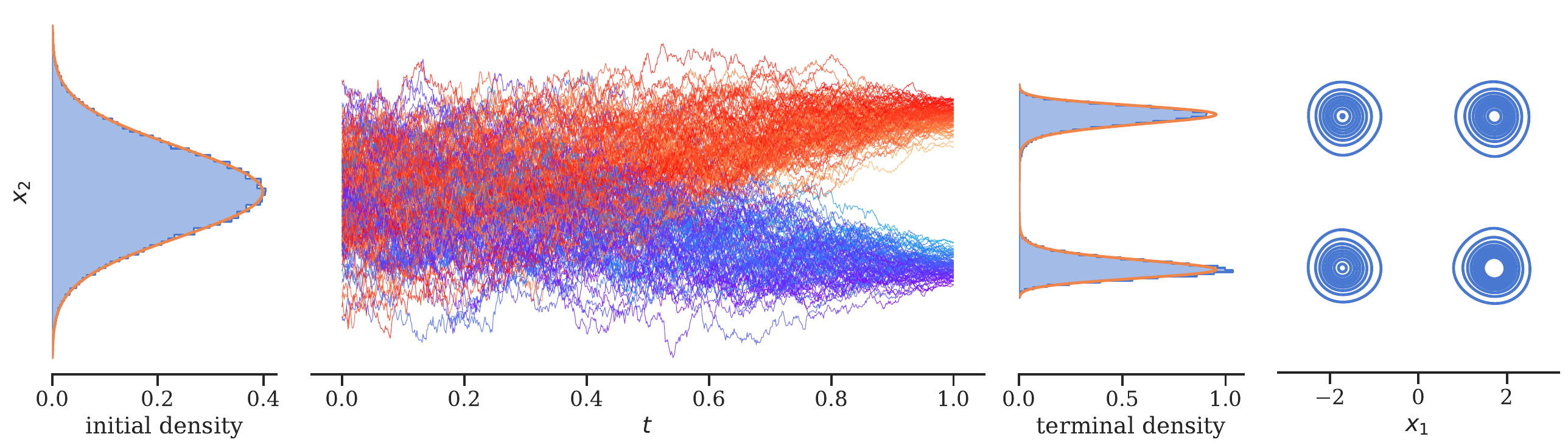}
    \vspace{-0.5em}
    \caption{Illustration of our DIS algorithm for the double well example in~\Cref{sec: numerical examples} with $d=20$, $w=5$, $\delta=3$. The process $X^u$ starts from a Gaussian (approximately distributed as $Y_T$) and the control $u$ is trained such that the distribution at terminal time $X_T^u$ approximates the target density $\rho/\mathcal{Z}$. The plot displays some trajectories as well as histograms at initial and terminal times. In the right panel, we show a KDE density estimation of a $2d$ marginal of the corresponding double well.}
    \label{fig: evolution dw}
\end{figure*}

In many practical settings, for instance, in Bayesian statistics or computational physics, the data distribution $\mathcal{D}$ admits the density $\rho / \mathcal{Z}$, where $\rho$ is known, but computing the normalizing constant $\mathcal{Z}\coloneqq\int_{\R^d} \rho(x) \,\mathrm{d}x$ is intractable and samples from $\mathcal{D}$ are not easily available. In this section, we propose a novel method based on diffusion models, called \emph{time-reversed diffusion sampler} (DIS), which allows to sample from $\mathcal{D}$, see~\Cref{fig: evolution dw}. To this end, we interchange the roles of $X$ and $Y^u$ in our derivation in~\Cref{sec: different perspectives}, i.e., consider
\begin{equation}
\label{eq:sdes_interchanged}
    \mathrm{d} Y_s = f(Y_s,s)\,\mathrm{d}s + \sigma(s) \,\mathrm{d} B_s, \quad Y_0 \sim \mathcal{D}, \quad \text{and} \quad     \mathrm{d} X^u_s = \big( \cev{\sigma} \cev{u}- \cev{f}\big)(X^u_s,s)\,\mathrm{d}s + \cev{\sigma}(s) \,\mathrm{d} B_s,
 \end{equation}   
where we also renamed $\mu$ to $f$ to stay consistent with~\eqref{eq:SDE_reparametrization}. 
Analogously to \Cref{thm:opt_control}, the following corollary specifies the control objective, see \Cref{cor: control objective diffusion model sampling app} in the appendix for details and the proof.
\begin{corollary}[Reverse KL divergence]
\label{cor: control objective diffusion model sampling}
Let $X^u$ and $Y$ be defined by \eqref{eq:sdes_interchanged}. Then it holds
\begin{subequations}
\begin{align}
\label{eq:reverse_kl}
     D_{\operatorname{KL}}(\P_{X^{u}} | \P_{\cev{Y}}) &= 
     D_{\operatorname{KL}}(\P_{X^{u}} | \P_{X^{u^*}})+ D_{\operatorname{KL}}(\P_{X^{u}_0} | \P_{Y_T}) \\
     &=\E\left[\mathcal{R}^{\cev{u}}_{\cev{f}}(X^u)  + \log\frac{p_{Y_T}(X^u_0)}{\rho(X_T^u)} \right] +  \log \mathcal{Z} + D_{\operatorname{KL}}(\P_{X^{u}_0} | \P_{Y_T}) \\
    &=\mathcal{L}_{\mathrm{DIS}}(u) + \log \mathcal{Z}, 
\end{align}
\end{subequations}
where
\begin{equation}
\label{eq:dis control cost}
    \mathcal{L}_{\mathrm{DIS}}(u) \coloneqq \E\left[\mathcal{R}^{\cev{u}}_{\cev{f}}(X^u)
    + \log\frac{p_{X^u_0}(X^u_0)}{\rho(X_T^u)} \right].
\end{equation}
In particular, this implies that
\begin{equation}
\label{eq: cost for unnormalized densities}
     D_{\operatorname{KL}}(\P_{X^{u}_0} | \P_{Y_T}) -\log \mathcal{Z}  = \min_{u\in\mathcal{U}} \, \mathcal{L}_{\mathrm{DIS}}(u)
\end{equation}
and, assuming that $X^u_0 \sim Y_T$, the minimizing control $u^*\coloneqq \sigma^\top \nabla \log  p_{Y} $ guarantees that $X^{u^*}_T \sim \mathcal{D}$.
\end{corollary}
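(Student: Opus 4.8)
\emph{Strategy.}
The pair~\eqref{eq:sdes_interchanged} is precisely the reparametrized pair~\eqref{eq:SDE_reparametrization}, now read with $Y$ playing the role of the uncontrolled process and $X^u$ that of the controlled (generative) process. The plan is to obtain the statement as the ``mirror image'' of \Cref{thm:opt_control} and \Cref{prop: optimal path space measure}: apply those results after the substitution $\sigma \mapsto \cev{\sigma}$, $\mu \mapsto \cev{f}$, $u \mapsto \cev{u}$, which turns their controlled process $Y^u$ into $X^u$ and their generative process $X$ into $Y$ (so that the latter's initial law becomes $\mathcal{D}$ with density $\rho/\mathcal{Z}$, the value function at time $0$ becomes $-\log p_{Y_T}$, and the optimal control becomes $u^* = \sigma^\top \nabla \log p_{Y}$, matching the claimed formula). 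The analytic content then comes entirely from \Cref{lem: HJB for score,thm:opt_control,prop: optimal path space measure}; what remains is bookkeeping with time-reversals, one disintegration of a path-space KL divergence, and an algebraic rearrangement.

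\emph{First equality.}
By \Cref{rem:time_rev} and the reverse-time SDE result of \Cref{app:reverse_sde} applied to the uncontrolled $Y$, the time-reversal $\cev{Y}$ solves the generative SDE in~\eqref{eq:sdes_interchanged} with control $u^* = \sigma^\top\nabla\log p_Y$ and initial law $\cev{Y}_0 = Y_T$; in particular $\P_{\cev{Y}}$ and $\P_{X^{u^*}}$ share the same transition kernels and differ only in their initial marginal. Disintegrating the path-space KL divergence at time $s=0$ (chain rule for relative entropy), $D_{\operatorname{KL}}(\P_{X^u} | \P_{\cev{Y}}) = D_{\operatorname{KL}}(\P_{X^u_0} | \P_{Y_T}) + \E\big[D_{\operatorname{KL}}(\P_{X^u \mid X^u_0} | \P_{\cev{Y} \mid \cev{Y}_0})\big]$, and since the conditional laws use the same kernels as $X^{u^*}$, the conditional term equals $D_{\operatorname{KL}}(\P_{X^u} | \P_{X^{u^*}})$ once $X^{u^*}$ is given the initial law of $X^u$. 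This is the first line of~\eqref{eq:reverse_kl}.

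\emph{Second and third equalities.}
For the second line I would evaluate $D_{\operatorname{KL}}(\P_{X^u} | \P_{X^{u^*}})$ via the change-of-measure (Girsanov / work-functional) identity underlying \Cref{prop: optimal path space measure}, transported through the substitution above: the running cost $\mathcal{R}^u_\mu$ becomes $\mathcal{R}^{\cev{u}}_{\cev{f}}(X^u)$, the terminal cost $-\log p_{X_0}$ becomes $-\log\rho(X^u_T) + \log\mathcal{Z}$, and the value-function term becomes $\log p_{Y_T}(X^u_0)$, yielding $D_{\operatorname{KL}}(\P_{X^u} | \P_{X^{u^*}}) = \E\big[\mathcal{R}^{\cev{u}}_{\cev{f}}(X^u) + \log\tfrac{p_{Y_T}(X^u_0)}{\rho(X^u_T)}\big] + \log\mathcal{Z}$; adding the term $D_{\operatorname{KL}}(\P_{X^u_0} | \P_{Y_T})$ from the first line gives the second. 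The third line is then algebra: writing $D_{\operatorname{KL}}(\P_{X^u_0} | \P_{Y_T}) = \E\big[\log\tfrac{p_{X^u_0}(X^u_0)}{p_{Y_T}(X^u_0)}\big]$ and combining it with the logarithmic term collapses the $p_{Y_T}(X^u_0)$ factors, leaving $\E\big[\mathcal{R}^{\cev{u}}_{\cev{f}}(X^u) + \log\tfrac{p_{X^u_0}(X^u_0)}{\rho(X^u_T)}\big] + \log\mathcal{Z} = \mathcal{L}_{\mathrm{DIS}}(u) + \log\mathcal{Z}$.

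\emph{Minimization and conclusion.}
Taking $\min_{u\in\mathcal{U}}$ in~\eqref{eq:reverse_kl}: the term $D_{\operatorname{KL}}(\P_{X^u_0} | \P_{Y_T})$ is independent of $u$ (the initial law of a controlled SDE is taken not to depend on the control), while $D_{\operatorname{KL}}(\P_{X^u} | \P_{X^{u^*}}) \ge 0$ vanishes at $u = u^*$ (assuming $u^* \in \mathcal{U}$, which is where the regularity hypotheses enter); hence $\min_u \mathcal{L}_{\mathrm{DIS}}(u) + \log\mathcal{Z} = D_{\operatorname{KL}}(\P_{X^u_0} | \P_{Y_T})$, which is~\eqref{eq: cost for unnormalized densities}. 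If in addition $X^u_0 \sim Y_T$, then $D_{\operatorname{KL}}(\P_{X^u_0} | \P_{Y_T}) = 0$, so $D_{\operatorname{KL}}(\P_{X^{u^*}} | \P_{\cev{Y}}) = 0$, i.e.\ $\P_{X^{u^*}} = \P_{\cev{Y}}$, and reading off the terminal marginal gives $X^{u^*}_T \sim \cev{Y}_T = Y_0 \sim \mathcal{D}$. I expect the main obstacle to be the rigorous parts of the first step: justifying the path-space disintegration and, above all, the identity $\P_{\cev{Y}} = \P_{X^{u^*}}$ (the reverse-time SDE), which requires enough regularity of $\log p_Y$ for $\nabla\log p_Y$ to exist and be an admissible control, hence implicit smoothness and decay assumptions on $\rho$; the Girsanov computation and the algebraic rearrangements are routine once \Cref{thm:opt_control,prop: optimal path space measure} are available.
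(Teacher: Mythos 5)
Your proposal is correct and follows essentially the same route as the paper: invoke the verification theorem (\Cref{thm:opt_control}) and the optimal path-space-measure result (\Cref{prop: optimal path space measure}) with the roles of $X^u$ and $Y$ interchanged, disintegrate the path-space KL at time zero via \Cref{lem:disintegration}, then rearrange with $D_{\operatorname{KL}}(\P_{X^{u}_0} | \P_{Y_T}) = \E\big[\log (p_{X^u_0}(X^u_0)/p_{Y_T}(X_0^u))\big]$ to reach $\mathcal{L}_{\mathrm{DIS}}$. You simply make explicit the substitution $(\sigma,\mu,u)\mapsto(\cev{\sigma},\cev{f},\cev{u})$ and the resulting value-function/terminal-cost identifications where the paper only says ``proceed analogously.''
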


As in the previous chapter, $X^u_0 \sim Y_T$ can be approximately achieved by choosing $X^u_0\sim \mathcal{N}(0,\mathrm{I})$ and $f$ and $\sigma$ such that $p_{Y_T} \approx \mathcal{N}(0,\mathrm{I})$, which incurs an irreducible prior loss given by $D_{\operatorname{KL}}(\P_{X^{u}_0} | \P_{Y_T})$. In practice, one can now minimize the control objective \eqref{eq:dis control cost} using gradient-based optimization techniques. Alternatively, one can solve the corresponding HJB equation to obtain an approximation to $u^*$, see~\Cref{app: PDE-based methods for sampling}. Note that, in contrast to~\eqref{eq:elbo_reparam} or~\eqref{eq: denoising score matching objective}, we do not need access to samples $Y_0\sim\mathcal{D}$ as $Y$ does not need to be simulated for the objective in~\eqref{eq:dis control cost}. However, since the controlled process $X^u$ now appears in the objective, we need to simulate the whole trajectory of $X^u$ and cannot resort to a Monte Carlo approximation, such as in the denoising score matching objective in~\eqref{eq: denoising score matching objective}. In~\Cref{app:results}, we show that our path space perspective allows us to introduce divergences different from the reverse KL-divergence in~\eqref{eq:reverse_kl}, which allows for off-policy training and can result in improved numerical performance.

\subsection{Comparison to Schr\"odinger half-bridges}
\label{sec: pis}

The idea to rely on controlled diffusions in order to sample from prescribed target densities is closely related to Schr\"odinger bridges \citep{dai1991stochastic}. 
This problem has been particularly well studied in the case where the initial density follows a Dirac distribution, i.e., the stochastic process starts at a pre-specified point---often referred to as Schr\"odinger half-bridge. Corresponding numerical algorithms based on deep learning, referred to as \emph{Path Integral Sampler} (PIS) in~\citet{zhang2021path}, have been independently presented in~\cite{richter2021solving,zhang2021path,vargas2023bayesian}.
Combining ideas from \cite{dai1991stochastic} and \cite{fleming2006controlled}, it can be shown that a corresponding control problem can be formulated as
\begin{align}
\label{eq: Schroedinger control costs}
    -\log\mathcal{Z} =  \min_{u\in\mathcal{U}} \, \mathcal{L}_{\mathrm{PIS}}(u), \quad \text{with}  \quad  \mathcal{L}_{\mathrm{PIS}}(u) \coloneqq \E\left[ \mathcal{R}^{\cev{u}}_0(X^u) + \log \frac{p_{X^0_T}(X_T^u)}{\rho(X_T^u)} \right],
\end{align}
where the controlled diffusion $X^u$ is defined as in~\eqref{eq:sdes_interchanged}
with a fixed initial value $X^u_0=x_0\in \R^d$. In the above, $p_{X^0_T}$ denotes the density of the uncontrolled process $X^0_T$ at time $T$. Equivalently, one can show that $\P_{X^{u^*}}$ satisfies the optimal change of measure, given by
\begin{equation}
        \frac{\mathrm{d} \P_{X^{u^*}}}{\mathrm{d} \P_{X^0}}(X) = \frac{\rho}{\mathcal{Z}p_{X_T^0}}(X_T),
\end{equation}
for all suitable stochastic processes $X$ on the path space. Using the Girsanov theorem (\Cref{thm:girsanov}), we see that
\begin{equation}
    D_{\operatorname{KL}}(\P_{X^u} | \P_{X^{u^*}}) = \mathcal{L}_{\mathrm{PIS}}(u) + \log \mathcal{Z},
\end{equation}
see also~\cite{zhang2021path,nusken2021solving}.
Similar to \Cref{cor: control objective diffusion model sampling}, one can thus show that the optimally controlled process satisfies $X_T^{u^*} \sim \mathcal{D}$, see also \cite{tzen2019theoretical} and \cite{pavon2022local}. Comparing the two attempts, i.e., the objectives \eqref{eq:dis control cost} and \eqref{eq: Schroedinger control costs}, we can identify multiple differences:
\begin{itemize}[leftmargin=*]
    \item Different initial distributions, running costs, and terminal costs are considered. In the Schr\"odinger half-bridge, the terminal costs consist of $\rho$ and $p_{X^0_T}$, whereas for the diffusion model they only consist of $\rho$.
    \item For the Schr\"odinger half-bridge, $f$ needs to be chosen such that $p_{X^0_T}$ is known analytically (up to a constant). For the time-reversed diffusion sampler, $f$ needs to be chosen such that $p_{Y_T}\approx p_{X_0^u}$ in order to have a small prior loss $D_{\operatorname{KL}}(\P_{X^{u}_0} | \P_{Y_T})$. 
    \item In the Schr\"odinger half-bridge, $X_0^u$ starts from an arbitrary, but fixed, point, whereas for the diffusion-based generative modeling attempt $X_0^u$ must be (approximately) distributed as $Y_T$.
\end{itemize}
In Appendix~\ref{app: diffusion sampling}, we show that the optimal control $u^*$ of our objective in~\eqref{eq:dis control cost} can be numerically more stable than the one in~\eqref{eq: Schroedinger control costs}. In the next section, we demonstrates that this can also lead to better sample quality and more accurate estimates of normalizing constants.

\section{Numerical examples}
\label{sec: numerical examples}

The numerical experiments displayed in Figures~\ref{fig:logz} and~\ref{fig:exp} show that our \emph{time-reversed diffusion sampler} (DIS) succeeds in sampling from high-dimensional multimodal distributions. We compare our method against the \emph{Path Integral Sampler} (PIS) introduced in~\cite{zhang2021path}, which uses the objective from~\Cref{sec: pis}. As shown in \cite{zhang2021path}, the latter method can already outperform various state-of-the-art sampling methods. This includes gradient-guided MCMC methods without the annealing trick, such as Hamiltonian Monte Carlo (HMC)~\citep{mackay2003information} and
No-U-Turn Sampler (NUTS)~\citep{hoffman2014no}, SMC with annealing trick such as Annealed Flow Transport Monte Carlo (AFT)~\citep{arbel2021annealed}, and variational normalizing flows (VINF)~\citep{rezende2015variational}. 

Let us summarize our setting in the following and note that further details, as well as the exact hyperparameters, can be found in \Cref{app:implementation} and~\Cref{tab:hp}. Our PyTorch implementation\footnote{The associated repository can be found at \url{https://github.com/juliusberner/sde\_sampler}.} is based on~\cite{zhang2021path} with the following main differences: We train with larger batch sizes and more gradient steps to guarantee better convergence. We further use a clipping schedule for the neural network outputs, and an exponential moving average of the parameters for evaluation (as is often done for diffusion models~\citep{nichol2021improved}). Moreover, we train with a varying number of step sizes for simulating the SDEs using the Euler-Maruyama (EM) scheme in order to amortize the cost of training separate models for different evaluation settings.  To have a fair comparison, we trained PIS using the same setup. We also present a comparison to the original code in~\Cref{fig:baseline}, showing that our setup leads to increased performance and stability. The optimization routine is summarized in \Cref{alg: algorithm}.

\begin{algorithm}[t!]
\begin{algorithmic}
\INPUT neural network $u_\theta$ with initial parameters $\theta^{(0)}$, optimizer method $\operatorname{step}$ for updating the parameters, 
number of steps $K$, batch size $m$
\OUTPUT parameters $(\theta^{(k)})_{k=1}^K$

\FOR{$k\gets 0,\dots, K-1$}
\STATE $ (x^{(i)})_{i=1}^m \gets \text{sample from } \mathcal{N}(0,\operatorname{I})^{\otimes m}$ 
\STATE $ \widehat{\mathcal{L}}_{\mathrm{DIS}}(u_{\theta^{(k)}}) \gets \text{estimate the cost 
in~\eqref{eq:dis control cost} using the EM scheme with } \widehat{X}^\theta_0 = x^{(i)},\, i=1,\dots,m$ 
\STATE $\theta^{(k + 1)} \gets \operatorname{step}\left( \theta^{(k)}, \nabla \widehat{\mathcal{L}}_{\mathrm{DIS}}(u_{\theta^{(k)}}) \right)$ 
\ENDFOR
\end{algorithmic}
\caption{Time-reversed diffusion sampler (DIS): Training the control in \Cref{sec: sampling from unnormalized densities} via deep learning.}
\label{alg: algorithm}
\end{algorithm}

Similar to PIS, we also use the score of the density $\nabla \log \rho$ (typically given in closed-form or evaluated via automatic differentiation) for our parametrization of the control $u_\theta$, given by
\begin{equation}
    u_\theta(x,t) \coloneqq \Phi^{(1)}_\theta(x,t) + \Phi^{(2)}_\theta(t) \sigma(t) s(x,t).
\end{equation}
Here, $s$ is a linear interpolation between the initial and terminal scores $\nabla \log \rho$ and $\nabla \log p_{X^u_0}$, thus approximating the optimal score $\nabla \log p_{Y}$ at the boundary values, see~\Cref{cor: control objective diffusion model sampling}. This parametrization yielded the best results in practice, see also the comparison to other parametrizations in~\Cref{app:results}. For the drift and diffusion coefficients, we can make use of successful noise schedules for diffusion models and choose the \emph{variance-preserving} SDE from~\cite{song2020score}. 

We evaluate DIS and PIS on a number of examples that we outline in the sequel. Specifically, we compare the approximation of the log-normalizing constant $\log \mathcal{Z}$ using (approximately) unbiased estimators given by importance sampling in path space, see~\Cref{app: importance sampling} for DIS and~\cite{zhang2021path} for PIS. To evaluate the sample quality, we compare Monte Carlo approximations of expectations as well as estimates of standard deviations.
We further refer to \Cref{fig:pinn} in the appendix for preliminary experiments using \textit{Physics informed neural networks} (PINNs) for solving the corresponding HJB equation, as well as to \Cref{fig:log-var}, where we employ a divergence different from the KL divergence, which is possible due to our path space perspective.

\subsection{Examples}
\label{sec:examples}
Let us present the numerical examples on which we evaluate our method.

\paragraph{Gaussian mixture model (GMM):} We consider
\begin{equation}
    \rho(x) = \sum_{m=1}^M \alpha_m \mathcal{N}(x; \widetilde{\mu}_m, \Sigma_m),
\end{equation}
with $\sum_{m=1}^M \alpha_m = 1$. Specifically, we choose $m=9$, $\Sigma_m=0.3 \,\mathrm{I}$, and $(\widetilde{\mu}_m)_{m=1}^9= \{-5,0,5\} \times \{-5,0,5\}\subset \R^2$. The density and the optimal drift are depicted in~\Cref{fig:reverse_sde}.

\paragraph{Funnel:} The $10$-dimensional \emph{Funnel distribution}~\citep{neal2003slice} is a challenging example often used to test MCMC methods. It is given by
\begin{equation}
\rho(x) = \mathcal{N}(x_1; 0, \nu^2) \prod_{i=2}^d \mathcal{N}(x_i ; 0, e^{x_1})
\end{equation}
for $x=(x_i)_{i=1}^{10} \in \R^{10}$ with\footnote{Due to a typo in the code, the results presented by~\cite{zhang2021path} seem to consider $\nu=3$ for the baselines, but the favorable choice of $\nu=1$ for evaluating their method.} $\nu=3$.

\paragraph{Double well (DW):}
A typical problem in molecular dynamics considers sampling from the stationary distribution of a Langevin dynamics, where the drift of the SDE is given by the negative gradient of a potential $\Psi$, namely
\begin{equation}
    \mathrm{d} X_s = -\nabla \Psi(X_s)\,\mathrm{d}s + \sigma(s) \,\mathrm{d} B_s,
\end{equation}
see, e.g., \citep{leimkuhler2015molecular}. Given certain assumptions on the potential, the stationary density of the process can be shown to be $p_{X_\infty} = e^{-\Psi}/\mathcal{Z}$.
Potentials often contain multiple \textit{energy barriers}, and resulting local minima correspond to meaningful configurations of a molecule. However, the convergence speed of $p_{X_t}$ can be very slow -- in particular for large energy barriers. Our time-reversed diffusion sampler, on the other hand, can (at least in principle) sample from densities already at a fixed time $T$. 
In our example, we shall consider a $d$-dimensional \emph{double well} potential, corresponding to an (unnormalized) density $\rho = e^{-\Psi}$ given by
\begin{equation}
\label{eq: dw}
    \rho(x) = \exp\left(-\sum_{i=1}^w(x_i^2 - \delta)^2 - \frac{1}{2}\sum_{i=w+1}^{d} x_i^2\right)
\end{equation}
with $w\in \N $ combined double wells and a separation parameter $\delta\in (0, \infty)$, see also \cite{wu2020stochastic}. Note that, due to the double well structure of the potential, the density contains $2^w$ modes.
For these multimodal examples, we can compute a reference solution of the log-normalizing constant $\log \mathcal{Z}$ and other statistics of interest by numerical integration since $\rho$ factorizes in the dimensions.
\begin{figure*}[!t]
    \centering
    \includegraphics[width=\linewidth]{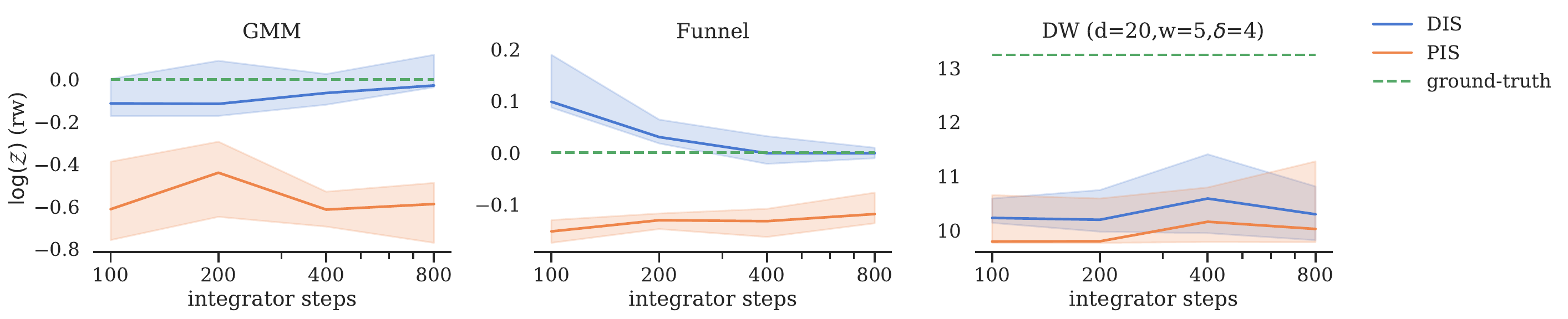}
    \vspace{-0.5em}
    \caption{We compare our DIS method against PIS on the ability to compute the log-normalizing constant $\log \mathcal{Z}$ (median and interquartile range over $10$ training seeds) when using $N\in\{100,200,400,800\}$ steps of the Euler-Maruyama scheme. 
    Our method outperforms PIS clearly for the GMM and Funnel examples and offers a slight improvement for the DW example. Each model has been trained with each $N$ for $1/4$ of the total gradient steps (starting with $100$ and ending with $800$ steps). See also Figure~\ref{fig:logz_comp} for a comparison of models trained on a single step size.}
    \label{fig:logz}
\end{figure*}

\subsection{Results}

\begin{figure*}[!t]
    \centering
    \includegraphics[width=\linewidth]{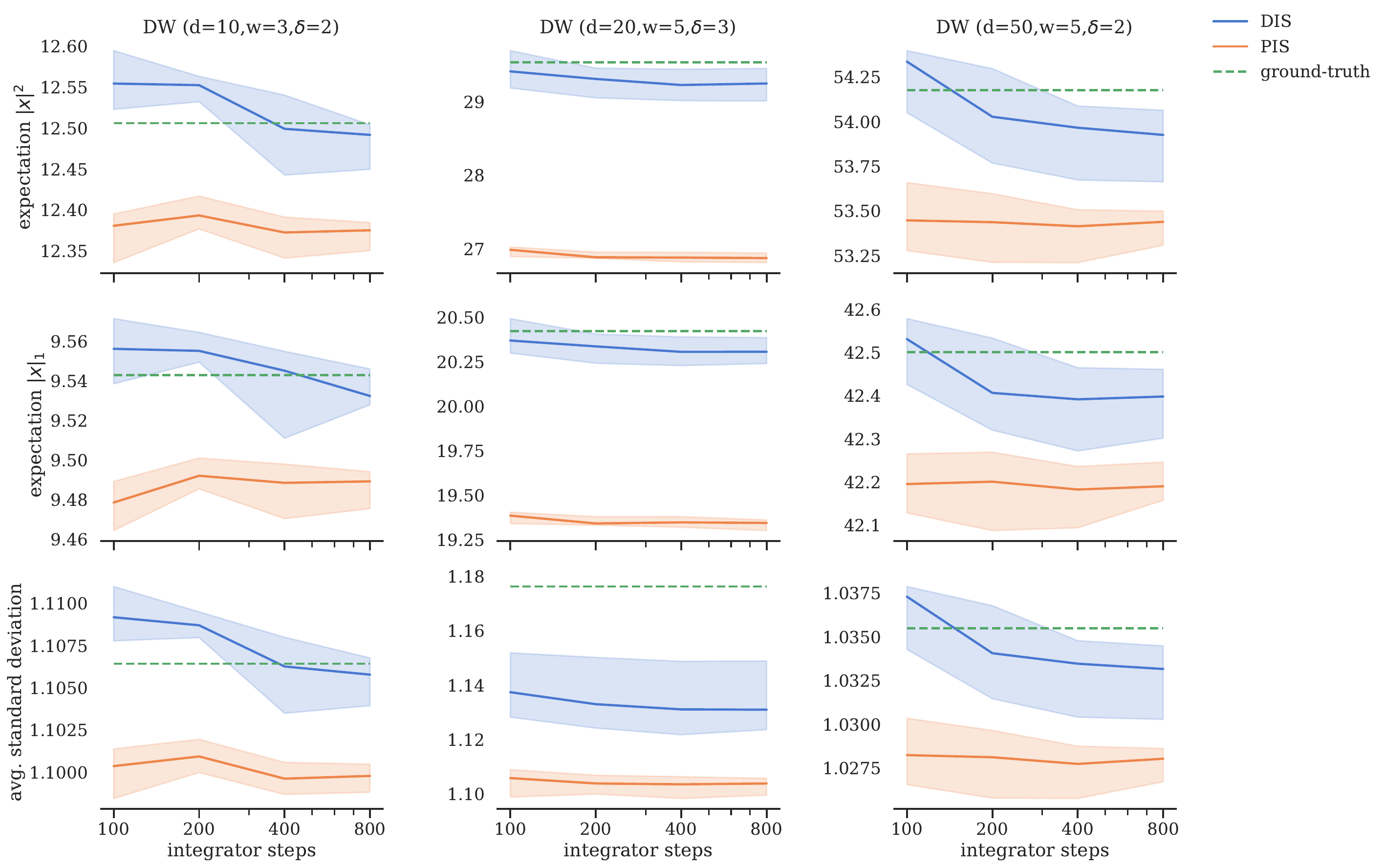}
    \vspace{-0.5em}
    \caption{We compare our DIS method against PIS on the ability to estimate the expectations $\E\left[\|Y_0\|^2\right]$ and $\E\left[\|Y_0\|_1\right]$ and the average standard deviation $\frac{1}{d}\sum_{i=1}^d\sqrt{\V\left[(Y_0)_i\right]}$, see~\eqref{eq: expectation of interest}. Each model has been trained with each $N\in\{100,200,400,800\}$ for $1/4$ of the total gradient steps (starting with $100$ and ending with $800$ steps). We compare the methods
    using $N$ steps of the Euler-Maruyama scheme when computing our estimates (median and interquartile range over $10$ training seeds). Our method outperforms PIS in all considered settings in terms of accuracy.}
    \label{fig:exp}
\end{figure*}

Our experiments show that DIS can offer improvements over PIS for all the tasks we considered, i.e., estimation of normalizing constants, expectations, and standard deviations. \Cref{fig:logz,fig:exp} display direct comparisons for the examples in~\Cref{sec:examples}, noting that we use the same training techniques but different objectives for the respective methods. Moreover, we recall that our setup of the PIS method already outperforms the default implementation by~\cite{zhang2021path}, see~\Cref{fig:baseline} in the appendix. We also emphasize that the reference process of PIS satisfies that $X^0_T \sim \mathcal{N}(0,\mathrm{I})$, which is already the correct distribution for $d-w$ dimensions of the double well example. Nevertheless, our proposed DIS method converges to a better approximation and provides strong results even for multimodal, high-dimensional distributions, such as $d=50$ with $32$ well-separated modes, see~\Cref{fig:exp}. Finally, we provide further results for two promising future directions: We show that replacing the KL divergence in~\Cref{cor: control objective diffusion model sampling} with the \emph{log-variance divergence}~\citep{nusken2021solving,richter2020vargrad} can further improve performance, see~\Cref{fig:log-var} and~\Cref{app:results} for details. In~\Cref{fig:pinn}, we show that solving the HJB equation in~\Cref{lem: HJB for score} using \emph{physics-informed neural networks} (PINNs) can provide competitive results, see~\Cref{app: PDE-based methods for sampling} for further explanations. We present more numerical results and comparisons in~\Cref{app:implementation,app:results}.

\subsection{Limitations}
In this section we shall discuss limitations of the optimal control perspective and our resulting sampling algorithm. First, note that the control perspective so far only yields new algorithms if the target density is known (up to the normalization constant). In particular, we note that the direct minimization of divergences, such as the KL or log-variance divergence, needs access to the (unnormalized) target density. This is typically not the case for problems in generative modeling where one has only access to data samples and where one resorts to optimizing the ELBO, as outlined in \Cref{sec:diff}. However, we emphasize that classical sampling problems (without any samples from the data distribution) can arguably be more challenging, e.g., due to exploration-exploitation trade-offs and potential mode collapse.

Compared to classical sampling methods such as, e.g., MCMC and SMC methods, the sampling time of our algorithm DIS is typically much faster. However, this comes at the cost of training a model first, which might only amortize when a larger number of samples is needed. While a well-trained model enjoys good sampling guarantees (cf. \citet{de2022convergence,chen2022sampling,lee2023convergence}), approximating the optimal control (along the controlled dynamics) can be challenging and sensitive to the chosen hyperparameters. 
We have observed that clever choices of alternative divergences can improve convergence and counteract mode collapse, however, it is generally difficult to provide convergence guarantees.

\section{Conclusion and outlook}

\begin{figure*}[!t]
    \centering
    \includegraphics[width=\linewidth]{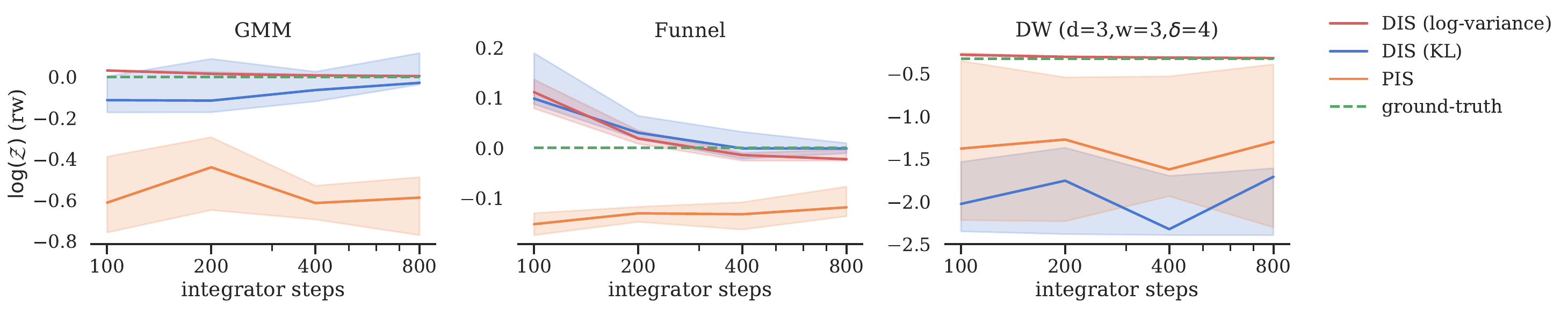}
    \vspace{-0.5em}
    \caption{We use the settings in~\Cref{fig:logz} and only change the reverse KL divergence in~\Cref{cor: control objective diffusion model sampling} to the log-variance divergence~\citep{nusken2021solving}. Training with the latter objective can significantly improve the estimate of the log-normalizing constant $\log \mathcal{Z}$ for the GMM, the Funnel, and a DW example.}
    \label{fig:log-var}
\end{figure*}

We propose a connection of diffusion models to the fields of optimal control and reinforcement learning, which provides valuable new insights and allows to transfer established tools from one field to the respective other. As first steps, we have shown how to readily derive the ELBO for continuous-time diffusion models solely from control arguments, we provided an interpretation of diffusion models via measures on path space (eventually leading to improved losses), and we extended the diffusion modeling framework to sampling from unnormalized densities. We could demonstrate that our framework offers significant numerical advantages over existing diffusion-based sampling approaches across a series of challenging, high-dimensional problems.

With the new connections at hand, we anticipate further fruitful theoretical as well as practical insights. For instance, the PDE perspective offers to rely on PDE techniques and solvers that have been developed in the past, see~\Cref{app: PDE-based methods for sampling} for details. For high-dimensional settings, tensor-based methods \citep{richter2023continuous}, as well as SDE-based solvers \citep{nusken2021solving}, seem to be well suited for numerical approximation. Alternatively, one can readily apply numerical methods from control theory to approximate the score function, such as, for instance, methods based on tensor trains \citep{oster2022approximating} or policy iteration, which has recently been combined with deep learning \citep{zhou2021actor}. 

Note that one usually controls linear, Ornstein-Uhlenbeck type processes. This motivates to approximate the target density by a Gaussian density (which corresponds to quadratic costs in the control problem). Then, one can pretrain the score model by solving a linear-quadratic-control problem, for which very efficient numerical methods have been developed extensively. On the other hand, we may add additional running costs to the control objective in order to promote or prevent certain regions in the domain, which can incorporate domain knowledge and improve sampling for respective applications. Interesting future perspectives also include the usage of other divergences on path space (motivated by our promising results with the log-variance divergence), as well as the extension to Schr\"odinger bridges, for instance based on the work by~\cite{chen2021likelihood}. These methods are particularly well-suited for the task of sampling from unnormalized densities, where we cannot leverage the efficient denoising score matching objective.

\subsubsection*{Acknowledgments}
We would like to thank Nikolas N\"usken and Ricky T. Q. Chen for many useful discussions. The research of Lorenz Richter has been partially funded by Deutsche Forschungsgemeinschaft (DFG) through the grant CRC $1114$ ``Scaling Cascades in Complex Systems'' (project A$05$, project number $235221301$). Julius Berner is grateful to G-Research for the travel grant.

\clearpage
\bibliography{refs}
\bibliographystyle{tmlr}

\newpage

\begin{appendices}
\section{Appendix}

\localtableofcontents

\subsection{Setting}
\label{sec:notation}
Let $d,k\in\N$ and $T\in(0,\infty)$. For a random variable $X$, which is absolutely continuous w.r.t.\@ to the Lebesgue measure, we write $p_X$ for its density. We denote by $B$ a standard $d$-dimensional Brownian motion. We say that a continuous $\R^d$-valued stochastic process $Y=(Y_t)_{t\in[0,T]}$ has density $p_{Y}\colon\R^d \times [0,T] \to [0,\infty)$ if for all $t\in[0,T]$ the random variable $Y_t$ has density $p_{Y}(\cdot,t)$ w.r.t.\@ to the $d$-dimensional Lebesgue measure, i.e., for all $t\in[0,T]$ and all measurable $A\subset \R^d$ it holds that
\begin{equation}
    \P\left[Y_t \in A\right] = \int_A p_Y(x,t) \,\mathrm{d}x = \int_A p_{Y_t}(x) \,\mathrm{d}x.
\end{equation}
We denote by $\P_Y$ the law of $Y$ on the space of continuous functions $C([0,T],\R^d)$ equipped with the Borel measure.
We assume that the coefficient functions and initial conditions of all appearing SDEs are sufficiently regular such that the SDEs admit unique strong solutions and Novikov's condition is satisfied, see, e.g.,~\citet[Section 8.6]{oksendal2003stochastic}. Furthermore, we assume that the SDE solutions have densities that are smooth and strictly positive for $t\in (0,T)$ and that can be written as unique solutions to corresponding Fokker-Planck equations, see, for instance,~\citet[Section 2.6]{arnold1974stochastic} and~\citet[Section 10.5]{baldi2017stochastic} for the details.
For a function $f\colon\R^d \times [0,T] \to \R^k$, we write $\cev{f}$ for the time-reversed function given by
\begin{equation}
\cev{f}(x,t) = f(x,T-t), \quad (x,t)\in \R^d \times [0,T]. 
\end{equation}
For a scalar-valued function $g\colon\R^d \times [0,T] \to \R$, we denote by $\nabla g$, $\nabla^2 g$, and $\div(g)$ its gradient, Hessian matrix, and divergence w.r.t.\@ to the spatial variable $x$. For matrix-valued functions $A\colon\R^d \to \R^{d\times d}$, we denote by 
\begin{equation}
    \tr(A) \coloneqq \sum_{i=1}^d A_{ii}
\end{equation}
the trace of its output. Finally, we define the divergence of matrix-valued functions row-wise, see \Cref{app:div}.

\subsection{Identities for divergences}
\label{app:div}
Let $A\colon\R^d\to\R^{d\times d}$, $v\colon\R^d\to\R^d$, and $g\colon\R^d\to\R$. We define the divergence of $A$ row-wise, i.e.,
\begin{equation}
\div (A) \coloneqq  \left(\div (A_{i\cdot})\right)_{i=1}^d = \sum_{j=1}^d \partial_{x_j} A_{\cdot j},
\end{equation}
where $A_{i\cdot}$ and $A_{\cdot j}$ denote the $i$-th row and $j$-th column, respectively.
Then the following identities hold true:
\begin{enumerate}
	\item $\div (\div (A)) = \sum_{i,j=1}^d \partial_{x_i} \partial_{x_j} A_{ij}$
	\item $\div (vg)=\div (v)g + v \cdot \nabla g$
	\item $\div (Ag) = \div (A)g + A\nabla g$
	\item $\div (Av) = \div (A^\top)
	\cdot v + \tr(A\nabla v)$.
\end{enumerate}

\subsection{Reverse-time SDEs}
\label{app:reverse_sde}
The next theorem shows that the marginals of a time-reversed It\^o process can be represented as marginals of another It\^o process, see \cite{huang2021variational,song2020score,nelson1967dynamical,anderson1982reverse,haussmann1986time,follmer1988random}. We present a formulation from~\citet[Appendix G]{huang2021variational} which derives a whole family of processes (parametrized by a function $\lambda$). The relations stated in equation \eqref{eq:rev_def} follow from the choice $\lambda = 0$ and the fact that $\div(D)=0$ if $\sigma$ does not depend on the spatial variable $x$.
\begin{theorem}[Reverse-time SDE]
\label{thm:rev_sde}
Let $f\in\R^d \times[0,T]\to\R^d$ and $\sigma\colon\R^d \times [0,T] \to \R^{d\times d}$, let $Y=(Y_s)_{s \in [0,T]}$ be the solution to the SDE
\begin{equation}
\mathrm{d} Y_s = f(Y_s,s)\,\mathrm{d}s + \sigma(Y_s,s) \,\mathrm{d} B_s,
\end{equation}
and assume that $Y$ has density $p_{Y}$, which satisfies the Fokker-Planck equation given by
\begin{equation}
\label{eq:fp_app}
\partial_t p_{Y} =  \div  \left( \div \left(D p_{Y}\right) - f p_{Y}  \right),
\end{equation}
where $D\coloneqq \frac{1}{2} \sigma \sigma^\top$. For every $\lambda\in C^2([0,T], [0,1])$ the solution $\cev{Y}=(\cev{Y}_s)_{s \in [0,T]}$ to the reverse-time SDE 
\begin{equation}
\label{eq:rev_sde_app}
\mathrm{d} \cev{Y}_s = \cev{\mu}^{(\lambda)}(\cev{Y}_s,s)\,\mathrm{d}s + \cev{\sigma}^{(\lambda)}(\cev{Y}_s,s) \,\mathrm{d} B_s,\quad \cev{Y}_0 \sim Y_T,
\end{equation}
with
\begin{equation}
\mu^{(\lambda)}\coloneqq (2-\lambda) \div \big(D\big) +(2-\lambda) D \nabla \log p_{Y} - f,
\end{equation}
and
\begin{equation}
\sigma^{(\lambda)} \coloneqq \sqrt{1-\lambda}\sigma
\end{equation}
has density $p_{\cev{Y}}$ given by
\begin{equation}
     p_{\cev{Y}}(\cdot,t) = \cev{p}_Y(\cdot,t)
\end{equation}
almost everywhere for every $t\in [0,T]$. In other words, for every $t\in [0,T]$ it holds that $Y_{T-t}\sim \cev{Y}_t$.
\end{theorem}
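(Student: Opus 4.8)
The plan is to show that the function $q(x,t) \coloneqq \cev{p}_Y(x,t) = p_Y(x,T-t)$ is itself a density of the reverse-time process $\cev{Y}$, by checking that it solves the Fokker--Planck equation attached to the SDE~\eqref{eq:rev_sde_app} with the correct initial condition, and then invoking the uniqueness of Fokker--Planck solutions that is part of the standing assumptions in \Cref{sec:notation}. Concretely, the solution to~\eqref{eq:rev_sde_app} has a density $p_{\cev{Y}}$ satisfying
$$\partial_t p_{\cev{Y}} = \div\!\left( \div\!\big( \cev{D}^{(\lambda)}\, p_{\cev{Y}}\big) - \cev{\mu}^{(\lambda)}\, p_{\cev{Y}}\right), \qquad D^{(\lambda)} \coloneqq \tfrac{1}{2}\sigma^{(\lambda)}(\sigma^{(\lambda)})^\top = (1-\lambda)\,D,$$
so it suffices to verify that $q$ solves this PDE and has initial value $p_{Y_T}$.

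First I would differentiate $q(x,t)=p_Y(x,T-t)$ in $t$ and substitute the Fokker--Planck equation~\eqref{eq:fp_app} for $p_Y$ (evaluated at time $T-t$), which gives $\partial_t q = -\div\!\big(\div(\cev{D}\, q) - \cev{f}\, q\big)$. Hence the whole claim reduces to the pointwise identity
$$-\div\!\big(\div(D\, p_Y) - f\, p_Y\big) = \div\!\big(\div(D^{(\lambda)}\, p_Y) - \mu^{(\lambda)}\, p_Y\big),$$
with all coefficient functions evaluated at a common time. The core computation, carried out with the divergence identities in \Cref{app:div}, is: since $\lambda$ does not depend on $x$, one has $\div(D^{(\lambda)} p_Y) = (1-\lambda)\,\div(D\, p_Y)$; and since $\big(\div(D) + D\nabla\log p_Y\big)p_Y = \div(D)\,p_Y + D\nabla p_Y = \div(D\, p_Y)$ (identity~3 of \Cref{app:div} together with $\nabla\log p_Y = \nabla p_Y/p_Y$, legitimate because $p_Y>0$ for $t\in(0,T)$), the drift satisfies $\mu^{(\lambda)} p_Y = (2-\lambda)\,\div(D\, p_Y) - f\, p_Y$. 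Plugging both into the right-hand side, the $\div(\div(D p_Y))$ terms combine with coefficient $(1-\lambda)-(2-\lambda) = -1$ while the $\div(f p_Y)$ term reappears with a plus sign, producing exactly $-\div\!\big(\div(D p_Y) - f p_Y\big)$, as needed. Thus $q$ solves the reverse-time Fokker--Planck equation, and $q(\cdot,0) = p_Y(\cdot,T) = p_{Y_T}$ matches $\cev{Y}_0\sim Y_T$; by uniqueness of Fokker--Planck solutions (\Cref{sec:notation}) we conclude $p_{\cev{Y}}(\cdot,t) = q(\cdot,t) = \cev{p}_Y(\cdot,t)$ a.e.\ for every $t\in[0,T]$, equivalently $Y_{T-t}\sim\cev{Y}_t$.

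I expect the main obstacle to be bookkeeping rather than conceptual. The two points that need care are: (i) tracking the time-reversal and the resulting sign in $\partial_t q$, and making sure every coefficient is re-evaluated at $T-t$ after the substitution; and (ii) the regularity justifications --- that~\eqref{eq:rev_sde_app} admits a unique strong solution whose density genuinely solves its Fokker--Planck equation (which requires $\nabla\log p_Y$ to be well-defined and smooth, hence the restriction $t\in(0,T)$ and the standing smoothness/positivity assumptions on $p_Y$), and that the relevant Fokker--Planck equation has a unique solution. A minor additional subtlety is that, unlike elsewhere in the paper, here $\sigma$ may depend on $x$, so the $\div(D)$ terms are genuinely present and identity~3 of \Cref{app:div} must be applied rather than discarded.
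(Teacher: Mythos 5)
Your proof takes essentially the same route as the paper: you verify that $\cev{p}_Y$ solves the Fokker--Planck equation attached to the reverse-time SDE~\eqref{eq:rev_sde_app}, using the same split $-1=(1-\lambda)-(2-\lambda)$ of the offending negative-divergence coefficient and the same identity $\div(D p_Y)=\big(\div(D)+D\nabla\log p_Y\big)p_Y$ from \Cref{app:div}, then conclude by uniqueness under the standing assumptions. The only differences are presentational: you reduce to a pointwise identity and make the final uniqueness appeal and the initial-condition check explicit, whereas the paper phrases the same computation as rewriting $\partial_t\cev{p}_Y$ directly into the form~\eqref{eq:fp_lambda}.
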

\begin{proof}
Using the Fokker-Planck equation in~\eqref{eq:fp_app}, we observe that
\begin{equation}
\label{eq:reverse_fp_app}
\partial_t \cev{p}_{Y} =  \div  \left( - \div \left(\cev{D} \cev{p}_{Y}\right) + \cev{f} \cev{p}_{Y}  \right).
\end{equation}
The negative divergence, originating from the chain rule, prohibits us from directly viewing the above equation as a Fokker-Planck equation. We can, however, use the identities in \Cref{app:div}, to show that
\begin{equation}
\label{eq:div}
\div \left(\cev{D} \cev{p}_{Y}\right) = \div \left(\cev{D} \right)\cev{p}_{Y} + \cev{D} \nabla \cev{p}_{Y} = \left(\div \left(\cev{D} \right) + \cev{D}\nabla \log \cev{p}_{Y} \right)\cev{p}_{Y}.
\end{equation}
This implies that we can rewrite~\eqref{eq:reverse_fp_app} as
\begin{subequations}
\begin{align}
\label{eq:reverse_derivation}
\partial_t \cev{p}_Y  &=\div  \left( (1-\lambda) \div \left(\cev{D} \cev{p}_Y\right) - (2-\lambda) \div \left(\cev{D} \cev{p}_Y\right) + \cev{f} \cev{p}_Y \right)  \\
\label{eq:fp_lambda}
&=\div  \big(  \div \big(\cev{D}^{(\lambda)} \cev{p}_Y\big) - \cev{\mu}^{(\lambda)} \cev{p}_Y \big),
\end{align}
\end{subequations}
where $D^{(\lambda)} \coloneqq \frac{1}{2} \sigma^{(\lambda)} (\sigma^{(\lambda)})^\top$.
As the PDE in~\eqref{eq:fp_lambda} defines a valid Fokker-Planck equation associated to the reverse-time SDE given by~\eqref{eq:rev_sde_app}, this proves the claim.
\end{proof}

\subsection{Further details on the HJB equation}
\label{app: further details on HJB}

In order to solve \Cref{prob: control}, one might be tempted to rely on classical methods to approximate the solution of the HJB equation from \Cref{lem: HJB for score} directly. However, in the setting of \Cref{rem:time_rev}, one should note that the optimal drift, 
\begin{equation}
\label{eq:opt drift app}
    \mu = \sigma\sigma^\top \nabla \log p_{Y}- f = -\sigma\sigma^\top \nabla V- f,
\end{equation}
contains the solution $V$ itself. Plugging it into the HJB equation in~\Cref{lem: HJB for score}, we get the equation
\begin{equation}
\label{eq: PDE for log p_Y}
        \partial_t V = \tr \left(D\nabla^2 V\right) - f \cdot \nabla V + \div(f)  - \frac{1}{2}\|\sigma^\top\nabla V\|^2, \quad V(\cdot, T) = -\log p_{X_0}.
\end{equation}
Likewise, when applying the Hopf--Cole transformation from  \Cref{app:hc_transform} to $p_Y$ directly, i.e., considering $V \coloneqq -\log p_Y$, where $Y$ is the solution to SDE \eqref{eq: inference SDE}, we get the same PDE. We note that the signs in \eqref{eq: PDE for log p_Y} do not match with typical HJB equations from control theory. In order to obtain an HJB equation, we can consider the time-reversed function $\cev{V}$, which satisfies
\begin{equation}
\label{eq: HJB for log p_Y}
    \partial_t \cev{V} = - \tr \left(\cev{D} \nabla^2 \cev{V}\right) + \cev{f}\cdot \nabla \cev{V} - \div(\cev{f}) + \frac{1}{2}\|\cev{\sigma}^\top\nabla \cev{V}\|^2, \quad \cev{V}(\cdot, T) = -\log p_{X_T}.
\end{equation}
Unfortunately, the terminal conditions in both~\eqref{eq: PDE for log p_Y} and~\eqref{eq: HJB for log p_Y} are typically not available in the context of generative modeling. Specifically, for the optimal drift in~\eqref{eq:opt drift app}, they correspond to the intractable marginal densities of the inference process $Y$, since it holds that $p_X=\cev{p}_Y$, see~\Cref{rem:time_rev}. However, the situation is different in the case of sampling from (unnormalized) densities, see \Cref{sec: sampling from unnormalized densities}.

 \subsection{Hopf--Cole transformation}
    \label{app:hc_transform}
Recall that the Kolmogorov backward equation follows from the Fokker-Planck equation by using the divergence identities from \Cref{app:div}, i.e. 
\begin{subequations}
        \begin{align}
\partial_t \cev{p}_{X} =  \div  \left( - \div \left(D \cev{p}_{X}\right) +\mu \cev{p}_{X}  \right) &= \div\left(-D \nabla \cev{p}_{X} \right) + \mu \cdot \nabla \cev{p}_{X} + \div (\mu)\cev{p}_{X}\\
&= - \tr\big(D \nabla^2 \cev{p}_{X}\big) + \mu \cdot \nabla \cev{p}_{X} + \div (\mu)\cev{p}_{X}.
\end{align}
\end{subequations}%
The following lemma details the relation of the HJB equation in~\Cref{lem: HJB for score} and the linear Kolmogorov backward equation\footnote{For $\div(\mu)=0$ this can be viewed as the adjoint of the Fokker-Planck equation.} in~\eqref{eq:kolmogorov_pde}. 
A proof of the Hopf-Cole transformation can, e.g., be found in~\citet[Section 4.4.1]{evans2010} and~\citet[Appendix G]{richter2022robust}. \Cref{lem: HJB for score} follows with the choices $b \coloneqq -\mu$ and $h\coloneqq \div (\mu)$.

    \begin{lemma}[Hopf--Cole transformation]
    \label{lem: Linearization of HJB equation}
    Let $h\colon \R^d \times [0, T] \to \R$ and let $p \in C^{2, 1}(\R^d \times [0, T], \R)$ solve the linear PDE 
    \begin{align}
    \label{eq:fk_general}
        \partial_t p = - \frac{1}{2} \tr(\sigma\sigma^\top \nabla^2 p) - b\cdot \nabla p + hp.
    \end{align}
    Then $V \coloneqq -\log p$ satisfies
    the HJB equation
    \begin{align}
    \label{eq:hjb_general}
         \partial_t V = - \frac{1}{2} \tr(\sigma\sigma^\top \nabla^2 V) - b\cdot \nabla V - h + \frac{1}{2} \big\|\sigma^\top \nabla V\big\|^2.
    \end{align}
    \end{lemma}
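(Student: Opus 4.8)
The plan is a direct chain-rule computation, so the main work is bookkeeping rather than any genuine difficulty. Since the standing regularity assumptions (see \Cref{sec:notation}) guarantee that $p$ is strictly positive, the function $V \coloneqq -\log p$ is well-defined and lies in $C^{2,1}(\R^d\times[0,T],\R)$. First I would record the elementary identities coming from $p = e^{-V}$, namely
\begin{equation}
    \partial_t p = -p\,\partial_t V, \qquad \nabla p = -p\,\nabla V .
\end{equation}
Differentiating the gradient identity once more in the spatial variable gives the Hessian relation
\begin{equation}
    \nabla^2 p = -\nabla p\,(\nabla V)^\top - p\,\nabla^2 V = p\left(\nabla V\,(\nabla V)^\top - \nabla^2 V\right),
\end{equation}
which is the only line that needs a moment of care.

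Next I would substitute these three identities into the linear PDE \eqref{eq:fk_general} and divide through by $-p\neq 0$. Using the trace identity $\tr\!\left(\sigma\sigma^\top\,\nabla V\,(\nabla V)^\top\right) = (\nabla V)^\top\sigma\sigma^\top\nabla V = \|\sigma^\top\nabla V\|^2$, the second-order term $-\tfrac12\tr(\sigma\sigma^\top\nabla^2 p)$ becomes (up to the overall factor $p$) the combination $-\tfrac12\tr(\sigma\sigma^\top\nabla^2 V) + \tfrac12\|\sigma^\top\nabla V\|^2$, the transport term $-b\cdot\nabla p$ becomes $p\,(b\cdot\nabla V)$, and the zeroth-order term $hp$ stays $hp$; after dividing by $-p$ these produce exactly $-\tfrac12\tr(\sigma\sigma^\top\nabla^2 V)$, $-b\cdot\nabla V$, $-h$, and $+\tfrac12\|\sigma^\top\nabla V\|^2$, i.e., the HJB equation \eqref{eq:hjb_general}.

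There is no real obstacle beyond careful sign-tracking; the one conceptual point worth flagging is the strict positivity of $p$, which is what legitimizes both the logarithm and the division by $p$. Finally, to recover \Cref{lem: HJB for score} I would apply the lemma with $b\coloneqq-\mu$ and $h\coloneqq\div(\mu)$ to the Kolmogorov backward equation \eqref{eq:kolmogorov_pde} (which is precisely \eqref{eq:fk_general} with $\tfrac12\sigma\sigma^\top = D$ and these choices), using $-b\cdot\nabla V - h = \mu\cdot\nabla V - \div(\mu)$, and noting that the terminal condition $V(\cdot,T) = -\log p_{X_0}$ is inherited from $\cev{p}_X(\cdot,T) = p_{X_0}$.
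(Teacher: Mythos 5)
Your computation is correct and is the standard direct Hopf--Cole argument: the paper itself does not write out a proof but only cites references (Evans, Section 4.4.1, and Richter et al., Appendix G), and your chain-rule substitution with the Hessian identity $\nabla^2 p = p\bigl(\nabla V\,(\nabla V)^\top - \nabla^2 V\bigr)$ followed by division by $-p$ is precisely what those sources do. The sign bookkeeping checks out, the positivity remark correctly flags the one hypothesis that matters, and the specialization $b=-\mu$, $h=\div(\mu)$ recovering \Cref{lem: HJB for score} matches the paper's stated reduction.
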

    For further applications of the Hopf--Cole transformation we refer, for instance, to \cite{fleming2006controlled, hartmann2017variational, leger2021hopf}.

\subsection{Brief introduction to stochastic optimal control}
\label{app: intro optimal control}

In this section, we shall provide a brief introduction to stochastic optimal control. For details and further reading, we refer the interested reader to the monographs by \cite{fleming2012deterministic,fleming2006controlled,pham2009continuous,vanHandel2007stochastic}. Loosely speaking, stochastic control theory deals with identifying optimal strategies in noisy environments, in our case, continuous-time stochastic processes defined by the SDE
\begin{equation}
\label{eq: general controlled diffusion}
    \mathrm{d}X^U_s = \widetilde{\mu}(X^U_s, s, U_s) \,\mathrm{d}s + \widetilde{\sigma}(X^U_s, s, U_s) \,\mathrm{d}B_s,
\end{equation}
where $\widetilde{\mu}$ and $\widetilde{\sigma}$ are suitable functions, $B$ is a $d$-dimensional Brownian motion, and $U$ is a progressively measurable, $\R^d$-valued random control process. For ease of presentation, we focus on the frequent case where $U$ is a \emph{Markov control}, which means that there exists a deterministic function $u \in \mathcal{U}\subset C(\R^d\times [0,T], \R^d)$, such that $U_s = u(X^U_s, s)$. In other words, the randomness of the process $U$ is only coming from the stochastic process $X^U$. The function class $\mathcal{U}$ then defines the set of \emph{admissible controls}.
Very often, one considers the special cases
\begin{equation}
\label{eq: special choices drift and diffusion coefficient}
    \widetilde{\mu}(x, s, u) \coloneqq \mu(x, s) + \sigma(s) u(x,s) \qquad \text{and} \qquad \widetilde{\sigma}(x, s, u) \coloneqq \sigma(s),
\end{equation}
where $\mu$ and $\sigma$ might correspond to the choices taken in \Cref{sec:diff}, and one might think of $u$ as a steering force as to reach a certain target.

The goal is now to minimize specified control costs with respect to the control $u$. To this end, we can define the cost functional
\begin{equation}
    J(u; x_\mathrm{init}, 0) = \E\left[\int_0^T \widetilde{h}(X^U_s, s, u(X^U_s, s))\,\mathrm{d}s + g(X^U_T) \Bigg| X^U_0 = x_\mathrm{init} \right],
\end{equation}
where $\widetilde{h}:\R^d \times [0,T] \times \R^d \to \R$ specifies \emph{running costs} and $g:\R^d \to \R$ represents \emph{terminal costs}. Furthermore we can define the \textit{cost-to-go} as
\begin{equation}
\label{eq: general control cost functional}
    J(u; x, t) = \E\left[\int_t^T \widetilde{h}(X^U_s, s, u(X^U_s, s))\,\mathrm{d}s + g(X^U_T) \Bigg| X^U_t = x \right],
\end{equation}
now depending on respective initial values $(x,t)\in\R^d \times [0,T]$. The objective in optimal control is now to minimize this quantity over all admissible controls $u \in \mathcal{U}$ and we, therefore, introduce the so-called \textit{value function}
\begin{equation}
\label{eq: definition value function}
    V(x, t) = \inf_{u \in \mathcal{U}} J(u; x, t) 
\end{equation}
as the optimal costs conditioned on being in position $x$ at time $t$.

Motivated by the \textit{dynamic programming principle} \citep{bellman57}, one can then derive the main result from control theory, namely that the function $V$ defined in \eqref{eq: definition value function} fulfills a nonlinear PDE, which can thus be interpreted as the determining equation for optimality\footnote{In practice, solutions to optimal control problems may not posses enough regularity in order to formally fulfill the HJB equation, such that a complete theory of optimal control needs to introduce an appropriate concept of weak solutions, leading to so-called viscosity solutions that have been studied, for instance, in \cite{fleming2006controlled,lions1983optimal}.}.

\begin{theorem}[Verification theorem for general HJB equation]
\label{thm: HJB verification theorem}
Let $V \in C^{2,1}(\R^d \times [0, T], \R)$ fulfill the PDE
\begin{equation}
\label{eq: general HJB equation}
    \partial_t V = - \inf_{\alpha \in \R^d} \left\{ \widetilde{h}(\cdot, \cdot, \alpha ) + \widetilde{\mu}(\cdot, \cdot, \alpha )\cdot \nabla V + \frac{1}{2} \tr\left((\widetilde{\sigma} \widetilde{\sigma}^\top)(\cdot, \cdot, \alpha)\nabla^2 V \right)\right\}, \, \, \, V(\cdot, T) = g,
\end{equation}
such that 
\begin{equation}
\sup_{(x,t) \in \R^d \times [0,T]} \frac{\|V(x,t)\|}{1+\|x\|^2} < \infty
\end{equation}
and assume that there exists a measurable function $\mathcal{U} \ni u^* : \R^d \times [0, T] \to \R^d$ that attains the above infimum for all $(x, t) \in \R^d \times [0, T]$. Further, let the correspondingly controlled SDE in~\eqref{eq: general controlled diffusion} with $U^*_s \coloneqq u^*(X_s^{U^*}, s)$ have a strong solution $X^{U^*}$. Then $V$ coincides with the value function as defined in \eqref{eq: definition value function} and $u^*$ is an optimal Markovian control.
\end{theorem}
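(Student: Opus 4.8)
The plan is to use It\^o's formula applied to $V(X_s^{U}, s)$ along an arbitrary admissible controlled trajectory, and then compare the resulting identity with the PDE~\eqref{eq: general HJB equation}. Fix an admissible control $u \in \mathcal{U}$, write $U_s = u(X_s^U, s)$, and let $X^U$ be the associated strong solution started at $X_t^U = x$. Applying It\^o to the $C^{2,1}$ function $V$ and taking conditional expectation $\E[\,\cdot \mid X_t^U = x]$, the martingale part (the stochastic integral against $\mathrm{d}B_s$) vanishes under the quadratic growth bound on $V$ together with Novikov-type integrability of the coefficients, yielding
\begin{equation*}
\E\big[V(X_T^U, T) \mid X_t^U = x\big] - V(x,t) = \E\left[\int_t^T \Big( \partial_s V + \widetilde b(\cdot,\cdot,U_s)\cdot\nabla V + \tfrac12\tr\big((\widetilde\sigma\widetilde\sigma^\top)(\cdot,\cdot,U_s)\nabla^2 V\big) \Big)(X_s^U, s)\,\mathrm{d}s\right].
\end{equation*}
By the PDE~\eqref{eq: general HJB equation}, for \emph{every} $\alpha$ the bracket $\partial_s V + \widetilde b(\cdot,\cdot,\alpha)\cdot\nabla V + \tfrac12\tr((\widetilde\sigma\widetilde\sigma^\top)(\cdot,\cdot,\alpha)\nabla^2 V) \ge -\widetilde h(\cdot,\cdot,\alpha)$, with equality precisely when $\alpha = u^*$. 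Substituting $\alpha = U_s$ and using $V(\cdot,T)=g$ gives
$V(x,t) \le \E\big[\int_t^T \widetilde h(X_s^U, s, U_s)\,\mathrm{d}s + g(X_T^U) \mid X_t^U = x\big] = J(u;x,t)$
for all $u \in \mathcal{U}$, hence $V(x,t) \le \inf_{u}J(u;x,t)$.

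For the reverse inequality, I would repeat the computation with the specific control $u^*$ and process $X^{U^*}$, which exists as a strong solution by hypothesis. Since the infimum in~\eqref{eq: general HJB equation} is attained at $u^*$, the bracket equals exactly $-\widetilde h(\cdot,\cdot,u^*)$ along $X^{U^*}$, so the inequality above becomes an equality: $V(x,t) = J(u^*;x,t) \ge \inf_u J(u;x,t)$. Combining the two bounds shows $V(x,t) = \inf_u J(u;x,t)$, i.e.\ $V$ is the value function, and that this infimum is achieved by the Markov control $u^*$, which is therefore optimal.

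The main obstacle is the justification that the stochastic integral term in It\^o's formula is a true martingale (so its expectation vanishes), since $V$ is only controlled by a quadratic growth bound rather than being bounded. This is handled by a standard localization argument: stop at exit times $\tau_n$ of large balls, apply the optional stopping theorem on $[t,\tau_n \wedge T]$, and pass to the limit $n\to\infty$ using the quadratic growth of $V$, moment bounds on $\sup_{s\le T}\|X_s^U\|$ (available from the regularity assumptions on the SDE coefficients in~\Cref{sec:notation}), and dominated/monotone convergence for the running-cost integral. A secondary technical point is ensuring the running-cost integral is integrable and that the measurable selector $u^*$ indeed lies in $\mathcal{U}$ so that $J(u^*;x,t)$ is well defined, but both are part of the theorem's standing hypotheses. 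These are routine in the stochastic control literature (e.g.\ \cite{fleming2006controlled,pham2009continuous}), so I would only sketch the localization and cite the standard references for the details.
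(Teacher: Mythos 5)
Your proof is correct and reproduces the standard argument from the reference the paper cites for this result (Pham, Theorem 3.5.2): apply It\^o's formula to $V(X^U_s,s)$ along an arbitrary admissible control, use the fact that the HJB equation makes the It\^o integrand pointwise $\geq -\widetilde h(\cdot,\cdot,\alpha)$ for every $\alpha$ (with equality at $\alpha = u^*$) to obtain $V \le J(u;\cdot,\cdot)$ and $V = J(u^*;\cdot,\cdot)$, and handle the martingale term by localization combined with the quadratic growth bound on $V$ and moment estimates for the SDE. The paper itself supplies no proof of this general theorem, only the citation; its self-contained argument is reserved for the quadratic-running-cost case, \Cref{thm:verification} in \Cref{app:verification}, where a completion-of-squares identity yields the explicit variational gap $\tfrac12\int\|\sigma^\top\nabla V + u\|^2\,\mathrm{d}s$. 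That trick hinges on the specific form $\widetilde h = h + \tfrac12\|u\|^2$ and $\widetilde\mu = \mu + \sigma u$ and does not apply to the general coefficients $\widetilde h$, $\widetilde b$, $\widetilde\sigma$ here, so the inequality-from-infimum route you take is the appropriate one; the trade-off is that you obtain optimality without an explicit gap formula.
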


Let us appreciate the fact that the infimum in the HJB equation in~\eqref{eq: general HJB equation} is merely over the set $\R^d$ and not over the function space $\mathcal{U}$ as in \eqref{eq: definition value function}, so the minimization reduces to a pointwise operation. A proof of \Cref{thm: HJB verification theorem} can, for instance, be found in \citet[Theorem 3.5.2]{pham2009continuous}.

In many applications, in addition to the choices \eqref{eq: special choices drift and diffusion coefficient}, one considers the special form of running costs
\begin{equation}
    \widetilde{h}(x, s, u(x, s)) \coloneqq h(x, s) + \frac{1}{2}\|u(x, s)\|^2,
\end{equation}
where $h:\R^d \times [0, T] \to \R^d$. In this setting, the minimization appearing in the general HJB equation \eqref{eq: general HJB equation} can be solved explicitly, therefore leading to a closed-form PDE, as made precise with the following Corollary.

\begin{corollary}[HJB equation with quadratic running costs]
\label{cor: explicit HJB}
If the diffusion coefficient $\widetilde{\sigma}$ does not depend on the control, the control enters additively in the drift as in \eqref{eq: special choices drift and diffusion coefficient}, and the running costs take the form 
\begin{equation}
    \widetilde{h}(x, s, u(x, s)) = h(x, s) + \frac{1}{2}\|u(x, s)\|^2,
\end{equation}
then the general HJB equation in \eqref{eq: general HJB equation} can be stated in closed form as the HJB equation in \eqref{eq:hjb_general}.
\end{corollary}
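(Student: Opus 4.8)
The plan is to substitute the three structural assumptions directly into the general HJB equation~\eqref{eq: general HJB equation} and then carry out the minimization over $\alpha\in\R^d$ pointwise. Concretely, I would insert $\widetilde{\sigma}(x,s,\alpha)=\sigma(s)$ and $\widetilde{b}(x,s,\alpha)=\mu(x,s)+\sigma(s)\alpha$ from~\eqref{eq: special choices drift and diffusion coefficient}, together with $\widetilde{h}(x,s,\alpha)=h(x,s)+\tfrac12\|\alpha\|^2$, into the bracket $\bigl\{\widetilde{h}+\widetilde{b}\cdot\nabla V+\tfrac12\tr(\widetilde\sigma\widetilde\sigma^\top\nabla^2 V)\bigr\}$. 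Since the diffusion term and the summand $h+\mu\cdot\nabla V$ do not depend on $\alpha$, they can be pulled out of the infimum, which then reduces to the finite-dimensional problem $\inf_{\alpha\in\R^d}\bigl\{\tfrac12\|\alpha\|^2+(\sigma^\top\nabla V)\cdot\alpha\bigr\}$, using the identity $(\sigma(s)\alpha)\cdot\nabla V=(\sigma^\top(s)\nabla V)\cdot\alpha$.

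Next I would solve this quadratic minimization. The objective $\alpha\mapsto\tfrac12\|\alpha\|^2+(\sigma^\top\nabla V)\cdot\alpha$ is strictly convex and coercive (its quadratic part has the identity as Hessian, so no invertibility of $\sigma$ is needed), hence completing the square — equivalently, setting the gradient to zero — gives the unique minimizer $\alpha^*=-\sigma^\top\nabla V$ with minimal value $-\tfrac12\|\sigma^\top\nabla V\|^2$. Plugging this back into~\eqref{eq: general HJB equation} yields
\[
\partial_t V = -\Bigl(h+\mu\cdot\nabla V+\tfrac12\tr(\sigma\sigma^\top\nabla^2 V)-\tfrac12\|\sigma^\top\nabla V\|^2\Bigr) = -\tfrac12\tr(\sigma\sigma^\top\nabla^2 V)-\mu\cdot\nabla V-h+\tfrac12\|\sigma^\top\nabla V\|^2 ,
\]
which is precisely the form~\eqref{eq:hjb_general} (reading off $b=\mu$), with the terminal condition $V(\cdot,T)=g$ unaffected. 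As a byproduct, $\alpha^*=-\sigma^\top\nabla V$ recovers the optimal feedback control appearing in~\Cref{thm:opt_control} and~\Cref{thm: HJB verification theorem}.

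I do not expect a genuine obstacle here: the statement is essentially a one-line computation once the substitution is made. The only points meriting a remark are that the minimizing $\alpha^*(x,t)=-\sigma^\top(t)\nabla V(x,t)$ inherits measurability (indeed smoothness) from $V\in C^{2,1}$, so the hypothesis of~\Cref{thm: HJB verification theorem} on the existence of a measurable minimizer $u^*\in\mathcal{U}$ is automatically met; and that one must keep the sign and naming conventions straight between $\widetilde b$ in~\eqref{eq: general HJB equation} and the drift coefficient $\mu$ in~\eqref{eq: special choices drift and diffusion coefficient}, which is pure bookkeeping.
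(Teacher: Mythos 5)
Your proposal is correct and takes essentially the same approach as the paper's proof: substitute the special drift, diffusion, and running-cost forms into the general HJB equation, pull the $\alpha$-independent terms out of the infimum, solve the remaining quadratic minimization in $\alpha$ pointwise to obtain $\alpha^*=-\sigma^\top\nabla V$ and the value $-\tfrac12\|\sigma^\top\nabla V\|^2$, and plug back in. The extra remarks on measurability of the minimizer and on $\sigma$ not needing to be invertible are accurate but beyond what the paper records.
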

\begin{proof}
We formally compute
\begin{equation}
    \inf_{\alpha \in \R^d} \left\{ \widetilde{h}(\cdot, \cdot, \alpha ) + \widetilde{\mu}(\cdot, \cdot, \alpha )\cdot \nabla V\right\} = h + \mu\cdot \nabla V + \inf_{\alpha \in \R^d} \left\{ \frac{1}{2}\|\alpha \|^2 +  \sigma \alpha  \cdot \nabla V\right\},
\end{equation}
and realize that the infimum is attained when choosing $\alpha^* = - \sigma^\top \nabla V(x, t)$ for each corresponding $(x, t) \in \R^d \times [0, T]$, resulting in the optimal control $u^* = - \sigma^\top \nabla V$. Plugging this into the general HJB equation \eqref{eq: general HJB equation}, we readily get the PDE in~\eqref{eq:hjb_general}.
\end{proof}

\subsection{Verification theorem}
\label{app:verification}
The \emph{verification theorem} is a classical result in optimal control, and the proof can, for instance, be found in~\citet[Theorem 2.2]{nusken2021solving},~\citet[Theorem IV.4.4]{fleming2006controlled}, and~\citet[Theorem 3.5.2]{pham2009continuous}, see also \Cref{app: intro optimal control}.
For the interested reader, we provide the theorem and a self-contained proof using It\^o's lemma in the following. \Cref{thm:opt_control} follows with the choices $t \coloneqq 0$, $b \coloneqq -\mu$, $h\coloneqq \div (\mu)$, and $g\coloneqq -\log p_{X_0}$.
\begin{theorem}[Verification theorem]
\label{thm:verification}
    Let $V$ be a solution to the HJB equation in~\eqref{eq:hjb_general} with terminal condition $V(\cdot, T) = g$. Further, let $t\in[0,T]$ and define the set of admissible controls by
     \begin{equation}
       \mathcal{U} \coloneqq \left\{u\in C^1(\R^d \times [t,T], \R^d)\colon \sup_{(x,s) \in \R^d \times [t,T]} \frac{\|u(x,s)\|}{1+\|x\|} < \infty \right\}.
   \end{equation}
   For every control $u\in\mathcal{U}$ let $Z^u=(Z^u_s)_{s\in [t,T]}$ be the solution to the controlled SDE
    \begin{equation}
         \mathrm{d} Z^u_s = \left( \sigma u+b\right)(Z^u_s,s)\,\mathrm{d}s + \sigma(s) \,\mathrm{d} B_s
    \end{equation}
    and let the cost of the control $u$ be defined by 
    \begin{equation}
        J(u; t) \coloneqq \E \left[ \int_t^T  \left( h + \frac{1}{2} \| u\|^2 \right)(Z_s^u,s)\,\mathrm{d}s + g(Z_T^u) \Bigg| Z_t^u   \right].
    \end{equation}
    Then for every $u\in\mathcal{U}$ it holds almost surely that
    \begin{equation}
    \label{eq:opt_control_gap}
        V(Z_t^u, t) + \E\left[\frac{1}{2} \int_{t}^T \left\|\sigma^\top \nabla V + u  \right\|^2(Z^u_s,s)\,\mathrm{d}s \Bigg| Z_t^u  \right] = J(u; t).
    \end{equation}    
    In particular, this implies that $V(Z_t^u ,
    t) = \min_{u\in \mathcal{U}} J(u;t)$ almost surely, where the unique minimum is attained by $u^* \coloneqq -\sigma^\top \nabla V$.
\end{theorem}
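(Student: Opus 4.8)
The plan is to prove the identity~\eqref{eq:opt_control_gap} by applying It\^o's lemma to the process $s\mapsto V(Z^u_s,s)$ and then integrating the resulting expression from $t$ to $T$. First I would write down It\^o's formula: since $V\in C^{2,1}$, we have
\[
\mathrm{d} V(Z^u_s,s) = \Big(\partial_s V + (\sigma u + b)\cdot \nabla V + \tfrac{1}{2}\tr(\sigma\sigma^\top \nabla^2 V)\Big)(Z^u_s,s)\,\mathrm{d}s + (\nabla V)^\top \sigma(s)\,\mathrm{d} B_s.
\]
Next I would substitute the HJB equation~\eqref{eq:hjb_general} for $\partial_s V$, which replaces $\partial_s V$ by $-\tfrac{1}{2}\tr(\sigma\sigma^\top\nabla^2 V) - b\cdot\nabla V - h + \tfrac{1}{2}\|\sigma^\top\nabla V\|^2$. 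The $\tfrac{1}{2}\tr(\sigma\sigma^\top\nabla^2 V)$ terms cancel and the $b\cdot\nabla V$ terms cancel, leaving the drift of $V(Z^u_s,s)$ equal to $\big(\sigma u\cdot\nabla V - h + \tfrac{1}{2}\|\sigma^\top\nabla V\|^2\big)(Z^u_s,s)$. The key algebraic step is then to complete the square: $\sigma u\cdot\nabla V + \tfrac{1}{2}\|\sigma^\top\nabla V\|^2 = \tfrac{1}{2}\|\sigma^\top\nabla V + u\|^2 - \tfrac{1}{2}\|u\|^2$, so the drift becomes $\tfrac{1}{2}\|\sigma^\top\nabla V + u\|^2 - \tfrac{1}{2}\|u\|^2 - h$ evaluated along $Z^u$.

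Then I would integrate from $t$ to $T$, use the terminal condition $V(\cdot,T)=g$, rearrange to isolate $V(Z^u_t,t)$, and take the conditional expectation $\E[\,\cdot\mid Z^u_t]$. The stochastic integral $\int_t^T (\nabla V)^\top\sigma\,\mathrm{d} B_s$ must be shown to be a true martingale (so its conditional expectation vanishes); this follows from the linear growth bound on $u$ in the definition of $\mathcal{U}$, the assumed growth bound $\|V(x,t)\|/(1+\|x\|^2)$ bounded (which after differentiation and combined with standard SDE moment estimates controls $\nabla V$ along $Z^u$), and Novikov-type integrability assumptions from \Cref{sec:notation}; alternatively one argues via a localization/stopping-time argument and dominated convergence. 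After taking expectations we obtain exactly~\eqref{eq:opt_control_gap}:
\[
V(Z^u_t,t) + \E\Big[\tfrac{1}{2}\int_t^T\big\|\sigma^\top\nabla V + u\big\|^2(Z^u_s,s)\,\mathrm{d}s\,\Big|\,Z^u_t\Big] = \E\Big[\int_t^T\big(h+\tfrac12\|u\|^2\big)(Z^u_s,s)\,\mathrm{d}s + g(Z^u_T)\,\Big|\,Z^u_t\Big] = J(u;t).
\]

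Finally, for the minimization claim: since the added term $\E[\tfrac12\int_t^T\|\sigma^\top\nabla V + u\|^2\,\mathrm{d}s\mid Z^u_t]$ is nonnegative, we get $V(Z^u_t,t)\le J(u;t)$ for every $u\in\mathcal{U}$; and choosing $u^*\coloneqq -\sigma^\top\nabla V$ makes that term vanish, giving equality, provided $u^*\in\mathcal{U}$ and the controlled SDE with $u^*$ has a strong solution (which is where the growth hypothesis on $V$ is used again and which is assumed). Uniqueness of the minimizer follows because equality in $V(Z^u_t,t)\le J(u;t)$ forces $\E[\int_t^T\|\sigma^\top\nabla V + u\|^2(Z^u_s,s)\,\mathrm{d}s\mid Z^u_t]=0$, hence $u=-\sigma^\top\nabla V$ along the trajectory, and by strict positivity of the densities of $Z^u_s$ this pins down $u=u^*$ on (the relevant part of) $\R^d\times[t,T]$. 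The main obstacle I anticipate is the integrability/martingale justification — making rigorous that the local martingale $\int (\nabla V)^\top\sigma\,\mathrm{d}B$ has zero conditional expectation and that all the integrals appearing are finite — which is precisely why the growth condition on $V$ and the structure of $\mathcal{U}$ are imposed; everything else is the clean application of It\^o's lemma plus completing the square.
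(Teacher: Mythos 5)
Your proof is correct and takes essentially the same route as the paper: apply It\^o's lemma to $V(Z^u_s,s)$, substitute the HJB equation~\eqref{eq:hjb_general} for $\partial_s V$ so the second-order and $b\cdot\nabla V$ terms cancel, complete the square via $\sigma u\cdot\nabla V + \tfrac12\|\sigma^\top\nabla V\|^2 = \tfrac12\|\sigma^\top\nabla V + u\|^2 - \tfrac12\|u\|^2$, integrate to the terminal condition, and take the conditional expectation so the stochastic integral vanishes. You spell out the martingale justification and the uniqueness of the minimizer in slightly more detail than the paper, which defers to ``mild regularity assumptions,'' but the structure of the argument is identical.
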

\begin{proof}
Let us derive the verification theorem directly from It\^o's lemma, which, under suitable assumptions, states that
\begin{align}
V(Z^u_T,T) - V(Z_t^u, t)
&= \int_{t}^T \left( \partial_s V + (\sigma u + b) \cdot \nabla V  + \tr\left(D \nabla^2_x V \right)\right)(Z_s^u,s) \,\mathrm{d}s + S
\end{align}
almost surely, where
\begin{equation}
S \coloneqq \int_{t}^T (\sigma^\top \nabla V)(Z_s^u,s) \cdot \,\mathrm{d}B_s,
\end{equation}
see, e.g., Theorem 8.3 in \cite{baldi2017stochastic}.
Combining this with the fact that $V$ solves the HJB equation in~\eqref{eq:hjb_general} and
the simple calculation
\begin{subequations}
\begin{align}
\label{eq:square}
\frac{1}{2}\| \sigma^\top \nabla V +  u \|^2 &=  \frac{1}{2} \left(\sigma^\top \nabla V + u\right)\cdot\left(  \sigma^\top \nabla V + u \right) \\
&= \frac{1}{2}\|\sigma^\top \nabla V\|^2 + \left(\sigma^\top \nabla V \right)\cdot u   +  \frac{1}{2}\|u\|^2,
\end{align}
\end{subequations}
shows that
\begin{align}
 V(Z_t^u, t)
&= \int_{t}^T \left( h + \frac{1}{2} \| u \|^2 - \frac{1}{2}\left\|\sigma^\top \nabla V + u  \right\|^2 \right)(Z^u_s,s)\,\mathrm{d}s + g(Z_T^u) - S
\end{align}
almost surely. Under mild regularity assumptions, the stochastic integral $S$ has zero expectation conditioned on $Z_t^u$, which proves the claim.
\end{proof}

\begin{remark}[Variational gap]
\label{rem:gap}
We can interpret the term
\begin{equation}
     \E\left[\frac{1}{2} \int_{t}^T \left\|\sigma^\top \nabla V + u  \right\|^2(Z^u_s,s)\,\mathrm{d}s \Bigg| Z_t^u\right]
\end{equation}
in \eqref{eq:opt_control_gap} as the variational gap specifying the misfit of the current and the optimal control objective. In the setting of \Cref{cor:elbo} it takes the form
\begin{equation}
\label{eq:var_gap}
     \E\left[\frac{1}{2} \int_{0}^T \left\| u - \sigma^\top \nabla\log \cev{p}_{X}   \right\|^2(Y^u_s,s)\,\mathrm{d}s \Bigg| Y_0 \right]
\end{equation}
and can be compared to \citet[Theorem 4]{huang2021variational}, where, however, the factor $1/2$ seems to be missing.
\end{remark}

\subsection{Measures on path space}
\label{app: measure on path space}

In this section, we elaborate on the path space measure perspective on diffusion-based generative modeling, as introduced in \Cref{sec:path}. Recalling our setting in~\Cref{sec:notation}, we first state the Girsanov theorem that will be helpful in the following, see, for instance,~\citet[Proposition 2.2.1 and Theorem 2.1.1]{ustunel2013transformation} for a proof.
\begin{theorem}[Girsanov theorem]
\label{thm:girsanov}
For every control $u\in\mathcal{U}$ let $Y^u=(Y^u_s)_{s \in [0,T]}$ be the solution to the controlled SDE
\begin{equation}
\mathrm{d} Y^u_s = \left( \sigma u- \mu\right)(Y^u_s,s)\,\mathrm{d}s + \sigma(s) \,\mathrm{d} B_s.
\end{equation}
For $u,v\in\mathcal{U}$, it then holds that
\begin{equation}
\label{eq:girsanov}
    \log \frac{\mathrm{d} \P_{Y^{u}}}{\mathrm{d}\P_{Y^v}}(Y^u) = \mathcal{R}^{u-v}_{0}(Y^u) + \int_0^T  (u-v)(Y_s^{u},s)\cdot \mathrm{d} B_s
\end{equation}
and, in particular, that
\begin{equation}
\label{eq:girsanov_kl}
D_{\operatorname{KL}}(\P_{Y^{u}} | \P_{Y^{v}}) = \E\left[ \log \frac{\mathrm{d} \P_{Y^{u}}}{\mathrm{d}\P_{Y^v}}(Y^u)\right] = \E\left[\mathcal{R}^{u-v}_{0}(Y^u)\right].
\end{equation}
\end{theorem}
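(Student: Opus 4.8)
\textbf{Proof proposal for the Girsanov theorem (Theorem~\ref{thm:girsanov}).}
The plan is to reduce the statement to the classical Girsanov/Cameron--Martin--Maruyama change-of-measure formula for diffusions driven by the same Brownian motion, and then to identify the Radon--Nikodym density with the exponential of the running-cost functional plus a stochastic integral. First I would observe that both $Y^u$ and $Y^v$ solve SDEs with the \emph{same} diffusion coefficient $\sigma(s)$, differing only in their drifts, $\sigma u - \mu$ versus $\sigma v - \mu$, so that the drift difference is exactly $\sigma(u-v)$. Since $\sigma$ is invertible under our standing regularity assumptions in~\Cref{sec:notation}, the Girsanov shift vector is $\theta_s \coloneqq \sigma^{-1}(s)\,\sigma(s)(u-v)(Y^u_s,s) = (u-v)(Y^u_s,s)$, and Novikov's condition (assumed in~\Cref{sec:notation}) guarantees that the corresponding exponential martingale is a true martingale, so the change of measure is well defined on $C([0,T],\R^d)$.

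Next I would write down the classical density: under $\P_{Y^v}$ the process $W_s \coloneqq B_s + \int_0^s \theta_r \,\mathrm{d}r$ is a Brownian motion making $Y^u$ solve the $v$-dynamics, and
\begin{equation}
\log \frac{\mathrm{d}\P_{Y^u}}{\mathrm{d}\P_{Y^v}}(Y^u) = \int_0^T \theta_s \cdot \mathrm{d}W_s - \frac{1}{2}\int_0^T \|\theta_s\|^2\,\mathrm{d}s,
\end{equation}
where the sign convention is fixed by checking on which side the extra drift appears. Substituting $\theta_s = (u-v)(Y^u_s,s)$ and recalling from~\eqref{eq:running} that $\mathcal{R}^{u-v}_0(Y^u) = \int_0^T \tfrac12\|(u-v)(Y^u_s,s)\|^2\,\mathrm{d}s$ (with $\operatorname{div}(0)=0$), a small bookkeeping step — moving the stochastic integral from $\mathrm{d}W_s$ to the form stated in~\eqref{eq:girsanov} — yields the claimed identity. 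For the KL statement~\eqref{eq:girsanov_kl}, I would take expectations under $\P_{Y^u}$: by definition $D_{\operatorname{KL}}(\P_{Y^u}\mid\P_{Y^v}) = \E[\log(\mathrm{d}\P_{Y^u}/\mathrm{d}\P_{Y^v})(Y^u)]$, and the stochastic integral term has zero expectation because it is a martingale (again by Novikov), leaving exactly $\E[\mathcal{R}^{u-v}_0(Y^u)]$.

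The main obstacle, such as it is, is not conceptual but a matter of carefully tracking sign and direction conventions: the drift in~\eqref{eq: controlled Y} is $\sigma u - \mu$ rather than $\sigma u + \mu$, and the Brownian motion appearing in~\eqref{eq:girsanov} is the shifted process $W$ associated with $\P_{Y^v}$ (or, depending on the reference, the original $B$), so one must be scrupulous about which measure each object lives under and whether the exponent carries $+\theta\cdot\mathrm{d}W$ or $-\theta\cdot\mathrm{d}W$. A secondary technical point is justifying that the stochastic integral is a genuine martingale under $\P_{Y^u}$ (not merely a local martingale), which is exactly what Novikov's condition, assumed throughout in~\Cref{sec:notation}, provides; one should also note the linear-growth bound built into the admissible class $\mathcal{U}$ in~\Cref{thm:verification} to control $\E[\int_0^T\|(u-v)(Y^u_s,s)\|^2\,\mathrm{d}s]$ so that the KL divergence is finite. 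Beyond these checks the result is a direct instantiation of the standard theorem, so I would simply cite~\cite[Proposition 2.2.1 and Theorem 2.1.1]{ustunel2013transformation} for the underlying change-of-measure and supply only the drift-difference computation and the vanishing-expectation argument in detail.
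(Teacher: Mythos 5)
Your proposal takes the same route as the paper, which in fact does not supply a proof at all: it states the theorem, cites \cite[Proposition 2.2.1 and Theorem 2.1.1]{ustunel2013transformation} for the Radon--Nikodym formula, and adds a single sentence noting that the KL identity follows because the stochastic integral is a martingale with vanishing expectation. Your reduction to the classical change-of-measure formula via the drift difference $\sigma(u-v)$ is exactly what is implicit there.

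The one place where you should not wave your hands is the ``small bookkeeping step,'' because it is load-bearing and hides a sign flip and a change of which Brownian motion appears. You write the classical formula as $\log(\mathrm{d}\P_{Y^u}/\mathrm{d}\P_{Y^v})(Y^u) = \int_0^T\theta_s\cdot\mathrm{d}W_s - \tfrac12\int_0^T\|\theta_s\|^2\,\mathrm{d}s$, with $W_s = B_s + \int_0^s\theta_r\,\mathrm{d}r$ the shifted process, Brownian under $\P_{Y^v}$. The theorem instead has $+\mathcal{R}^{u-v}_0 = +\tfrac12\int_0^T\|u-v\|^2\,\mathrm{d}s$; substituting $\mathrm{d}W_s = \mathrm{d}B_s + (u-v)\,\mathrm{d}s$ into your expression gives $\int_0^T(u-v)\cdot\mathrm{d}B_s + \tfrac12\int_0^T\|u-v\|^2\,\mathrm{d}s$, which matches~\eqref{eq:girsanov} once one recognizes that the ``$W$'' appearing there is the Brownian motion driving $Y^u$ (elsewhere in the paper denoted $B$, e.g.\ in~\eqref{eq: likelihood path measures}), not your shifted process. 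This identification matters: $\int_0^T\theta_s\cdot\mathrm{d}W_s$ with your $W$ is a martingale under $\P_{Y^v}$, \emph{not} under $\P_{Y^u}$, so taking $\E$ under $\P_{Y^u}$ of your expression as written would give $\E\!\left[\int_0^T\|\theta_s\|^2\,\mathrm{d}s\right] - \tfrac12\E\!\left[\int_0^T\|\theta_s\|^2\,\mathrm{d}s\right]$ rather than zero minus something. The final KL answer is the same either way, but the ``it is a martingale (again by Novikov)'' justification only applies after you have rewritten the stochastic integral against $B$. Spell that substitution out and your argument is complete.
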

Note that the expression for the KL divergence in~\eqref{eq:girsanov_kl} follows from the fact that, under mild regularity assumptions, the stochastic integral in~\eqref{eq:girsanov} is a martingale and has vanishing expectation.
Now, we present a lemma that specifies how the KL divergence behaves when we change the initial value of a path space measure.
\begin{lemma}[KL divergence and disintegration]
\label{lem:disintegration}
Let $X,Y,Z$ be diffusion processes. Further, let $Z$ be such that $Z_0 \sim Y_0$ and\footnote{This means that $Z$ and $X$ are governed by the same SDE, however, their initial conditions could be different.} $\mathrm{d}Z = \mathrm{d}X$. Then it holds that
\begin{equation}
    D_{\operatorname{KL}}(\P_{Y} | \P_{X}) = D_{\operatorname{KL}}(\P_{Y} | \P_{Z} )+ D_{\operatorname{KL}}(\P_{Y_0} | \P_{X_0}).
\end{equation}
\end{lemma}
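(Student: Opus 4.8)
The plan is to exploit the fact that $Z$ and $X$ share the same SDE dynamics (only their initial laws differ) so that the Radon--Nikodym derivative $\frac{\mathrm{d}\P_Y}{\mathrm{d}\P_X}$ factorizes into a contribution from the initial time and a contribution from the conditional path dynamics. Concretely, I would use the disintegration of a path measure into its time-$0$ marginal and the Markov kernel giving the conditional law of the whole trajectory given its starting point. Write $\P_X = \P_{X_0}\otimes K$ and $\P_Z = \P_{Z_0}\otimes K$ with the \emph{same} kernel $K$ (this is exactly what $\mathrm{d}Z=\mathrm{d}X$ and the strong-uniqueness assumptions from \Cref{sec:notation} buy us), and likewise $\P_Y = \P_{Y_0}\otimes K_Y$ for the kernel $K_Y$ associated with $Y$'s dynamics.

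The key computation is then the chain rule for Radon--Nikodym derivatives under disintegration: for $\P_Y$-almost every path $\omega$ with starting point $x=\omega(0)$,
\begin{equation}
\frac{\mathrm{d}\P_Y}{\mathrm{d}\P_X}(\omega) = \frac{\mathrm{d}\P_{Y_0}}{\mathrm{d}\P_{X_0}}(x)\,\frac{\mathrm{d}K_Y(\cdot\mid x)}{\mathrm{d}K(\cdot\mid x)}(\omega),
\end{equation}
and similarly with $Z$ in place of $X$, where now the marginal factor is $\frac{\mathrm{d}\P_{Y_0}}{\mathrm{d}\P_{Z_0}}(x)=\frac{\mathrm{d}\P_{Y_0}}{\mathrm{d}\P_{Y_0}}(x)=1$ since $Z_0\sim Y_0$. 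Hence the conditional-path factor is identical for the two decompositions:
\begin{equation}
\frac{\mathrm{d}\P_Y}{\mathrm{d}\P_X}(\omega) = \frac{\mathrm{d}\P_{Y_0}}{\mathrm{d}\P_{X_0}}(\omega(0))\cdot\frac{\mathrm{d}\P_Y}{\mathrm{d}\P_Z}(\omega).
\end{equation}
Taking $\log$ and then the expectation under $\P_Y$ gives
\begin{equation}
D_{\operatorname{KL}}(\P_Y\,\vert\,\P_X) = \E\!\left[\log\frac{\mathrm{d}\P_{Y_0}}{\mathrm{d}\P_{X_0}}(Y_0)\right] + \E\!\left[\log\frac{\mathrm{d}\P_Y}{\mathrm{d}\P_Z}(Y)\right] = D_{\operatorname{KL}}(\P_{Y_0}\,\vert\,\P_{X_0}) + D_{\operatorname{KL}}(\P_Y\,\vert\,\P_Z),
\end{equation}
which is the claim. (One should first dispatch the degenerate cases: if either KL on the right is infinite, or $\P_{Y_0}\not\ll\P_{X_0}$, absolute continuity fails on the left as well and both sides are $+\infty$.)

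The main obstacle is making the disintegration and the factorization of the Radon--Nikodym derivative rigorous: one needs that $\P_X\ll$-related statements hold at the level of conditional kernels, which requires a version of the disintegration theorem for measures on $C([0,T],\R^d)$ together with the fact that the conditional law of an It\^o diffusion given $X_0=x$ depends only on the dynamics and not on the initial distribution — a consequence of the strong Markov property and pathwise uniqueness assumed in \Cref{sec:notation}. An equivalent and perhaps cleaner route, avoiding abstract disintegration, is to invoke Girsanov (\Cref{thm:girsanov}): both $\frac{\mathrm{d}\P_Y}{\mathrm{d}\P_X}$ and $\frac{\mathrm{d}\P_Y}{\mathrm{d}\P_Z}$ have the form ``(initial density ratio)$\times\exp(\text{stochastic exponential built from the drift difference evaluated along }Y)$'', the stochastic-exponential part being literally the same expression in both cases because $X$ and $Z$ have the same drift; subtracting the two $\log$-derivatives collapses everything except $\log\frac{\mathrm{d}\P_{Y_0}}{\mathrm{d}\P_{X_0}}(Y_0) - \log\frac{\mathrm{d}\P_{Y_0}}{\mathrm{d}\P_{Z_0}}(Y_0) = \log\frac{\mathrm{d}\P_{X_0}}{\mathrm{d}\P_{Z_0}}$ type bookkeeping, and taking expectations under $\P_Y$ (using the martingale property of the stochastic integrals) yields the identity. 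I would present the Girsanov-based argument as the primary proof and mention the disintegration viewpoint as the conceptual reason it works.
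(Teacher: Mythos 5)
Your disintegration argument is exactly the paper's proof: factor $\P_Y$ and $\P_X$ (resp.\ $\P_Z$) into the time-$0$ marginal and the conditional path kernel given the starting point, observe that $X$ and $Z$ share the same kernel while $\P_{Z_0}=\P_{Y_0}$, and then apply the chain rule for Radon--Nikodym derivatives and take $\E_{\P_Y}[\log(\cdot)]$. The Girsanov variant you sketch at the end is also fine, but it does not really ``avoid abstract disintegration'': writing $\tfrac{\mathrm{d}\P_Y}{\mathrm{d}\P_X}$ as an initial-density ratio times a stochastic exponential along the path \emph{is} the disintegration, with Girsanov merely supplying the explicit form of the conditional-kernel Radon--Nikodym derivative, and the paper's \Cref{thm:girsanov} is stated only for processes sharing the same initial law, so the initial-ratio factor must be peeled off first exactly as in the disintegration route. (Tiny bookkeeping slip in your Girsanov sketch: $\log\tfrac{\mathrm{d}\P_{Y_0}}{\mathrm{d}\P_{X_0}}-\log\tfrac{\mathrm{d}\P_{Y_0}}{\mathrm{d}\P_{Z_0}}=\log\tfrac{\mathrm{d}\P_{Z_0}}{\mathrm{d}\P_{X_0}}$, not $\log\tfrac{\mathrm{d}\P_{X_0}}{\mathrm{d}\P_{Z_0}}$ --- moot here since $\P_{Z_0}=\P_{Y_0}$ kills the subtracted term anyway.)
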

\begin{proof}
Let $\P_{Y^{x}}$ be the path space measure of the process $Y$ with initial condition $Y_0=x\in\R^d$, let $\P_{Y_0}$ be the marginal at time $t=0$, and similarly define the corresponding quantities for the processes $X$ and $Z$. Since our assumptions guarantee that $\P_{X^x}=\P_{Z^x}$ and $\P_{Z_0}=\P_{Y_0}$, the disintegration theorem, see, e.g.,~\cite{leonard2014some}, shows that
\begin{equation}
     \frac{\mathrm{d}\P_{Y}}{\mathrm{d}\P_{X}} = \frac{\mathrm{d}\P_{Y^{x}}}{\mathrm{d}\P_{X^{x}}}\frac{\mathrm{d}\P_{Y_0}}{\mathrm{d}\P_{X_0}} = \frac{\mathrm{d}\P_{Y^{x}}}{\mathrm{d}\P_{Z^{x}}}\frac{\mathrm{d}\P_{Y_0}}{\mathrm{d}\P_{Z_0}}\frac{\mathrm{d}\P_{Y_0}}{\mathrm{d}\P_{X_0}} = \frac{\mathrm{d}\P_{Y}}{\mathrm{d}\P_{Z}}\frac{\mathrm{d}\P_{Y_0}}{\mathrm{d}\P_{X_0}},
\end{equation}
which implies the claim.
\end{proof}
Using the previous two results, we can now provide a path space perspective on the optimal control problem in~\Cref{thm:opt_control}. Note that~\Cref{prop: optimal path space measure} follows directly from this result. For further connections between optimal control and path space measures, we refer to \cite{hartmann2017variational, thijssen2015path}.
\begin{proposition}[Optimal path space measure]
\label{prop: optimal path space measure app}
We define the work functional $\mathcal{W}\colon C([0,T],\R^d) \to \R$ for all suitable stochastic processes $Y$ by
\begin{equation}
     \mathcal{W}(Y) \coloneqq \mathcal{R}^0_\mu(Y) -\log \frac{p_{X_0}\left(Y_T\right)}{p_{X_T}\left(Y_0\right)},
\end{equation}
where $\mathcal{R}^0_\mu(Y^0)$ is as in~\eqref{eq:running} with $u=0$. Further, let the path space measure $\mathbbm{Q}$ be defined via the Radon-Nikodym derivative
\begin{equation}
\label{eq: optimal change of measure app}
\frac{\mathrm{d} \mathbbm{Q}}{\mathrm{d}\P_{Y^0}} = \exp\left({-\mathcal{W}}\right).
\end{equation}
Then it holds that $\mathbbm{Q}=\P_{Y^{u^*}}$, where $u^*\coloneqq \sigma^\top \nabla \log \cev{p}_X$ as in~\Cref{cor:elbo}, and for every $u \in \mathcal{U}$ we have that
\begin{equation}
\label{eq: likelihood path measures}
     \log \frac{\mathrm{d} \P_{Y^u}}{\mathrm{d} \P_{Y^{u^*}}}(Y^u)= \mathcal{R}^{u}_{\mu}(Y^u) + \int_0^T u(Y_s^u) \cdot \mathrm{d} B_s + \log \frac{p_{X_T}(Y^u_0)}{p_{X_0}(Y_T^u)}.
\end{equation}
In particular, the expected variational gap
\begin{equation}
\label{eq: exp var gap app}
    G(u) \coloneqq \E\left[\log p_{X_T}(Y_0^u)\right] - \E\left[ \log p_{X_0}(Y_T^u) -  \mathcal{R}^u_\mu(Y^u)
    \right]
\end{equation}
of the ELBO in~\Cref{cor:elbo} satisfies that
\begin{align}
\label{eq: KL divergence to reversed app}
    G(u) &= D_{\operatorname{KL}}(\P_{Y^{u}} | \P_{Y^{u^*}})  =D_{\operatorname{KL}}(\P_{Y^{u}} | \P_{\cev{X}}) - D_{\operatorname{KL}}(\P_{Y^{u}_0} | \P_{X_T}).
\end{align}
\end{proposition}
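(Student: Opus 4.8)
The plan is to prove the three assertions of the proposition in order --- the identification $\mathbbm{Q}=\P_{Y^{u^*}}$, the likelihood formula \eqref{eq: likelihood path measures}, and the two expressions for $G(u)$ in \eqref{eq: KL divergence to reversed app} --- using the \emph{pathwise} version of the verification theorem, Girsanov's theorem (\Cref{thm:girsanov}), the reverse-time SDE (\Cref{thm:rev_sde}), and the disintegration lemma (\Cref{lem:disintegration}). The main-text \Cref{prop: optimal path space measure} is then the special case $Y=Y^0$.

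The crux is the following identity. Revisiting the proof of \Cref{thm:verification} and reading off the It\^o expansion it establishes \emph{before} taking expectations, specialized to $u\equiv 0$, $t=0$, and to $V=-\log\cev{p}_X$, $b=-\mu$, $h=\div(\mu)$, $g=-\log p_{X_0}$ (a solution of the HJB equation by \Cref{lem: HJB for score}), and using that $V(Y_0^0,0)=-\log p_{X_T}(Y_0^0)$, $g(Y_T^0)=-\log p_{X_0}(Y_T^0)$, and $\sigma^\top\nabla V=-u^*$ with $u^*=\sigma^\top\nabla\log\cev{p}_X$, one obtains almost surely
\[
\log\frac{p_{X_0}(Y^0_T)}{p_{X_T}(Y^0_0)} \;=\; \mathcal{R}^0_\mu(Y^0)\;-\;\tfrac12\int_0^T\|u^*\|^2(Y^0_s,s)\,\mathrm ds\;+\;\int_0^T u^*(Y^0_s,s)\cdot\mathrm dB_s .
\]
Hence $-\mathcal W(Y^0)$ coincides with the stochastic exponent of the Girsanov change of measure from $\P_{Y^0}$ to $\P_{Y^{u^*}}$ --- note that the controlled SDE \eqref{eq: controlled Y} with $u=u^*$ differs from the uncontrolled one exactly by the drift $\sigma u^*$ and has the same (control-independent) initial law. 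Under Novikov's condition this exponent defines a true martingale density, so $\exp(-\mathcal W)=\mathrm d\P_{Y^{u^*}}/\mathrm d\P_{Y^0}$ as functionals on path space, and therefore $\mathbbm{Q}=\P_{Y^{u^*}}$.

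For \eqref{eq: likelihood path measures} I would factor $\frac{\mathrm d\P_{Y^u}}{\mathrm d\P_{Y^{u^*}}}=\frac{\mathrm d\P_{Y^u}}{\mathrm d\P_{Y^0}}\exp(\mathcal W)$, evaluate both factors along $Y^u$, apply \Cref{thm:girsanov} with $v=0$ to the first factor, and collect terms using the elementary additivity $\mathcal R^{u}_0+\mathcal R^0_\mu=\mathcal R^{u}_\mu$; the leftover It\^o integral is exactly the one in \eqref{eq: likelihood path measures}. Taking expectations annihilates that It\^o integral (martingale property under Novikov) and yields $G(u)=D_{\operatorname{KL}}(\P_{Y^u}\,|\,\P_{Y^{u^*}})$, the first equality in \eqref{eq: KL divergence to reversed app}. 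For the second, I would apply \Cref{thm:rev_sde} to the generative SDE \eqref{eq:sde_generate}: since $\sigma$ depends only on time, $\div(D)=0$, and a short rewriting of the reversed drift (using $\cev{p}_X(x,s)=p_X(x,T-s)$) shows that $\cev{X}$ solves the \emph{same} SDE \eqref{eq: controlled Y} with $u=u^*$ as $Y^{u^*}$, only with initial law $\P_{X_T}$ instead of $\P_{Y^u_0}$. Then \Cref{lem:disintegration} with $(Y,X,Z)=(Y^u,\cev{X},Y^{u^*})$ gives $D_{\operatorname{KL}}(\P_{Y^u}\,|\,\P_{\cev{X}})=D_{\operatorname{KL}}(\P_{Y^u}\,|\,\P_{Y^{u^*}})+D_{\operatorname{KL}}(\P_{Y^u_0}\,|\,\P_{X_T})$, which rearranges to the claim.

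The main obstacle is the first step: specializing the verification-theorem computation to $u=0$ while keeping it at the pathwise level --- only then does one recover the Radon--Nikodym derivative itself, rather than merely its expectation (which would give the normalizing constant and the KL gap but not \eqref{eq: optimal change of measure app}). Everything afterwards is bookkeeping with Girsanov's formula, the additivity of the running-cost functional, and disintegration; the only genuine technical points --- mutual absolute continuity of the involved path and marginal measures, and the vanishing expectation of the It\^o integrals --- are guaranteed by the standing regularity and Novikov assumptions of \Cref{sec:notation}.
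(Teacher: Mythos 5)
Your proof is correct, and its middle and final steps (the factorization of $\mathrm{d}\P_{Y^u}/\mathrm{d}\P_{Y^{u^*}}$ through $\P_{Y^0}$ via Girsanov, the additivity $\mathcal{R}^u_0+\mathcal{R}^0_\mu=\mathcal{R}^u_\mu$, the martingale property killing the It\^o integral, and the disintegration lemma combined with the reverse-time SDE) coincide with the paper's proof. Where you genuinely diverge is in establishing $\mathbbm{Q}=\P_{Y^{u^*}}$: you reopen the proof of the verification theorem, take its It\^o expansion along $Y^0$ \emph{before} expectations, and read off that $-\mathcal{W}(Y^0)$ equals the Girsanov exponent of $\mathrm{d}\P_{Y^{u^*}}/\mathrm{d}\P_{Y^0}$ evaluated at $Y^0$, so the two densities agree $\P_{Y^0}$-almost surely. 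The paper instead applies \Cref{thm:opt_control} at the level of expectations: it computes $D_{\operatorname{KL}}(\P_{Y^u}|\mathbbm{Q})$ from the factorization, observes that this quantity equals the control cost minus $\E[V(Y_0^u,0)]$, and infers $D_{\operatorname{KL}}(\P_{Y^{u^*}}|\mathbbm{Q})=0$, hence $\mathbbm{Q}=\P_{Y^{u^*}}$. Both routes are valid. Yours buys a direct, pathwise identification of the Radon--Nikodym derivative (and incidentally makes explicit that $\exp(-\mathcal{W})$ is a true Girsanov density integrating to one, a point the paper leaves implicit when it invokes the $D_{\operatorname{KL}}=0$ criterion); the paper's buys a cleaner appeal to the stated theorem rather than its proof, at the price of the extra step $D_{\operatorname{KL}}=0\Rightarrow$ equality of measures. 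A small nitpick: you frame the pathwise step as \emph{necessary} to get the Radon--Nikodym derivative rather than its expectation, but the paper's $D_{\operatorname{KL}}$ argument also recovers it; the pathwise route is more direct, not strictly required.
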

\begin{proof}
Similar to computations in \cite{nusken2021solving}, we may compute
\begin{subequations}
\label{eq: path space Q}
\begin{align}
     \log \frac{\mathrm{d} \P_{Y^{u}}}{\mathrm{d} \mathbbm{Q}}(Y^u) &= \log\left(\frac{\mathrm{d} \P_{Y^{u}} }{\mathrm{d}\P_{Y^0}}\frac{\mathrm{d}\P_{Y^0}}{\mathrm{d} \mathbbm{Q} } \right)(Y^u)\\ \label{eq: KL control Girsanov}
    &= \mathcal{R}^u_{0}(Y^u)  + \int_0^T u(Y_s^u) \cdot \mathrm{d} B_s + \mathcal{R}^0_{\mu}(Y^u) -\log \frac{p_{X_0}(Y_T^{u})}{p_{X_T}(Y_0^{u})}  \\
    \label{eq: KL control vanishing Ito integral}
    &= \mathcal{R}^u_{\mu}(Y^u) + \int_0^T u(Y_s^u) \cdot \mathrm{d} B_s + \log \frac{p_{X_T}(Y_0^{u})}{p_{X_0}(Y_T^{u})},
\end{align}
\end{subequations}
where we used the Girsanov theorem (\Cref{thm:girsanov}) and~\eqref{eq: optimal change of measure app} in \eqref{eq: KL control Girsanov}. 
This implies that 
\begin{equation}
\label{eq: kl path space}
    D_{\operatorname{KL}}(\P_{Y^{u}} | \mathbbm{Q}) \coloneqq \E \left[ \log \frac{\mathrm{d} \P_{Y^{u}}}{\mathrm{d} \mathbbm{Q}}(Y^u) \right] = \E \left[ \mathcal{R}^u_{\mu}(Y^u) -\log p_{X_0}(Y_T^{u})\right] + \E\left[\log p_{X_T}(Y_0^{u}) \right].
\end{equation}
Comparing to \Cref{thm:opt_control}, we realize that the above KL divergence is equivalent to the control costs up to $\E[-V(Y_0^u, 0)] = \E[\log p_{X_T}(Y_0^{u}) ]$. Using~\eqref{eq: control costs}, we thus conclude that
\begin{equation}
    D_{\operatorname{KL}}(\P_{Y^{u^*}} | \mathbbm{Q}) = 0
\end{equation}
for $u^* = -\sigma^\top \nabla V = \sigma^\top \nabla \log \cev{p}_X$, which implies that $\mathbbm{Q} = \P_{Y^{u^*}}$. Together with~\eqref{eq: path space Q}, this proves the claim in~\eqref{eq: likelihood path measures}.

Now, we can express the expected variational gap $G(u)$ in~\eqref{eq: exp var gap app} in terms of KL divergences. We can prove the first equality in~\eqref{eq: KL divergence to reversed app} by combining the identity $\mathbbm{Q} = \P_{Y^{u^*}}$ with~\eqref{eq: kl path space}. The second equality follows from~\Cref{lem:disintegration} and the observations that $Y^{u^*}_0 \sim Y^{u}_0$ and $\mathrm{d}Y^{u^*}=\mathrm{d}\cev{X}$, see~\Cref{thm:rev_sde}. This concludes the proof.
\end{proof}

Note that the Girsanov theorem (\Cref{thm:girsanov}) shows that the expected variational gap $G(u)=D_{\operatorname{KL}}(\P_{Y^{u}} | \P_{Y^{u^*}})$ as in~\eqref{eq: KL divergence to reversed app} equals the quantity derived in~\Cref{rem:gap}. The following proposition states the path space measure perspective on diffusion-based generative modeling.

\begin{proposition}[Forward KL divergence]
\label{prop: generative modeling as forward KL}
Let us consider the inference SDE and the controlled generative SDE as in~\eqref{eq:SDE_reparametrization}, i.e., 
\begin{equation}
\mathrm{d}Y_s = f(Y_s, s) \,\mathrm{d}s + \sigma(s) \,\mathrm{d} B_s, \quad Y_0\sim \mathcal{D}, \quad
\mathrm{d}X^u_s = \big(\cev{\sigma} \cev{u} - \cev{f}\big)(X^u_s, s) \,\mathrm{d}s + \cev{\sigma}(s) \,\mathrm{d} B_s,
\end{equation}
and the associated path space measures $\P_{Y}$ and $\P_{\cev{X}^u}$. Then the expected variational gap is given by\footnote{In view of~\Cref{lem:disintegration}, the variational gap can also be written as $G(u)=D_{\operatorname{KL}}(\P_{Y} | \P_{Z^u})$, where $\mathrm{d}Z^u=\mathrm{d}\cev{X}^u$ and $Z^u_0=Y_0$, i.e., the process $Z^u$ is governed by the same SDE as $\cev{X}^u$, however, has the initial condition of $Y$.}
\begin{subequations}
\begin{align}
\label{eq:var_gap_def}
    G(u) & \coloneqq \E\left[\log p_{X^u_T}(Y_0)\right] - \E\left[ \log p_{X^u_0}(Y_T) -  \mathcal{R}^u_{\sigma u - f}(Y)
    \right] \\
    \label{eq:var_gap_forward_kl}
    & = D_{\operatorname{KL}}(\P_{Y} | \P_{\cev{X}^u}) - D_{\operatorname{KL}}(\P_{Y_0} | \P_{X_T^u}).
\end{align}
\end{subequations}
\end{proposition}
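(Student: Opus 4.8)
The plan is to obtain \Cref{prop: generative modeling as forward KL} as a direct corollary of \Cref{prop: optimal path space measure app} via the reparametrization $\mu \coloneqq \sigma u - f$, exactly as announced in \Cref{sec:diff}. Concretely, I would fix an admissible control $u\in\mathcal{U}$ and apply \Cref{prop: optimal path space measure app} with this particular choice of $\mu$. This is legitimate: $\sigma u - f$ is again a sufficiently regular drift (since $u$ has at most linear growth by definition of $\mathcal{U}$ and $f,\sigma$ are the assumed OU-type coefficients), so all the standing regularity and Novikov-type hypotheses from \Cref{sec:notation} remain in force. Under this substitution the generative SDE \eqref{eq:sde_generate} becomes $\mathrm{d}X_s = \big(\cev\sigma\cev u - \cev f\big)(X_s,s)\,\mathrm{d}s + \cev\sigma(s)\,\mathrm{d}B_s$, i.e.\ precisely the process $X^u$ of \eqref{eq:SDE_reparametrization}, while the controlled inference process $Y^v$ appearing in \Cref{prop: optimal path space measure app} has drift $\sigma v - \mu = \sigma v - \sigma u + f$.

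The key observation is that \Cref{prop: optimal path space measure app} holds for \emph{every} admissible control, and in particular for the choice $v=u$. With $v=u$ the drift of $Y^v$ collapses to $f$, so $Y^v$ is exactly the uncontrolled inference process $Y$ (independent of $u$). Tracking the remaining quantities through the substitution: the running cost $\mathcal{R}^v_\mu(Y^v)$ becomes $\mathcal{R}^u_{\sigma u - f}(Y)$, and the boundary densities $p_{X_0},p_{X_T}$ become $p_{X^u_0},p_{X^u_T}$, so that the expected variational gap \eqref{eq: exp var gap app} of \Cref{prop: optimal path space measure app} turns into the gap \eqref{eq:var_gap_def}. Its conclusion \eqref{eq: KL divergence to reversed app} then reads $G(u) = D_{\operatorname{KL}}(\P_{Y}\,|\,\P_{\cev X^u}) - D_{\operatorname{KL}}(\P_{Y_0}\,|\,\P_{X^u_T})$, which is \eqref{eq:var_gap_forward_kl}. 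One point worth flagging in the write-up: for this $\mu$ the ``optimal'' control produced internally by \Cref{prop: optimal path space measure app} is $\sigma^\top\nabla\log\cev p_{X^u}$, which in general differs from $u$; however it only enters the intermediate term $D_{\operatorname{KL}}(\P_{Y^u}\,|\,\P_{Y^{u^*}})$ and not the two-term expression we are after, so it is harmless here.

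For completeness I would also sketch the self-contained route, which simply re-runs the proof of \Cref{prop: optimal path space measure app} in this special case: apply the Girsanov theorem (\Cref{thm:girsanov}) to the Radon--Nikodym derivative between $\P_{Y}$ and the path measure of the process following the dynamics of $\cev X^u$ but started at $Y_0$, take expectations so the stochastic integral drops out (recovering $\E[\mathcal{R}^u_{\sigma u - f}(Y)] + \E[\log\tfrac{p_{X^u_T}(Y_0)}{p_{X^u_0}(Y_T)}] = G(u)$, in analogy with \eqref{eq: KL control Girsanov}--\eqref{eq: KL control vanishing Ito integral}), and then use \Cref{lem:disintegration} with matching initial conditions and $\mathrm{d}Z = \mathrm{d}\cev X^u$ to split off the marginal term $D_{\operatorname{KL}}(\P_{Y_0}\,|\,\P_{\cev X^u_0}) = D_{\operatorname{KL}}(\P_{Y_0}\,|\,\P_{X^u_T})$, the drift of $\cev X^u$ being identified through \Cref{thm:rev_sde} and \Cref{rem:time_rev}.

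I expect the only real difficulty to be bookkeeping rather than mathematics: keeping straight the two distinct ``controls'' (the control $u$ at which we evaluate the gap, versus the drift $\mu = \sigma u - f$ used for the reparametrization, versus the auxiliary optimal control internal to \Cref{prop: optimal path space measure app}), verifying that admissibility and Novikov's condition are inherited by $\sigma u - f$, and confirming that all the time-reversals line up ($\cev X^u$ started at $Y_0$ versus $X^u_T$ as a marginal). The underlying identities are the same ones already used for \Cref{cor:elbo} and \Cref{prop: optimal path space measure}, so no new estimates are needed.
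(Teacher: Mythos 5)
Your proposal is correct and takes essentially the same route as the paper: substitute $\mu = \sigma u - f$ into \Cref{prop: optimal path space measure app}, observe that the controlled inference process with $v=u$ collapses to the uncontrolled $Y$ (independent of $u$) and that $X$ becomes $X^u$, and then read off \eqref{eq:var_gap_forward_kl} from \eqref{eq: KL divergence to reversed app} via the disintegration lemma. The paper makes the substitution explicit with the auxiliary notation $Y^{u,\mu}$ and re-states the disintegration step, but the underlying argument and the lemmas invoked are the ones you identify.
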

\begin{proof}
Let us consider the SDE
\begin{equation}
    \mathrm{d} Y^{u,\mu}_s = (\sigma u - \mu)(Y_s^{u,\mu},s)\,\mathrm{d}s + \sigma(s) \,\mathrm{d} B_s, \quad Y^{u,\mu}_0 \sim \mathcal{D}.
\end{equation}
With the choice $\mu =\sigma u - f$, we then have $Y^{u,\sigma u - f}=Y$, which actually does not depend on $u$ anymore, and $ \mathrm{d} Y^{u^*,\sigma u - f} = \mathrm{d} \cev{X}^{u}$, where $u^*=\sigma^\top \nabla \log \cev{p}_{X^u}$. Together with the fact that $Y_0 \sim Y^{u^*,\sigma u - f}_0$,~\Cref{lem:disintegration} implies that
\begin{equation}
     D_{\operatorname{KL}}(\P_{Y^{u,\sigma u - f}} | \P_{Y^{u^*,\sigma u - f}}) = D_{\operatorname{KL}}(\P_{Y} | \P_{\cev{X}^u}) - D_{\operatorname{KL}}(\P_{Y_0} | \P_{X_T^u})
\end{equation}
and \Cref{prop: optimal path space measure app} (with $\mu = \sigma u - f$) yields the desired expression.
\end{proof}
In practice, the expected variational gap $G(u)$ in~\eqref{eq:var_gap_def} cannot be minimized directly since the evidence $\log p_{X^u_T}(Y_0)$ is intractable\footnote{One could, however, use the probability flow ODE, corresponding to the choice $\lambda=1$ in Theorem~\ref{thm:rev_sde}, which then resembles training of a normalizing flow.} and one resorts to maximizing the ELBO instead. The same problem occurs when considering divergences other than the (forward) KL divergence $D_{\operatorname{KL}}$ in~\eqref{eq:var_gap_forward_kl}. The situation is different in the context of sampling from densities, where we directly minimize the (reverse) KL divergence, see~\Cref{cor: control objective diffusion model sampling}, allowing us to use other divergences, such as the log-variance divergence, see~\Cref{app:results}.

\subsection{ELBO formulations}
\label{app:denoise}
Here we provide details on the connection of the ELBO to the denoising score matching objective. Using the reparametrization in~\eqref{eq:SDE_reparametrization}, i.e. taking $\mu \coloneqq \sigma u - f$, \Cref{cor:elbo} yields that
\begin{equation}
\label{eq:elbo_reparametrization}
    \log p_{X^u_T}(Y_0) \ge \E\left[
    \log p_{X^u_0}(Y_T) - \mathcal{R}^u_{\sigma u -f}(Y) \Big\vert Y_0\right].
\end{equation}
The next lemma shows that the expected negative ELBO in~\eqref{eq:elbo_reparametrization} equals the denoising score matching objective~\citep{vincent2011connection} up to a constant, which does not depend on the control $u$.
\begin{lemma}[Connection to denoising score matching]
Let us define the denoising score matching objective by
\begin{equation}
\mathcal{L}_{\mathrm{DSM}}(u) \coloneqq \frac{T}{2} \E\left[ 
    \left\| u(Y_\tau,\tau) -  \sigma^\top(\tau)\nabla \log  p_{Y_\tau|Y_0}(Y_\tau|Y_0)\right\|^2 \right].
\end{equation}
Then it holds that
\begin{align}
     \mathcal{L}_{\mathrm{DSM}}(u) = - \E\left[ \log p_{X^u_0}(Y_T) - \mathcal{R}^u_{\sigma u -f}(Y)
    \right]  + C,
\end{align}
with
\begin{equation}
    C \coloneqq \E\Big[\log p_{X^u_0}(Y_T) + T \div(f)(Y_\tau,\tau) + \frac{T}{2} \left\|\sigma^\top(\tau) \nabla \log  p_{Y_\tau|Y_0}(Y_\tau|Y_0)  \right\|^2 \Big],
\end{equation}
where $\tau \sim \mathcal{U}([0,T])$ and $p_{Y_\tau|Y_0}$ denotes the conditional density of $Y_\tau$ given $Y_0$.
\end{lemma}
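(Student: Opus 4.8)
The plan is to expand the expected negative ELBO in~\eqref{eq:elbo_reparametrization}, convert the divergence term $\div(\sigma u)$ into a score term by integration by parts, complete the square, and then apply the classical denoising trick of~\cite{vincent2011connection} to swap the marginal score $\nabla\log p_{Y_\tau}$ for the conditional score $\nabla\log p_{Y_\tau|Y_0}$.

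First I would use the reparametrization $\mu = \sigma u - f$, the definition of the running costs~\eqref{eq:running}, and the fact that $\sigma$ is spatially constant (so that $\div(\sigma u - f) = \div(\sigma u) - \div(f)$) to rewrite the expected negative ELBO as
\begin{equation}
\E\left[\int_0^T\left(\div(\sigma u) - \div(f) + \tfrac12\|u\|^2\right)(Y_s,s)\,\mathrm{d}s\right] - \E\left[\log p_{X^u_0}(Y_T)\right].
\end{equation}
Since $Y$ has a smooth, fast-decaying density $p_Y$ under the assumptions of~\Cref{sec:notation}, integration by parts in the spatial variable (with vanishing boundary terms) gives, for each fixed $s$, that $\E[\div(\sigma u)(Y_s,s)] = -\E[u(Y_s,s)\cdot(\sigma^\top\nabla\log p_Y)(Y_s,s)]$, where I used $\nabla p_Y = p_Y\,\nabla\log p_Y$. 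Completing the square via $\tfrac12\|u\|^2 - u\cdot\sigma^\top\nabla\log p_Y = \tfrac12\|u - \sigma^\top\nabla\log p_Y\|^2 - \tfrac12\|\sigma^\top\nabla\log p_Y\|^2$ and replacing $\int_0^T(\cdot)(Y_s,s)\,\mathrm{d}s$ by $T\,\E[(\cdot)(Y_\tau,\tau)]$ with $\tau\sim\mathcal{U}([0,T])$, the expected negative ELBO becomes
\begin{equation}
\tfrac{T}{2}\E\left[\big\|u - \sigma^\top\nabla\log p_Y\big\|^2(Y_\tau,\tau)\right] - \tfrac{T}{2}\E\left[\big\|\sigma^\top\nabla\log p_Y\big\|^2(Y_\tau,\tau)\right] - T\,\E[\div(f)(Y_\tau,\tau)] - \E[\log p_{X^u_0}(Y_T)].
\end{equation}

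Finally I would invoke the identity $\E[u(Y_\tau,\tau)\cdot\sigma^\top\nabla\log p_{Y_\tau}(Y_\tau)] = \E[u(Y_\tau,\tau)\cdot\sigma^\top\nabla\log p_{Y_\tau|Y_0}(Y_\tau|Y_0)]$ from~\cite{vincent2011connection}, which follows by conditioning on $\tau$, writing $\nabla p_{Y_\tau}(x) = \int\nabla_x p_{Y_\tau|Y_0}(x|y_0)\,p_{Y_0}(y_0)\,\mathrm{d}y_0$, and using $\nabla p = p\,\nabla\log p$ once more. Expanding the two squares, this identity shows that $\tfrac{T}{2}\E[\|u - \sigma^\top\nabla\log p_{Y_\tau}\|^2] - \tfrac{T}{2}\E[\|\sigma^\top\nabla\log p_{Y_\tau}\|^2]$ equals $\mathcal{L}_{\mathrm{DSM}}(u) - \tfrac{T}{2}\E[\|\sigma^\top\nabla\log p_{Y_\tau|Y_0}\|^2]$. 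Collecting the remaining $u$-independent terms then gives precisely the constant $C$ in the statement; note that the process $Y$ does not depend on $u$ after the reparametrization and $p_{X^u_0}$ is a fixed density, so $C$ is genuinely constant in $u$.

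The step I expect to be the main obstacle, beyond routine bookkeeping, is rigorously justifying the two interchanges of differentiation and integration: the vanishing of the boundary terms in the integration by parts, and differentiating under the integral sign in the denoising identity. Both rely on the smoothness, strict positivity, and integrability of the SDE marginals assumed in~\Cref{sec:notation} (and on the well-posedness of the associated Fokker--Planck equations); the matrix-valued nature of $\sigma$ only requires some care with transposes, which is harmless since $\sigma$ does not depend on $x$.
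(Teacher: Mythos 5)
Your proposal is correct and essentially follows the paper's proof: both reparametrize, use Fubini to replace the time integral by an expectation over $\tau\sim\mathcal{U}([0,T])$, integrate $\div(\sigma u)$ by parts to produce a score term, and complete the square. The one organizational difference is that you integrate by parts against the marginal density $p_{Y_\tau}$ and then invoke Vincent's denoising identity to swap the marginal score for the conditional one, whereas the paper integrates by parts directly against the conditional density $p_{Y_\tau|Y_0}(\cdot\,|\,Y_0)$ (conditioning on $Y_0$ before applying Stokes' theorem), which yields the conditional score in one step and avoids a separate appeal to~\cite{vincent2011connection}. The two routes amount to the same computation, since the Vincent identity is itself nothing but the difference between integration by parts against the marginal versus the conditional density; your version is slightly longer but has the pedagogical merit of making the classical denoising step explicit.
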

\begin{proof}
The proof closely follows the one in~\citet[Appendix A]{huang2021variational}.
For notational convenience, let us first define the abbreviations
\begin{equation}
    p(x,s)\coloneqq p_{Y_s|Y_0}(x|Y_0) \quad \text{and}  \quad r(x,s) \coloneqq u(x,s)\cdot \left(\sigma^\top(s) \nabla \log p(x,s) \right)
\end{equation}
for every $x\in\R^d$ and $s\in[0,T]$. Note that we have
\begin{equation}
    \frac{1}{2} \left\| u -  \sigma^\top\nabla \log p \right\|^2 = 
    \frac{1}{2} \left\| u\right\|^2 - r +
    \frac{1}{2}\left\| \sigma^\top\nabla \log p  \right\|^2,
\end{equation}
which shows that
\begin{equation}
\label{eq:quadr_denoising}
    \mathcal{L}_{\mathrm{DSM}}(u)  = \E\left[T\left(\frac{1}{2} \left\| u\right\|^2 - r\right)(Y_\tau,\tau)\right] + \E\left[
    \frac{T}{2}\left\| \sigma^\top(\tau)\nabla \log p(Y_\tau,\tau)  \right\|^2 \right].
\end{equation}
Focusing on the first term on the right-hand side, Fubini's theorem and a Monte Carlo approximation establish that, under mild regularity conditions, it holds that
\begin{align}
    \E\left[T \left( \frac{1}{2} \left\| u\right\|^2 - r \ \right)(Y_\tau,\tau) \right] =  \E\left[ \mathcal{R}^u_{-f}(Y) + T \div(f)(Y_\tau,\tau) \right] -  \int_{0}^T  
    \E[r(Y_s,s)] \,\mathrm{d}s,
\end{align}
where the expectation on the left-hand side is over the random variable $(Y_\tau, \tau) \sim p_{Y_\tau | \tau} p_\tau $, where $ p_\tau$ is the density of $\tau \sim \mathcal{U}([0, T])$ and $p_{Y_\tau | \tau} $ is the conditional density of $Y_\tau$ given $\tau$.

Together with~\eqref{eq:quadr_denoising}, this implies that 
\begin{equation}
\label{eq:mc}
    \mathcal{L}_{\mathrm{DSM}}(u) = - \E\left[ \log p_{X^u_0}(Y_T)  - \mathcal{R}^u_{-f}(Y) \right] - \int_{0}^T  
   \E[r(Y_s,s)] \,\mathrm{d}s + C.
\end{equation}
Focusing on the term $r(Y_s,s)$ for fixed $s\in [0,T]$, it remains to show that 
\begin{equation}
    \E[r(Y_s,s)] =  -\E\left[ \div(\sigma u)(Y_s,s)\right].
\end{equation}
Using the identities for divergences in \Cref{app:div}, one can show that
\begin{align}
\label{eq:proof_div}
    \div(\sigma u p) - \div(\sigma u)p =  (\sigma u) \cdot \nabla p  = (\sigma u) \cdot \left( p \, \nabla \log p \right) =  r  p.
\end{align}
Further, Stokes' theorem guarantees that under suitable assumptions it holds that
\begin{equation}
\label{eq:stokes}
       \int_{\R^d}  \div(\sigma u p)(x,s) \,\mathrm{d}x = 0.
\end{equation}
Thus, using~\eqref{eq:proof_div} and~\eqref{eq:stokes}, we have that
\begin{subequations}
\begin{align}
    -\E\left[ \div(\sigma u)(Y_s,s)| Y_0 \right] &= -\int_{\R^d}  \div(\sigma u)(x,s) p(x,s)\,\mathrm{d}x \\
    &=\int_{\R^d} r(x,s)  p(x,s) \mathrm{d}x = \E\left[  r(Y_s,s)\big| Y_0\right].
\end{align}
\end{subequations}
Combining this with~\eqref{eq:mc} finishes the proof.
\end{proof}
Note that one can also establish equivalences to explicit, implicit, and sliced score matching~\citep{hyvarinen2005estimation,song2020sliced}, see \citet[Appendix A]{huang2021variational}. Using the interpretation of the ELBO in terms of KL divergences, see \citet[Theorem 5]{huang2021variational} and also \Cref{prop: generative modeling as forward KL},
one can further derive the ELBO for discrete-time diffusion models as presented in \cite{ho2020denoising,kingma2021variational}.

\subsection{Diffusion-based sampling from unnormalized densities}
\label{app: diffusion sampling}

In the following, we formulate an optimization problem for sampling from (unnormalized) densities. The proof and statement are similar to~\Cref{thm:opt_control} and~\Cref{prop: optimal path space measure app} with the roles of the generative and inference SDEs interchanged. Note that \Cref{cor: control objective diffusion model sampling} is a direct consequence of this result.
\begin{corollary}[Reverse KL divergence]
\label{cor: control objective diffusion model sampling app}
Let $X^u$ and $Y$ be defined by \eqref{eq:sdes_interchanged}. Then, for every $u\in\mathcal{U}$ it holds that
\begin{equation}
     \log \frac{\mathrm{d} \P_{X^u}}{\mathrm{d} \P_{X^{u^*}}}(X^u)= \mathcal{R}^{\cev{u}}_{\cev{f}}(X^u) + \int_0^T \cev{u}(X_s^u) \cdot \mathrm{d} B_s + \log \frac{p_{Y_T}(X^u_0)}{\rho(X_T^u)} + \log \mathcal{Z},
\end{equation}
where $u^*\coloneqq \sigma^\top \nabla \log  p_{Y} $.
In particular, it holds that
\begin{align}
\label{eq:reverse_kl_app}
     D_{\operatorname{KL}}(\P_{X^{u}} | \P_{\cev{Y}}) &= 
     D_{\operatorname{KL}}(\P_{X^{u}} | \P_{X^{u^*}})+ D_{\operatorname{KL}}(\P_{X^{u}_0} | \P_{Y_T}) = \mathcal{L}_{\mathrm{DIS}}(u) +\log \mathcal{Z},
\end{align}
where
\begin{equation}
\label{eq:dis control cost app}
    \mathcal{L}_{\mathrm{DIS}}(u) \coloneqq \E\left[\mathcal{R}^{\cev{u}}_{\cev{f}}(X^u)
    + \log\frac{p_{X^u_0}(X^u_0)}{\rho(X_T^u)} \right].
\end{equation}
This further implies that
\begin{equation}
\label{eq: cost for unnormalized densities ap}
     D_{\operatorname{KL}}(\P_{X^{u}_0} | \P_{Y_T}) -\log \mathcal{Z}  = \min_{u\in\mathcal{U}} \, \mathcal{L}_{\mathrm{DIS}}(u),
\end{equation}
and, assuming that $X^u_0 \sim Y_T$, the minimizing control $u^*$ guarantees that $X^{u^*}_T \sim \mathcal{D}$.\end{corollary}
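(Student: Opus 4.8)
\textbf{Setup and strategy.} The plan is to mirror the proof of \Cref{prop: optimal path space measure app} (equivalently \Cref{thm:opt_control}), but with the roles of the generative process $X^u$ and the inference process $Y$ interchanged, as indicated in the statement. The key structural fact is that the time-reversed process $\cev{Y}$ satisfies, by \Cref{thm:rev_sde}, an SDE of exactly the same form as $X^u$ in \eqref{eq:sdes_interchanged}, with the optimal drift obtained by the choice $u = u^* \coloneqq \sigma^\top \nabla \log p_Y$ (using $\div(D) = 0$ since $\sigma$ is spatially constant). Thus $\mathrm{d}X^{u^*} = \mathrm{d}\cev{Y}$ as SDE dynamics, and $\P_{\cev{Y}}$ is precisely $\P_{X^{u^*}}$ \emph{up to its initial condition} $\cev{Y}_0 \sim Y_T$. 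The whole argument then reduces to (i) a Girsanov/change-of-measure computation for $\log \frac{\mathrm{d}\P_{X^u}}{\mathrm{d}\P_{X^{u^*}}}$, (ii) a disintegration step (\Cref{lem:disintegration}) to peel off the initial-condition mismatch, and (iii) taking expectations to pass to KL divergences and read off the variational principle.

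\textbf{Key steps in order.} First I would apply the Girsanov theorem (\Cref{thm:girsanov}), suitably transcribed to the reverse-time coefficients, to the pair $X^u$ and $X^{u^*}$, which share the same diffusion coefficient $\cev{\sigma}$ and differ only through the control term; this yields
\[
\log \frac{\mathrm{d}\P_{X^u}}{\mathrm{d}\P_{X^{u^*}}}(X^u) = \mathcal{R}^{\cev u}_{0}(X^u) + \int_0^T (\cev u - \cev{u}^*)(X^u_s)\cdot \mathrm{d}B_s + \log\frac{p_{X^u_0}(X^u_0)}{p_{\cev{Y}_0}(X^u_0)},
\]
where the last term comes from the differing initial marginals (with $p_{\cev Y_0} = p_{Y_T}$). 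Next I would identify the optimal change of measure $\frac{\mathrm{d}\P_{X^{u^*}}}{\mathrm{d}\P_{X^0}}$ explicitly via Doob's $h$-transform / the Hopf–Cole relation, exactly as in \Cref{prop: optimal path space measure app}: since $\cev{p}_{X^{u^*}}$ (equivalently $p_Y$) solves the relevant Kolmogorov/Fokker–Planck equation, one obtains a work functional whose terminal contribution involves $\log p_Y(\cdot, 0) = \log \rho - \log\mathcal Z$. Substituting and collapsing the running-cost and stochastic-integral terms as in \eqref{eq: path space Q} produces the stated formula for $\log\frac{\mathrm{d}\P_{X^u}}{\mathrm{d}\P_{X^{u^*}}}(X^u)$ with the explicit $\log\mathcal Z$ appearing. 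Then I would take expectations: the Itô integral vanishes by Novikov's condition, giving $D_{\operatorname{KL}}(\P_{X^u}|\P_{X^{u^*}}) = \E[\mathcal{R}^{\cev u}_{\cev f}(X^u) + \log\frac{p_{Y_T}(X^u_0)}{\rho(X^u_T)}] + \log\mathcal Z$. The first equality in \eqref{eq:reverse_kl_app} follows from \Cref{lem:disintegration} applied to $\P_{X^u}$, $\P_{\cev Y}$, and $\P_{X^{u^*}}$ (which has $X^{u^*}_0 \sim Y_T = \cev Y_0$ and the same dynamics as $\cev Y$), yielding the split into $D_{\operatorname{KL}}(\P_{X^u}|\P_{X^{u^*}}) + D_{\operatorname{KL}}(\P_{X^u_0}|\P_{Y_T})$. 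Rearranging into $\mathcal{L}_{\mathrm{DIS}}$ and using $D_{\operatorname{KL}} \ge 0$ with equality iff $\P_{X^u} = \P_{\cev Y}$ gives \eqref{eq: cost for unnormalized densities ap} and the final claim $X^{u^*}_T \sim \mathcal D$ once $X^u_0 \sim Y_T$.

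\textbf{Main obstacle.} Most steps are bookkeeping, but the one requiring genuine care is the \emph{change of drift/time direction} in transcribing Girsanov and the Hopf–Cole identity to the reverse-time picture: one must verify that the reverse-time SDE for $X^{u^*}$ coming from \Cref{thm:rev_sde} indeed coincides with \eqref{eq:sdes_interchanged} at $u = \sigma^\top\nabla\log p_Y$ — in particular checking the sign conventions in $\mathcal{R}^{\cev u}_{\cev f}$ and that the $\div(D)$ term vanishes — and that the terminal data of the associated linear PDE is $p_Y(\cdot,0) \propto \rho$, which is where the normalizing constant $\mathcal Z$ enters. Getting these orientations and the terminal condition exactly right (rather than off by a time reversal or a sign) is the crux; everything else follows the template of \Cref{prop: optimal path space measure app} verbatim.
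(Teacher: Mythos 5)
Your approach matches the paper's, which proves this corollary simply by invoking the HJB equation for $\log \cev{p}_Y$ with terminal data $\log(\rho/\mathcal{Z})$ and then pointing to the proofs of \Cref{thm:opt_control} and \Cref{prop: optimal path space measure app} with the roles of $X$ and $Y$ (and $f$ and $\mu$) interchanged, followed by \Cref{lem:disintegration} for the initial-condition split and \Cref{thm:rev_sde} for the final claim; you have correctly spelled out what that role swap amounts to. The only blemish is the first display you write: presented as the Girsanov ratio between $X^u$ and $X^{u^*}$ it is inconsistent (since $X^u_0 = X^{u^*}_0$, no initial-density ratio appears, and the running cost would be $\mathcal{R}^{\cev{u}-\cev{u}^*}_0$ with stochastic integrand $\cev{u}-\cev{u}^*$), whereas if meant to anticipate the final answer it should have the integrand $\cev{u}$ alone and the ratio $p_{Y_T}(X^u_0)/p_{Y_0}(X^u_T)$ — but your subsequent decomposition through $\P_{X^0}$ and the work functional, exactly as in \eqref{eq: path space Q}, is the correct and intended computation and recovers the stated formula.
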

\begin{proof}
    Analogously to the proof of~\Cref{lem: HJB for score} and noting the interchanged roles of $X^u$ and $Y$, we see that $\log \cev{p}_{Y}$ satisfies the HJB equation
    \begin{align}
    \partial_t p  = - \tr\big(\cev{D} \nabla^2 p \big) + \cev{f} \cdot \nabla p  + \div (\cev{f})p , \quad p (\cdot, T) = \log p_{Y_0} = \frac{\rho}{\mathcal{Z}}.    
    \end{align}
    We can now proceed analogously to the proofs of \Cref{thm:opt_control} and~\Cref{prop: optimal path space measure}.
    Using the identity
    \begin{equation}
        D_{\operatorname{KL}}(\P_{X^{u}_0} | \P_{Y_T}) = \E\left[\log \frac{p_{X^u_0}(X^u_0)}{p_{Y_T}(X_0^u)}\right],
    \end{equation}
    this proves the statements in~\eqref{eq:reverse_kl_app} and~\eqref{eq: cost for unnormalized densities ap}.
    The last statement follows from \Cref{thm:rev_sde} and the fact that the minimizer $u^*\coloneqq \sigma^\top \nabla \log p_{Y} $ is the scaled score function. 
\end{proof}

\subsubsection{DDS as a special case of DIS}
\begin{table}
\caption{We compare the error of DDS~\citep{vargas2023denoising} and DIS in estimating the log-normalizing constant and the average standard deviation on two problems. We report the average error over three independent runs. For a fair comparison, we use the same prior $\mathcal{N}(0, \mathrm{I})$, batch size $m=2048$, and number of gradient steps $K=20000$, and a fixed number of integrator steps $N=256$, see~\Cref{tab:hp}.
For DDS, we present the best results for different cosine schedules with
$\alpha_{\mathrm{max}} \in \{0.25,0.5,1,1.5,2,2.5\}$ (see~\citet{vargas2023denoising} for the precise definition). For comparison, we present results in their training setup as well as ours, i.e., with exponentially moving average of the parameters, clipping, and learning rate scheduler. For DIS, we also present results for the log-variance divergence, see~\Cref{app:results}.}
    \label{tab:dds_dis}
    \vspace{0.75em}
    \centering
    \begin{tabular}{llrr}
    \toprule
     Problem & Method & $\Delta\log \mathcal{Z} \, (rw) \downarrow$  & $\Delta$std $\downarrow$ \\
    \midrule 
     GMM & DDS ($\alpha=0.5$) &  0.176 & 2.48  \\
     & DDS ($\alpha=0.5$, our setup)   & 0.090 & 2.52 \\
     & DIS (KL) & 0.046  &  2.83  \\
     & DIS (log-variance)  & \textbf{0.013} &  \textbf{0.01} \\
    \midrule
     Funnel & DDS ($\alpha_{\mathrm{max}}=0.5$)  & 0.121 &           7.20     \\
     & DDS ($\alpha_{\mathrm{max}}=1$, our setup)  & 0.046 &
6.49 \\
     & DIS (KL)  & \textbf{0.017} &  5.50  \\
     & DIS (log-variance) & 0.021 & \textbf{5.46} \\
    \bottomrule
    \end{tabular}
\end{table}

In this section, we compare the \textit{denoising diffusion sampler} (DDS) suggested in \cite{vargas2023denoising} to our \textit{time-reversed diffusion sampler} (DIS). We present a numerical comparison in \Cref{tab:dds_dis} and refer to \citet[Section 3]{richter2023improved} and~\cite{vargas2023transport} for further details. In particular, these works show that DDS can be derived using a reference process with known time-reversal. 
In the following, we outline how DDS can be considered a special case of DIS. To this end, we note that DDS considers the process
\begin{subequations}
\begin{align}
    \mathrm{d} X_s &= \left(-\cev{\beta}(s)X_s + 2\eta^2\cev{\beta}(s)  \left(\cev{\Phi}(X_s, s) + \frac{X_s}{\eta^2} \right)\right) \,\mathrm{d}s + \eta \sqrt{2 \cev{\beta}(s)} \,\mathrm{d}B_s \\
    &= \left(\cev{\beta}(s)X_s + 2\eta^2\cev{\beta}(s)\cev{\Phi}(X_s, s)\right) \,\mathrm{d}s + \eta \sqrt{2 \cev{\beta}(s)} \,\mathrm{d}B_s,
\end{align}
\end{subequations}
with $X_0\sim \mathcal{N}(0, \eta^2 \mathrm{I})$, see also~\eqref{eq:vpsde}. To ease notation, let us define $\sigma(t) \coloneqq \eta \sqrt{2 \beta(t)} \mathrm{I}$, $f(x, t) =  -\beta(t) x$, and $u(x,t) \coloneqq \sigma(t)^\top \Phi(x,t)$ to get
\begin{equation}
\label{eq: X^u time reversal}
\mathrm{d} X_s^u = (-\cev{f} + \cev{\sigma} \cev{u})(X_s^u, s) \,\mathrm{d}s + \cev{\sigma}(s) \,\mathrm{d}B_s
\end{equation}
as in~\eqref{eq:sdes_interchanged}. We added the superscript $u$ to the process in order to indicate the dependence on the control $u$. For DDS, the loss
\begin{equation}
\label{eq: DDS loss}
    \mathcal{L}_\mathrm{DDS}(u) \coloneqq \E\left[ \int_0^T \frac{1}{2} \left\| \cev{u}(X_s^u, s) + \frac{\cev{\sigma}(s) X^u_s}{\eta^2} \right\|^2\,\mathrm{d}s + \log \frac{\mathcal{N}(X_T^u; 0, \eta^2 \mathrm{I})}{\rho(X_T^u)} \right]
\end{equation}
is considered. In order to show the mathematical equivalence of DDS and DIS for the special choice of $\sigma$ and $f$, we may apply It\^{o}'s lemma to the function $V(x, t) = \frac{1}{2}\| x \|^2$. This yields that
\begin{equation}
\label{eq: Ito norm}
    \E\left[\frac{1}{2}\|X_T^u \|^2 - \frac{1}{2}\| X_0^u\|^2 \right] = \E\left[ \int_0^T \left(-\cev{f} + \cev{\sigma} \cev{u}\right)(X_s^u, s)\cdot X^u_s + \eta^2 d \cev{\beta}(s) \,\mathrm{d}s \right],
\end{equation}
where $X^u$ is defined in \eqref{eq: X^u time reversal}. 
Noting that 
\begin{equation}
\label{eq: Gaussian X_T^u}
    \log\mathcal{N}(x; 0, \eta^2 \mathrm{I}) = -\frac{1}{2\eta^2}\|x \|^2 - \frac{1}{2} \log \left( 2 \pi \eta^2\right)
\end{equation}
and
\begin{equation}
     \nabla \cdot \cev{f}(x,t) = \nabla \cdot (-\cev{\beta}(t) x) = -d \cev{\beta}(t),
\end{equation}
we can rewrite~\eqref{eq: Ito norm} as
\begin{align}
\label{eq: Ito norm rewrite}
    \E\left[\log \frac{\mathcal{N}(X_T^u; 0, \eta^2 \mathrm{I})}{\mathcal{N}(X_0^u; 0, \eta^2 \mathrm{I})} \right] = \E\left[ \int_0^T \left(\frac{\cev{f} - \cev{\sigma} \cev{u}}{\eta^2}\right)(X_s^u, s)\cdot X^u_s + \div\left(\cev{f}\right)(X_s^u,s)  \,\mathrm{d}s \right].
\end{align}
Now, we observe that the special choice of $\sigma$ and $f$ implies that
\begin{equation}
\left(\frac{\cev{f} - \cev{\sigma} \cev{u}}{\eta^2}\right)(X_s^u, s)\cdot X^u_s = \frac{1}{2}\left\|  \cev{u}(X_s^u, s)\right\|^2 - \frac{1}{2}\left\|  \cev{u}(X_s^u, s) + \frac{\cev{\sigma}(s) X_s^u}{\eta^2} \right\|^2,
\end{equation}
which, together with~\eqref{eq: Ito norm rewrite}, shows that the DDS objective in~\eqref{eq: DDS loss} can be written as
\begin{equation}
\label{eq:dis_loss}
    \mathcal{L}_\mathrm{DDS}(u) = \E\left[ \mathcal{R}^{\cev{u}}_{\cev{f}}(X^u) + \log \frac{\mathcal{N}(X_0^u; 0, \eta^2 \mathrm{I})}{\rho(X_T^u)} \right].
\end{equation}
Recalling that $p_{X^u_0} = \mathcal{N}(0, \eta^2 \mathrm{I})$, we obtain that $\mathcal{L}_\mathrm{DDS}(u) = \mathcal{L}_\mathrm{DIS}(u)$, where the DIS objective is given as in~\eqref{eq:dis control cost}.

We emphasize, however, that the derivation of the DDS objective in~\eqref{eq: DDS loss} relies on a specific choice of the drift and diffusion coefficients $f$ and $\mu$ in~\eqref{eq: DDS loss}. In contrast, the DIS objective in~\eqref{eq:dis control cost} is, in principle, valid for an arbitrary diffusion process.

\subsubsection{Optimal drifts of DIS and PIS}

In this section, we analyze the optimal drift for DIS and PIS in the tractable case where the target $\rho = \mathcal{N}(m,\nu^2I)$ is a multidimensional Gaussian. This reveals that the drift of DIS can exhibit preferable numerical properties in practice. We refer to~\Cref{fig:funnel_drift,fig:gmm_drift} for visualizations.

Using the VP SDE in~\Cref{app:implementation} with $\eta=1$ (as in our experiments), the optimal drift for DIS is given by
\begin{equation}
\operatorname{drift}_{\mathrm{DIS}}(x,t) \overset{\eqref{eq:sdes_interchanged}}{=} \cev{\sigma}(t)\cev{u}^*(x,t) - \cev{f}(t) =  \cev{\sigma}^2(t) \underbrace{\frac{e^{-\cev{\alpha}(t)} m -x}{1 + e^{-2\cev{\alpha}(t)}(\nu^2 - 1)}}_{=\nabla \log \cev{p}_Y(x,t)} + \frac{1}{2} \cev{\sigma}^2(t)x,
\end{equation}
where $\alpha$ is as in~\eqref{eq:ou_transition_param}, see \eqref{eq:ou_transition}. Note that for $\nu = 1$, we can rewrite the score as 
\begin{equation}
\nabla \log \cev{p}_Y(x,t) = e^{-\cev{\alpha}(t)} m -x = (1-e^{-\cev{\alpha}(t)}) \nabla \log p_{X^u_0}(x) + e^{-\cev{\alpha}(t)} \nabla \log \rho(x),
\end{equation}
which is reminiscent of our initialization given by a linear interpolation and, in fact, the same for the (admittedly exotic) choice $\sigma^2(t) = \frac{2}{T-t}$, see~\Cref{app:implementation}.

In comparison, the optimal drift of PIS (using $f=0$, i.e., a pinned Brownian motion scaled by $\sigma\in(0,\infty)$ as in the original paper) can be calculated via the Feynman-Kac formula~\citep{zhang2021path,tzen2019theoretical} and is given by
\begin{equation}
\operatorname{drift}_{\mathrm{PIS}}(x,t) = \sigma \cev{u}^*(x,t) = \frac{\sigma ^2 T}{t\nu^2 + \sigma^2T(T-t)} \left(m - \frac{Tx}{t} \right) + \frac{x}{t},
\end{equation}
see~\citet[Appendix A.2]{vargas2023denoising}. Due to the initial delta distribution, the drift of PIS can become unbounded for $t\to 0$, which causes instabilities in practice. Note that this is not the case for DIS.

\subsection{PDE-based methods for sampling from unnormalized densities}
\label{app: PDE-based methods for sampling}

In principle, the different perspectives on diffusion-based generative modeling introduced in \Cref{sec: different perspectives} allow for different numerical methods. While our suggested method for sampling from (unnormalized) densities in \Cref{sec: sampling from unnormalized densities} directly follows from the optimal control viewpoint, an alternative route is motivated by the PDE perspective outlined in \Cref{sec: HJB}. While classical numerical methods for approximating PDEs suffer from the curse of dimensionality, we shall briefly discuss methods that may be computationally feasible even in higher dimensions.

Let us first rewrite the PDEs from \Cref{sec: HJB} in a way that makes their numerical approximation feasible. To this end, we recall that the Fokker-Planck equation is a linear PDE and its time-reversal can be written as a generalized Kolmogorov backward equation. Specifically, in the setting of \Cref{sec: sampling from unnormalized densities}, it holds for the time-reversed density $\cev{p}_{Y}$ that 
\begin{equation}
\label{eq: Kolmogorov PDE appendix}
\partial_t \cev{p}_{Y} =  \div  \left( - \div \left(\cev{D} \cev{p}_{Y}\right) +\cev{f} \cev{p}_{Y}  \right) = - \tr\big(\cev{D} \nabla^2 \cev{p}_{Y}\big) + \cev{f}\cdot \nabla \cev{p}_{Y} + \div (\cev{f})\cev{p}_{Y},
\end{equation}
analogously to \eqref{eq:kolmogorov_pde}. We note that, due to the linearity of the PDE in \eqref{eq: Kolmogorov PDE appendix}, the scaled density $p \coloneqq \mathcal{Z} \cev{p}_Y $ also satisfies a Kolmogorov backward equation given by
\begin{align}
\label{eq:kolmogorov_scaled}
\partial_t p  = - \tr\big(\cev{D} \nabla^2 p \big) + \cev{f} \cdot \nabla p  + \div (\cev{f})p , \quad p (\cdot, T) = \rho.
\end{align}
The corresponding HJB equation following from the Hopf--Cole transformation $V\coloneqq -\log\left(\mathcal{Z} \cev{p}_Y\right)$ (as outlined in~\Cref{app:hc_transform}) equals
\begin{align}
\label{eq: HJB PDE app}
\partial_t V = - \tr(\cev{D} \nabla^2 V) + \cev{f} \cdot \nabla V -  \div (\cev{f}) + \frac{1}{2} \big\|\cev{\sigma}^\top \nabla V\big\|^2, \quad V(\cdot, T) = -\log \rho,
\end{align}
in analogy to \Cref{lem: HJB for score}, however, with a modified terminal condition since we omitted the normalizing constant. Now, as $\rho$ is known, viable strategies to learn $u^*$ would be to solve the PDEs for $p$ or $V$ via deep learning.

The general idea for the approximation of PDE solutions via deep learning is to define loss functionals $\mathcal{L}$ such that
\begin{equation}
    \mathcal{L}(p) \text{ is minimal } \Longleftrightarrow p \text{ is a solution to \eqref{eq:kolmogorov_scaled},}
\end{equation}
or, respectively,
\begin{equation}
    \mathcal{L}(V) \text{ is minimal } \Longleftrightarrow V \text{ is a solution to \eqref{eq: HJB PDE app}.}
\end{equation}
This variational perspective then allows to parametrize the solution $p$ or $V$ by a neural network and to minimize suitable estimator versions of the loss functional $\mathcal{L}$ via gradient-based methods. We can then use automatic differentiation to compute
\begin{equation}
\label{eq:pde_approx_control}
    u^* \coloneqq \sigma^\top \nabla \log p_{Y} = \sigma^\top \nabla \log \cev{p} = \sigma^\top \nabla \cev{V}
\end{equation}
as the score is invariant to rescaling. This adds an alternative to directly optimizing the control costs in~\eqref{eq:dis control cost} in order to approximately learn $u\approx u^*$.

\subsubsection{Physics informed neural networks}
\label{app: PINNs}

A general approach to obtain suitable loss functionals $\mathcal{L}$ for learning the PDEs,
dating back to the 1990s (see, e.g., \cite{lagaris1998artificial}), is often referred to as \textit{Physics informed neural networks} (PINNs) or \textit{deep Galerkin method} (DGM). These approaches have recently become popular for approximating solutions to PDEs via a combination of Monte Carlo simulation and deep learning \citep{nusken2021interpolating,raissi2017physics,sirignano2018dgm}. The idea is to minimize the squared residual of the PDE in~\eqref{eq: HJB PDE app} for an approximation $\widetilde{V}\approx V$, i.e., 
\begin{align}
\label{eq:pinn_objective}
  \mathcal{L}_{\mathrm{PDE}}(\widetilde{V}) = \E\left[\Big(\partial_t \widetilde{V} + \tr\left(\cev{D}\nabla^2 \widetilde{V}\right) - \cev{f} \cdot \nabla \widetilde{V} + \div (\cev{f}) - \frac{1}{2} \big\|\cev{\sigma}^\top \nabla \widetilde{V}\big\|^2\Big)^2(\xi,\tau)\right],
\end{align}
as well as the squared residual of the terminal condition, i.e.,
\begin{equation}
  \mathcal{L}_{\mathrm{boundary}}(\widetilde{V}) = \E\left[\left(\widetilde{V}(\xi,T) + \log \rho(\xi)\right)^2\right],
\end{equation}
where $(\xi,\tau)$ is a suitable random variable distributed on $\R^d \times [0, T]$.
This results in a loss
\begin{equation}
    \mathcal{L}_\mathrm{PINN}(\widetilde{V}) =  \mathcal{L}_{\mathrm{PDE}}(\widetilde{V}) + c \,  \mathcal{L}_{\mathrm{boundary}}(\widetilde{V}),
\end{equation}
where $c \in (0,\infty)$ is a suitably chosen penalty parameter. Using automatic differentiation, one can then compute an approximation of the control as in~\eqref{eq:pde_approx_control}. In~\Cref{fig:pinn}, we compare this approach to DIS and PIS on a $10$-dimensional double well example. While DIS provides better results overall, the PINN approach outperforms PIS and holds promise for further development. A clear advantage of PINN is the fact that no time-discretization is necessary. On the other hand, one needs to compute higher-order derivatives in the PINN objective~\eqref{eq:pinn_objective} during training, and one also needs to evaluate the derivative of $\widetilde{V}$ to obtain the approximated score during sampling. 

\begin{figure*}[!t]
    \centering
    \includegraphics[width=\linewidth]{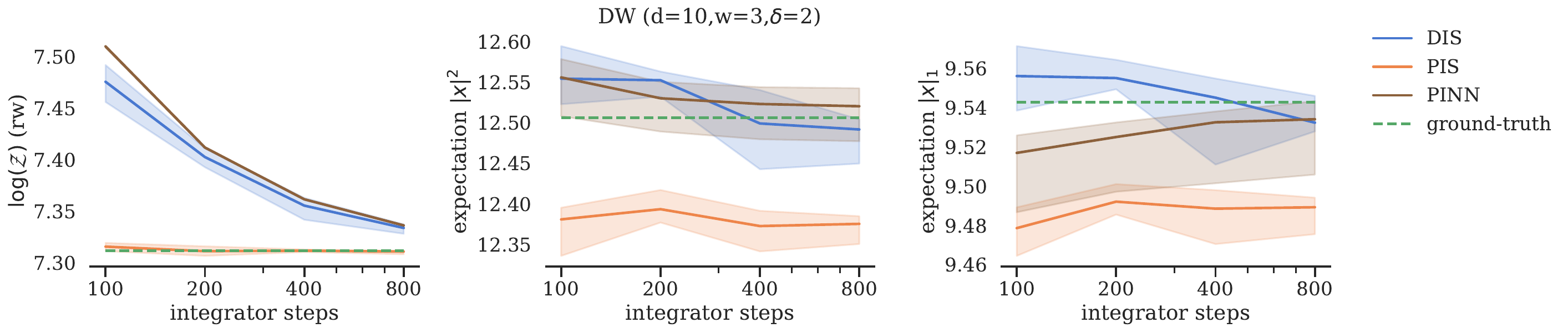}
    \vspace{-0.5em}
    \caption{We use the settings in~\Cref{fig:logz,fig:exp} and display the estimates (median and interquartile range over $10$ training seeds) of DIS, PIS, and the PINN method for $\log \mathcal{Z}$ and the expected values of two choices of $\gamma$ in~\eqref{eq: expectation of interest}. We find that the PINN method can provide competitive results.}
    \label{fig:pinn}
\end{figure*}

\subsubsection{BSDE-based methods and Feynman-Kac formula}
\label{app:densities}

To reduce the computational cost for the higher-order derivatives in the PINN objective, one can also develop loss functionals $\mathcal{L}$ for our specific PDEs by considering suitable stochastic representations. This can be done by applying It\^{o}'s lemma to the corresponding solutions. 
For instance, for the HJB equation in~\eqref{eq: HJB PDE app}, we obtain
\begin{equation}
\label{eq: Ito HJB}
    -\log \rho(X_T) = V(\xi,\tau) + \mathcal{R}^{\cev{\sigma}^\top \nabla V}_{-\cev{f}}(X) + \mathcal{S}_V(X),
\end{equation}
where we abbreviate
\begin{equation}
    \mathcal{S}_p(X) \coloneqq \int_\tau^T \left(\sigma^\top \nabla p\right)(X_s, s)\cdot \mathrm{d}B_s.
\end{equation}
In the above, the stochastic process $X$ is given by the SDE
\begin{equation}
    \mathrm{d} X_s = -\cev{f}(X_s, s)\,\mathrm{d}s + \cev{\sigma} \mathrm{d} B_s, \quad X_\tau \sim \xi,
\end{equation}
where $(\xi,\tau)$ is again a suitable random variable distributed on $\R^d \times [0, T]$.
Minimizing the squared residual in~\eqref{eq: Ito HJB} for an approximation $\widetilde{V}\approx V$, we can thus define a viable loss functional by
\begin{equation}
\label{eq:bsde_loss}
    \mathcal{L}_{\mathrm{BSDE}}(\widetilde{V}) \coloneqq \E\left[  \left(\widetilde{V}(\xi,\tau) + \mathcal{R}^{\cev{\sigma}^\top \nabla \widetilde{V}}_{-\cev{f}}(X) + \mathcal{S}_{\widetilde{V}}(X) +\log \rho(X_T) \right)^2 \right],
\end{equation}
see, e.g.,~\cite{richter2021solving,nusken2021solving}. The name of the loss functional $\mathcal{L}_{\mathrm{BSDE}}$ stems from the fact that~\eqref{eq: Ito linear} can also be seen as a backward stochastic differential equation (BSDE). A method to combine the PINN loss in~\ref{app: PINNs} with the BSDE-based loss in~\eqref{eq:bsde_loss} is presented in \cite{nusken2021interpolating}, leading to the so-called \textit{diffusion loss}, which allows to consider arbitrary SDE trajectory lengths.

One can argue in a similar fashion for the linear PDE in~\eqref{eq:kolmogorov_scaled}, where It\^{o}'s lemma establishes that
\begin{equation}
\label{eq: Ito linear}
    \rho(X_T) = p(\xi, \tau) + \int_\tau^T \left(\div(\cev{f})p\right)(X_s, s)\,\mathrm{d}s  + \mathcal{S}_p(X).
\end{equation}
However, for the linear Kolmogorov backward equation, we can also make use of the celebrated Feynman-Kac formula, i.e.,
\begin{equation}
    p(\xi,\tau) = \E\left[  \exp \left(-\int_\tau^T \div(\cev{f})(X_s, s) \,\mathrm{d}s\right) \rho(X_T) \Bigg| (\xi,\tau) \right]
\end{equation}
to establish the loss
\begin{equation}
\label{eq:fk_loss}
    \mathcal{L}_{\mathrm{FK}}(\widetilde{p}) = \E\left[  \left( \widetilde{p}(\xi, \tau) - \exp \left(-\int_\tau^T \div(\cev{f})(X_s, s) \,\mathrm{d}s\right) \rho(X_T) \right)^2 \right],
\end{equation}
see, e.g.,~\cite{berner2020solving,richter2022robust,beck2021solving}. 
Since the stochastic integral $\mathcal{S}_p(X)$ has vanishing expectation conditioned on $(\xi,\tau)$, it can also be included in the FK-based loss in~\eqref{eq:fk_loss} to reduce the variance of corresponding estimators and improve the performance, see~\cite{richter2022robust}.

\subsection{Importance sampling in path space}
\label{app: importance sampling}

\begin{figure}[!t]
    \centering
    \includegraphics[width=0.9\linewidth]{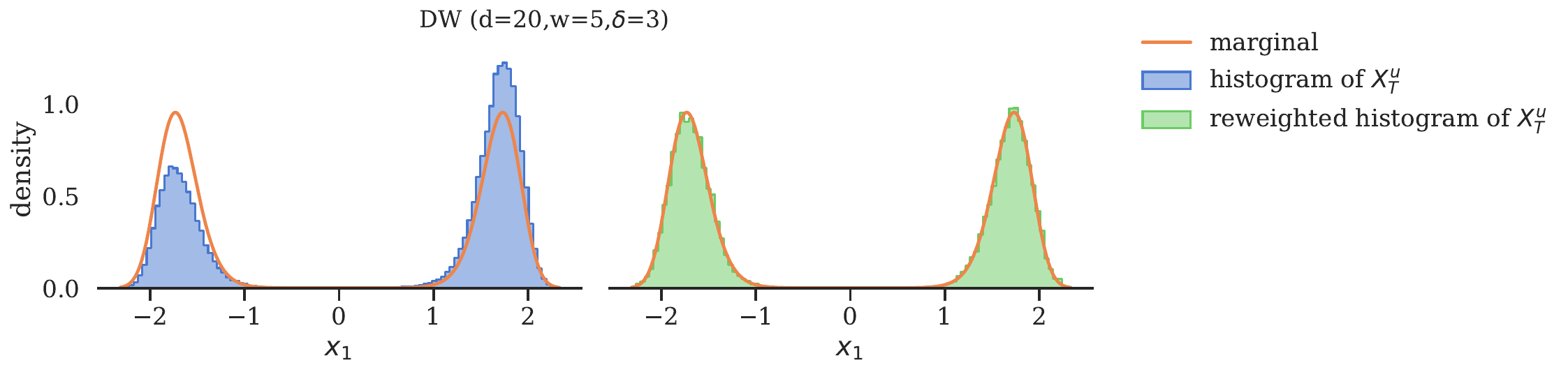}
    \caption{Illustration of importance sampling in path space. The left panel shows a histogram of the first component of the controlled process $X^u_T$ at terminal time $T$ compared to the corresponding marginal of the target density $\rho / \mathcal{Z}$. While we cannot correct the error caused by $p_{X^u_0} \approx p_{Y_T}$, we can mitigate the error in the control $u \approx u^*$ by reweighting $X^u_T$ with the importance weights $\widetilde{w}^u$, see~\Cref{app: importance sampling}. The right panel shows that this indeed results in an improved approximation of the target density.}
    \label{fig: IS}
\end{figure}

In practice, we will usually not have $u^*$ available but must rely on an approximation $u \approx u^*$. In consequence, this then yields samples $X_T^u$ that are only approximately distributed according to the target density $\rho$ and thus leads to biased estimates. However, we can correct for this bias by employing importance sampling.

Importance sampling is a classical method for variance-reduction in Monte Carlo approximation that leads to unbiased estimates of corresponding quantities of interest~\citep{liu2001monte}. The idea is to sample from a proposal distribution and correct for the corresponding bias by a likelihood ratio between the proposal and the target distribution. While importance sampling is commonly used for densities, one can also employ it in path space, thereby reweighting continuous trajectories of a stochastic process \citep{hartmann2017variational}. In our application, we can make use of this in order to correct for the bias introduced by the fact that $u$ is only an approximation of the optimal $u^*$. Note that, in general, it holds for any suitable functional $\varphi : C([0, T],\R^d) \to \R$ that
\begin{equation}
\label{eq:is_def}
    \E\left[\varphi(X^{u^*})\right] =  \E\left[\varphi(X^{u}) \frac{\mathrm{d} \P_{X^{u^*}}}{\mathrm{d} \P_{X^u}}(X^u) \right],
\end{equation}
i.e., the expectation can be computed w.r.t.\@ samples from an optimally controlled process $X^{u^*}$, even though the actual samples originate from the process $X^u$. The weights necessary for this can be calculated via
\begin{align}
\label{eq: normalized weights}
    w^u &\coloneqq \frac{\mathrm{d} \P_{X^{u^*}}}{\mathrm{d} \P_{X^u}}(X^u)= \frac{1}{\mathcal{Z}}\exp\Bigg( -\mathcal{R}^{\cev{u}}_{\cev{f}}(X^u) - \int_0^T \cev{u}(X_s^u) \cdot \mathrm{d} B_s - \log \frac{p_{Y_T}(X^u_0)}{\rho(X_T^u)}\Bigg),
\end{align}
where we used the Radon-Nikodym derivative from \Cref{prop: optimal path space measure app}. The identity in~\eqref{eq:is_def} then yields an unbiased estimator, however, with potentially increased variance. In fact, the variance of importance sampling estimators can scale exponentially in the dimension $d$ and in the deviation of $u$ from the optimal $u^*$, thereby being very sensitive to the approximation quality of the control $u$, see, for instance,~\cite{hartmann2021nonasymptotic}.

Note that, in practice, we do not know $\mathcal{Z}$, i.e., we need to consider the unnormalized weights
\begin{equation}
\label{eq: unnormalized weights}
    \widetilde{w}^u \coloneqq w^u \mathcal{Z}.
\end{equation}
We can then make use of the identity
\begin{equation}
    \E\left[\varphi(X^{u^*})\right] = \frac{\E\left[\varphi(X^{u}) \widetilde{w}(X^u) \right]}{\E\left[ \widetilde{w}(X^u)\right]}.
\end{equation}
While for normalized weights as in \eqref{eq: normalized weights} the variance of the corresponding importance sampling estimator vanishes at the optimum $u=u^*$, this is, in general, not the case for the unnormalized weights defined in \eqref{eq: unnormalized weights}. 

In implementations, we introduce a bias in the importance sampling estimator by simulating the SDE $X^u$ using a time-discretization, such as the Euler-Maruyama scheme, see~\Cref{app:implementation}.
Moreover, the quantity $p_{Y_T}$ in~\eqref{eq: normalized weights} is typically intractable and we need to use the approximation $p_{X_0^u}\approx p_{Y_T}$ as in the loss $\mathcal{L}_{\mathrm{DIS}}$, see~\Cref{cor: control objective diffusion model sampling}.
We have illustrated the effect of importance sampling in \Cref{fig: IS}.

\subsection{Details on implementation}
\label{app:implementation}

In this section, we provide further details on the implementation.

\paragraph{SDE:} For the inference SDE $Y$ we employ the variance-preserving (VP) SDE from~\cite{song2020score} given by
\begin{equation}
\label{eq:vpsde}
    \sigma(t) \coloneqq \eta \sqrt{2\beta(t)} \ \mathrm{I} \quad \text{and} \quad  f(x,t) \coloneqq  - \beta(t)x 
\end{equation}
with $\beta\in C([0,T],(0,\infty))$ and $\eta\in(0,\infty)$.
Note that this constitutes an Ornstein-Uhlenbeck process with conditional density
\begin{equation}
\label{eq:ou_transition}
    p_{Y_t | Y_0}( {{\cdot}} | Y_0) = \mathcal{N} \left(  e^{-\alpha(t)}Y_0, \eta^2\left(1-e^{-2 \alpha(t)}\right) \mathrm{I}\right),
\end{equation}
where
\begin{equation}
\label{eq:ou_transition_param}
    \alpha(t) \coloneqq  \int_{0}^t \beta(s) \mathrm{ds}.
\end{equation}
In particular, we observe that 
\begin{equation}
\label{eq:approx_terminal}
    p_{Y_T} = \E\left[ p_{Y_T | Y_0}( {{\cdot}} | Y_0)\right] \approx \mathcal{N} \left(  0, \eta^2 \mathrm{I}\right)
\end{equation}
for suitable $\beta$ and sufficiently large $T$. 
In practice, we choose the schedule
\begin{equation}
     \beta(t) \coloneqq \frac{1}{2}\left(\left(1-\frac{t}{T}\right)\sigma_{\mathrm{min}} + \frac{t}{T} \sigma_{\mathrm{max}}\right)
\end{equation}
with $\eta = 1$ and sufficiently large $0< \sigma_{\mathrm{min}} < \sigma_{\mathrm{max}}$. 
This motivates our choice $X_0^u \sim \mathcal{N}(0,\mathrm{I})$. We can numerically check whether (the unconditional) $Y_T$ is indeed close to a standard normal distribution. To this end, we consider samples from $\mathcal{D}$ and let the process $Y$ run according to the inference SDE in~\eqref{eq: inference SDE}. We can now compare the empirical distributions of the different components, which each need to follow a one-dimensional Gaussian. \Cref{fig: distribution of Y_T} shows that this is indeed the case.

\begin{figure}[!t]
    \centering
    \includegraphics[width=\linewidth]{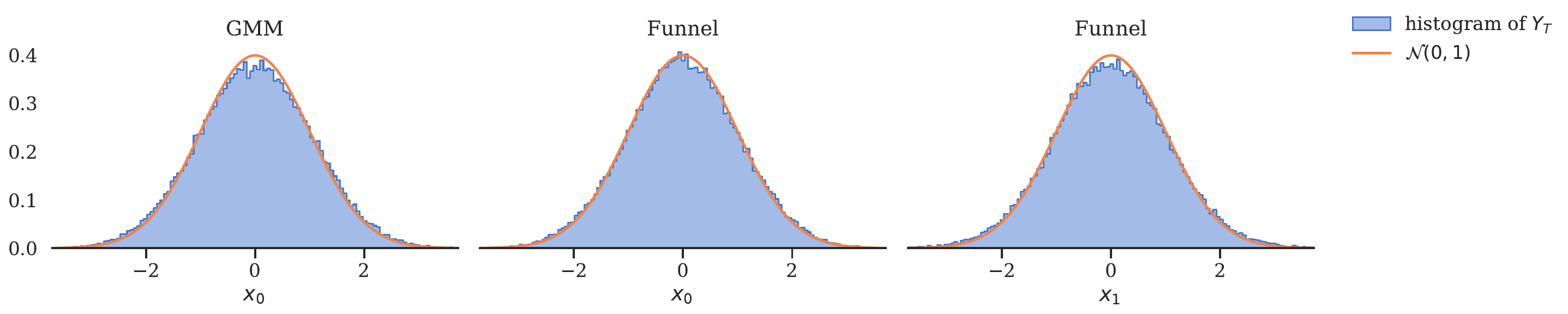}
    \vspace{-0.5em}
    \caption{Histogram of the first component of $Y_T$ for the Gaussian mixture model and the first two coordinates of the Funnel distribution (in blue) when starting the corresponding process $Y$ in $Y_0 \sim \mathcal{D}$ and using $N=800$ steps for the simulation, compared to the density of a standard normal (in orange). We do not depict the double well as we cannot sample from the ground truth distribution.}
    \label{fig: distribution of Y_T}
\end{figure}

\paragraph{Model:} Similar to PIS, we employ the initial score of the inference SDE, i.e., the score of the data distribution, $\nabla \log p_{Y_0}=\nabla \log \rho$, in the parametrization of our control. Additionally, in our DIS method, we also make use of the (tractable) initial score of the generative SDE $\nabla \log p_{X_0^u} = \nabla \log \mathcal{N}(0,\mathrm{I})$. Specifically, we choose
\begin{equation}
\label{eq:control_param}
    u_\theta(x,t) \coloneqq \Phi^{(1)}_\theta(x,t) + \Phi^{(2)}_\theta(t) \sigma(t) s(x,t),
\end{equation}
where
\begin{equation}
   s(x,t) \coloneqq  \frac{t}{T} \nabla  \log p_{X^u_0}(x) + \left(1-\frac{t}{T}\right) \nabla \log \rho(x)
\end{equation}
and $\Phi^{(1)}_\theta$ and $\Phi^{(2)}_\theta$ are neural networks with parameters $\theta$. We use the same network architectures as PIS, which in particular uses a Fourier feature embedding~\citep{tancik2020fourier} for the time variable $t$. 
We initialize $\theta^{(0)}$ such that $\Phi^{(1)}_{\theta^{(0)}} \equiv 0$ and $\Phi^{(2)}_{\theta^{(0)}} \equiv 1$. The parametrization in~\eqref{eq:control_param} establishes that our initial control (approximately) matches the optimal control at the initial and terminal times, i.e.,
\begin{align}
    u_{\theta^{(0)}}(\cdot,0) &= \sigma(0)^\top \nabla \log \rho = \sigma(0)^\top \nabla \log p_{Y_0} = u^*(\cdot, 0)
\end{align}
and
\begin{align}
    u_{\theta^{(0)}}(\cdot,T) &= \sigma(T)^\top \nabla \log p_{X^u_0}  \approx \sigma(T)^\top \nabla \log p_{Y_T} = u^*(\cdot, T).
\end{align}
In our experiments, we found it beneficial to detach the evaluation of $s$ from the computational graph and to use a clipping schedule for the output of the neural networks.

\paragraph{Training:} We minimize the mapping $\theta \mapsto \widehat{\mathcal{L}}_{\mathrm{DIS}}(u_\theta)$ using a variant of gradient descent, i.e., the Adam optimizer~\citep{kingma2014adam}, where $\widehat{\mathcal{L}}_{\mathrm{DIS}}$ is a Monte Carlo estimator of $\mathcal{L}_{\mathrm{DIS}}$ in~\eqref{eq:dis control cost}.
Given that we are now optimizing a reverse KL divergence, the expectation in $\mathcal{L}_{\mathrm{DIS}}(u_\theta)$ is over the controlled process $X^{u_\theta}$ in~\eqref{eq:sdes_interchanged} and we need to discretize $X^{u_\theta}$ and the running costs $\mathcal{R}^{\cev{u}_\theta}_{\cev{f}}(X^{u_\theta})$ on a time grid $0=t_0 < \dots < t_N=T $. We use the \emph{Euler-Maruyama} (EM) scheme given by
\begin{align}
     \widehat{X}^\theta_{n+1} = \widehat{X}^\theta_n + \left(\cev{\sigma} \cev{u}_\theta- \cev{f}\right)(\widehat{X}^\theta_n, t_n) \Delta t + \cev{\sigma}(t_n) \underbrace{(B_{t_{n+1}} - B_{t_n})}_{\sim \mathcal{N}(0,\Delta t \mathrm{I}) },
\end{align}
where $\Delta t \coloneqq T/N = t_{n+1} - t_n $ is the step size. Given $\widehat{X}^\theta_0 \sim X^{u_\theta}_0$, it can be shown that $\widehat{X}^\theta_{N}$ convergences to $X^{u_\theta}_{T}$ for $\Delta t \to 0$ in an appropriate sense \citep{kloeden1992stochastic}. This motivates why we increase the number of steps $N$ for our methods when the training progresses, see~\Cref{tab:hp}. A smaller step size typically leads to a better approximation but incurs increased computational costs.  We also trained separate models with a fixed number of steps, but did not observe benefits justifying the additional computational effort, see~\Cref{fig:logz_comp}. 

We compute the derivatives w.r.t.\@ $\theta$ by automatic differentiation. In a memory-restricted setting, one could alternatively use the \emph{stochastic adjoint sensitivity method}~\citep{li2020scalable,kidger2021efficient} to compute the gradients using adaptive SDE solvers. For a given time constraint, we did not observe better performance of stochastic adjoint sensitivity methods, higher-order (adaptive) SDE solvers, or the exponential integrator by~\citet[Section 5]{zhang2022fast} over the Euler-Maruyama scheme. Our training scheme yields better results compared to the models obtained when using the default configuration in the code of~\cite{zhang2021path}, see~\Cref{fig:baseline}. For our comparisons, we thus trained models for PIS according to our scheme, see also~\Cref{tab:hp}. Note that in~\Cref{app: PDE-based methods for sampling} we present an alternative approach of training a control $u_\theta$ based on \emph{physics-informed neural networks} (PINNs).

\begin{figure*}[!t]
    \centering
    \includegraphics[width=\linewidth]{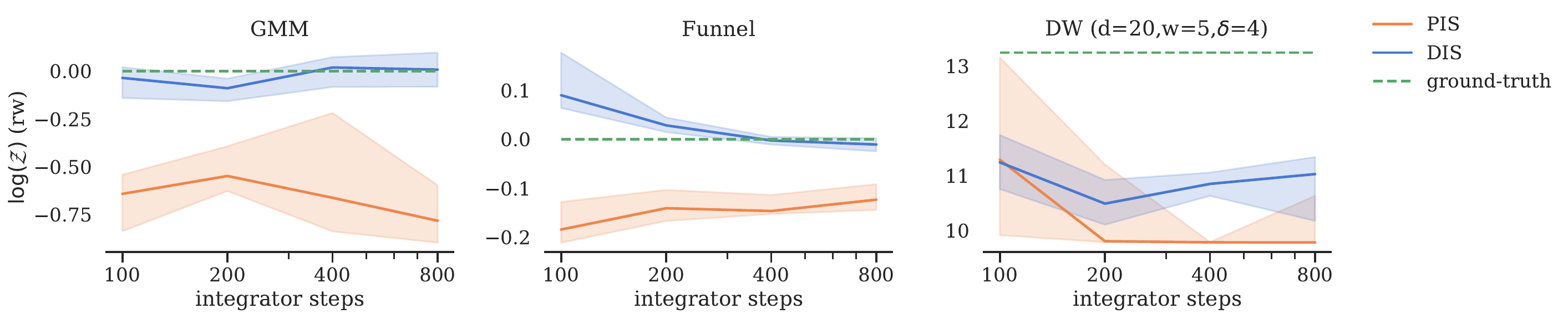}
    \vspace{-0.5em}
    \caption{We use the same setting as in~\Cref{fig:logz}, except that for each choice of SDE integrator steps $N$ a separate model is trained, i.e., all gradient steps use the same step size $\Delta t$. This yields qualitatively similar results. For our other experiments, we thus amortize the cost and train a model which can be evaluated for a range of steps $N$.}
    \label{fig:logz_comp}
\end{figure*}

\paragraph{Sampling:}
In order to reduce the variance, we compute an exponential moving average $\bar{\theta}$ of the parameters $(\theta^{(k)})_{k=1}^K$ in~\Cref{alg: algorithm} during training and use it for evaluation. We evaluate our model by sampling from the prior $\widehat{X}^{\bar{\theta}}_0 \sim \mathcal{N}(0,\mathrm{I})$ and simulating the generative SDE using the EM scheme, such that $\widehat{X}^{\bar{\theta}}_N$ represents an approximate sample from $\mathcal{D}$. Note that we can, in principle, choose the number of steps for the EM scheme independent from the steps used for training. We used an increasing number of steps during training, and our trained model provides good results for a range of steps, see~\Cref{fig:logz,fig:exp}. Thus, we amortized the cost of training different models for different step sizes. 
Finally, note that similar to~\cite{song2020score}, one could alternatively obtain approximate samples from $\mathcal{D}$ by simulating the \emph{probability flow ODE}, originating from the choice $\lambda=1$ in Theorem~\ref{thm:rev_sde}, using suitable ODE solvers~\citep{zhang2022fast}.

\paragraph{Evaluation:}
We evaluate the performance of our proposed method, DIS, against PIS on the following two metrics:

\begin{itemize}
\item \textit{approximating normalizing constants:} We obtain a lower bound\footnote{We note that an Euler-Maruyama discretization of $- \mathcal{L}_{\mathrm{DIS}}(u_{\bar{\theta}})$ in~\eqref{eq:log_z_lb} can potentially overestimate the log-normalizing constant $\log \mathcal{Z}$. We refer to \cite{vargas2023transport} for reformulations that guarantee a lower bound using backward integration.} on the log-normalizing constant $\log \mathcal{Z}$ by evaluating the negative control costs, i.e.,
\begin{equation}
\label{eq:log_z_lb}
\log \mathcal{Z} \ge \log \mathcal{Z}  - D_{\operatorname{KL}}(\P_{X^{u}_0} | \P_{Y_T})  \ge  - \mathcal{L}_{\mathrm{DIS}}(u_{\bar{\theta}})
\end{equation}
see \eqref{eq: cost for unnormalized densities}. We note that, similar to the PIS method, adding the stochastic integral (which has zero expectation), i.e., considering
\begin{equation}
    \mathcal{R}^{\cev{u}}_{\cev{\mu}}(X^u)
    + \log\frac{p_{X^u_0}(X^u_0)}{\rho(X_T^u)} + \int_0^T  u(Y_s^{u},s)\cdot \mathrm{d} B_s 
\end{equation}
yields an estimator with lower variance, see also~\Cref{cor: control objective diffusion model sampling app}. We will compare unbiased estimates using \emph{importance sampling} in path space, see~\Cref{app: importance sampling}.

\item \textit{approximating expectations and standard deviations:} In applications, one is often interested in computing expected values of the form
\begin{equation}
\label{eq: expectation of interest}
   \Gamma \coloneqq \E\left[\gamma(Y_0) \right],
\end{equation}
where $Y_0 \sim \mathcal{D}$ follows some distribution and $\gamma:\R^d \to \R$ specifies an observable of interest.
We consider $\gamma(x) \coloneqq \|x \|^2$ and $\gamma(x) \coloneqq \|x\|_1 = \sum_{i=1}^d |x_i| $.
We approximate \eqref{eq: expectation of interest} by creating samples $(\widehat{X}^{\bar{\theta},(k)}_N)_{k=1}^K$ from our model as described in the previous paragraph, where $K\in\mathbb{N}$ specifies the sample size. We can then approximate $\Gamma$ in~\eqref{eq: expectation of interest} via Monte Carlo sampling by
\begin{equation}
    \widehat{\Gamma} \coloneqq \frac{1}{K} \sum_{k=1}^K \gamma(\widehat{X}^{\bar{\theta},(k)}_N).
\end{equation}
Given a reference solution $\Gamma$ of \eqref{eq: expectation of interest}, we can now compute the relative error $|(\widehat{\Gamma} - \Gamma) / \Gamma|$, which can be viewed as an evaluation of the sample quality on a global level. Analogously, we analyze the error when approximating coordinate-wise standard deviations of the target distribution $Y_0$ using the samples $(\widehat{X}^{\bar{\theta},(k)}_N)_{k=1}^K$.
\end{itemize}

\clearpage
\subsection{Further numerical results}
\label{app:results}

\begin{figure*}[!t]
    \centering
    \includegraphics[width=\linewidth]{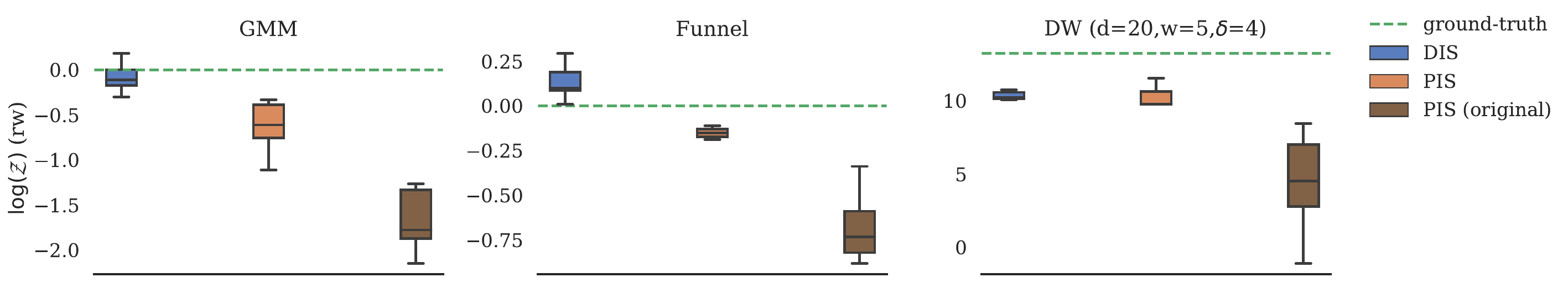}
    \vspace{-0.5em}
    \caption{The boxplot compares our models to the results obtained from the repository of~\cite{zhang2021path} using their settings. As they only train with $N=100$ steps and present approximations of the log-normalizing constants, we show corresponding results. The figure shows that our implementation of the PIS method already provides significantly improved results. Compared to their settings, we train until convergence with larger batch sizes, use increasingly finer step sizes, a clipping schedule, and an exponential moving average of the parameters.}
    \label{fig:baseline}
\end{figure*}

In this section, we present further empirical results and comparisons.

\paragraph{Other divergences on path space:}

Our principled framework of considering path space measures in~\Cref{sec:path} allows us to consider divergences other than the (reverse) KL divergence, which is known to suffer from mode collapse~\citep{minka2005divergence,midgley2022flow} 
or a potentially high variance of Monte Carlo estimators~\citep{roeder2017sticking}. 
We conducted preliminary experiments and want to highlight results using the \emph{log-variance divergence} that has been suggested in~\cite{nusken2021solving} and often exhibits favorable numerical properties. In fact, we can see in Figure~\ref{fig:log-var} that, for some experiments, this divergence can improve performance significantly -- sometimes by more than an order of magnitude. We, therefore, suspect that further improvements will be possible by leveraging our path space perspective, see~\cite{richter2023improved} for an extensive comparison.

\paragraph{Other parametrizations and initializations:}
We tried a number of parametrizations and initializations for our neural networks. Specifically, we tried to initialize the control $u$ such that the generative SDE $X^u$
\begin{enumerate}
    \item \label{it:zero_drift} has approximately zero drift (as in PIS),
    \item corresponds to the SDE of the unadjusted Langevin algorithm (ULA),
    \item\label{it:interpolate} has the correct drift at the initial and terminal times, i.e., the interpolation described in~\eqref{eq:control_param}. 
\end{enumerate}
For Item~\ref{it:zero_drift}, we also experimented with omitting the score $\nabla \log \rho$ in the parametrization.
We present a comparison of the learned densities for the Funnel problem in~\Cref{fig:funnel} and a GMM problem in~\Cref{fig:gmm}, where we chose a non-symmetric GMM example to emphasize the different behavior. While there are problem-specific performance differences, we observed that all the parametrizations incorporating the score $\nabla \log \rho$ yielded qualitatively similar final results. Thus, we opted for the most stable combination, i.e., Item~\ref{it:interpolate}, which is detailed in~\eqref{eq:control_param} above. As further benchmarks and metrics, we also present the results of running ULA, see the next paragraph for details, and the Sinkhorn distance\footnote{Our implementation is based on \url{https://github.com/fwilliams/scalable-pytorch-sinkhorn} with the default parameters.}~\citep{cuturi2013sinkhorn} to ground truth samples. Finally, we provide a comparison of the corresponding learned drifts in~\Cref{fig:funnel_drift} and~\Cref{fig:gmm_drift}. We note that the learned drifts are closer to the optimal drift for larger $t\in [0,T]$. It would be an interesting future direction to try different loss weighting schemes (as have been employed for the denoising score matching objective) to facilitate learning the score for small $t$.

\paragraph{Unadjusted Langevin Algorithm (ULA):}
Note that Langevin-based algorithms (which are not model-based), such as ULA or the Metropolis-adjusted Langevin algorithm (MALA), are only guaranteed to converge to the target distribution $\mathcal{D}$ in infinite time. In contrast, DIS and PIS are guaranteed to sample from the target after finite time $T$ (given that the learned model converged). This is particularly relevant for distributions $\mathcal{D}$ for which the convergence speed of the Langevin-based algorithms is slow -- which often appears if the modes of the corresponding densities are very disconnected (or, equivalently, if the potential consists of multiple minima, which are separated by rather high \enquote{energy barriers}). In fact, by Kramer’s law, the time to escape such a minimum via the Langevin SDE satisfies the large deviations asymptotics $\mathbb{E}[\tau]\asymp\exp( 2\Delta\Psi/\eta)$, where $\tau$ is the exit time from the well, $\Delta\Psi$ is the energy barrier, i.e., the difference between the minimum and the corresponding maximum in the potential function, and $\eta$ is a temperature scaling \citep{berglund2011kramers}. Here, the crucial part is the exponential scaling in the energy barriers, which results in the Langevin diffusion \enquote{being stuck} in the potential minima for a very long time. We visualize such behavior in the experiments provided in~\Cref{fig:funnel,fig:gmm}. In our GMM experiment, only three modes are sufficiently covered, even for large terminal times, such as $T=2000$, which takes roughly ten times longer than DIS or PIS to sample $6000$ points on a GPU. Trying larger step sizes leads to bias and divergence, as can be seen in the last plot. We observe similar results for the Funnel distribution, although we need smaller step sizes to even converge. We note that one can counteract the bias by additionally using MCMC methods (e.g., MALA), but it remains hard to tune the parameters for a given problem.

\newpage

\begin{table}[!h]
    \centering
    \caption{Hyperparameters}
    \vspace{0.25em}
    \resizebox{\textwidth}{!}{\begin{tabular}{ll}
\toprule
\textbf{DIS SDE} & \\
inference SDE (corresponding to $Y$) & Variance-Preserving SDE with linear schedule~\citep{song2020score} \\
min. diffusivity $\sigma_{\mathrm{min}}$ & $0.1$ \\
max. diffusivity $\sigma_{\mathrm{max}}$ & $10$ \\
terminal time $T$ & $1$ \\
initial distribution $X_0^u$  & $\mathcal{N}(0,\mathrm{I})$ (truncated to $99.99\%$ of mass) \\ 
\midrule
\textbf{PIS SDE}~\citep{zhang2021path} & \\
uncontrolled process $X^0$ & scaled Brownian motion \\
drift $\cev{f}$ & $0$ (constant) \\
diffusivity $\cev{\sigma}$ & $\sqrt{0.2}$ (constant) \\
terminal time $T$ & $5$ \\
initial distribution & Dirac delta $\delta_0$ at the origin \\
\midrule
\textbf{SDE Solver} & \\
type & Euler-Maruyama~\citep{kloeden1992stochastic} \\
steps $N$ (see also figure descriptions) & $[100, 200, 400, 800]$ (each for $1/4$ of the total gradient steps $K$) \\
\midrule
\textbf{Training} & \\
optimizer & Adam~\citep{kingma2014adam} \\
weight decay & $10^{-7}$ \\
learning rate & $0.005$  \\
batch size $m$ & $2048$ \\
gradient clipping & $1$ ($\ell^2$-norm) \\
clip $\Phi_\theta^{(1)}, s \in [-c,c]$, $\Phi_\theta^{(2)}\in [-c,c]^d$ & $c=10$ ($\operatorname{step}\le 200$), $c=50$ ($200<\operatorname{step}\le 400$), $c=250$ (else) \\
gradient steps $K$ & $20000$ ($d\le 10$),  $80000$ (else) \\
framework & PyTorch~\citep{paszke2019pytorch} \\
GPU & Tesla V100 ($32$ GiB) \\
number of seeds & $10$ \\
\midrule
\textbf{Network $\operatorname{fourier-emb_c}$}~\citep{tancik2020fourier} & \\
dimensions & $[0,T] \to \R^{2c} \to \R^{c} \to \R^{c}$ \\
architecture & $ \operatorname{linear} \circ [\varrho\circ \operatorname{linear}]\circ \operatorname{fourier-features}$ \\
activation function $\varrho$ & GELU~\citep{hendrycks2016gaussian} \\
\midrule
\textbf{Network $\Phi_\theta^{(1)}$}~\citep{zhang2021path} & \\
width $c$ & $64$ ($d\le 10$), $128$ (else) \\
dimensions & $\R^{d} \times [0,T] \to \R^{c} \times \R^{c} \to \R^{c} \to \R^{c} \to \R^{c} \to \R^{d}$ \\
architecture & $\operatorname{linear} \circ [\varrho \circ \operatorname{linear}]\circ [\varrho\circ \operatorname{linear}] \circ [\varrho \circ \operatorname{sum}] \circ \left[\operatorname{linear}, \operatorname{fourier-emb_{c}}\right]$ \\
activation function & GELU~\citep{hendrycks2016gaussian} \\
bias initialization in last layer & $0$ \\
weight initialization in last layer & $0$ \\
\midrule 
\textbf{Network $\Phi_\theta^{(2)}$}~\citep{zhang2021path} & \\
dimensions & $[0,T] \to \R^{64} \to \R^{64}  \to \R$ \\
architecture & $[\operatorname{linear}\circ \varrho]\circ [\operatorname{linear}\circ \varrho]\circ  \operatorname{fourier-emb_{64}}$ \\
activation function $\varrho$ & GELU~\citep{hendrycks2016gaussian} \\
bias initialization in last layer & $1$ \\
weight initialization in last layer & $0$ \\

\midrule
\textbf{Evaluation} & \\
exponentially moving average $\bar{\theta}$ & last $1500$ steps (updated every $5$-th) with decay $1 - \frac{1}{1 + \operatorname{ema\_step}/ \, 0.9}$  \\
samples $M$  &        $6000$ \\
\bottomrule
\end{tabular}}
\label{tab:hp}
\end{table}

\newpage
\begin{figure*}[!t]
    \centering
    \includegraphics[width=0.95\linewidth]{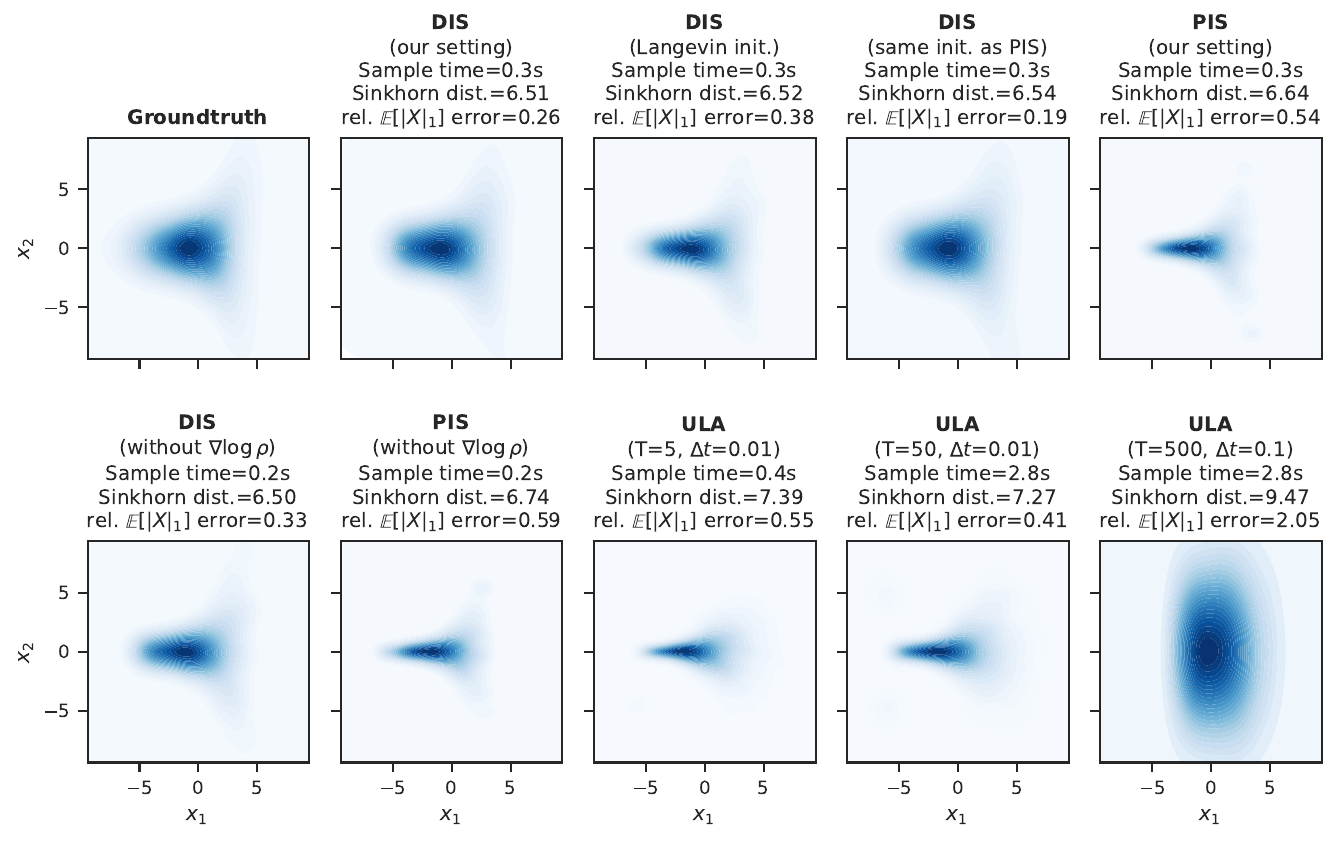}
    \vspace{-0.5em}
    \caption{KDE plots visualizing samples from DIS and PIS trained on the Funnel example with different parametrizations, as described in~\Cref{app:implementation}. For comparison, we also present results of ULA with two different settings and the runtimes to sample $6000$ points on a single GPU.}
    \label{fig:funnel}
\end{figure*}
\begin{figure*}[!t]
    \centering
    \includegraphics[width=0.9\linewidth]{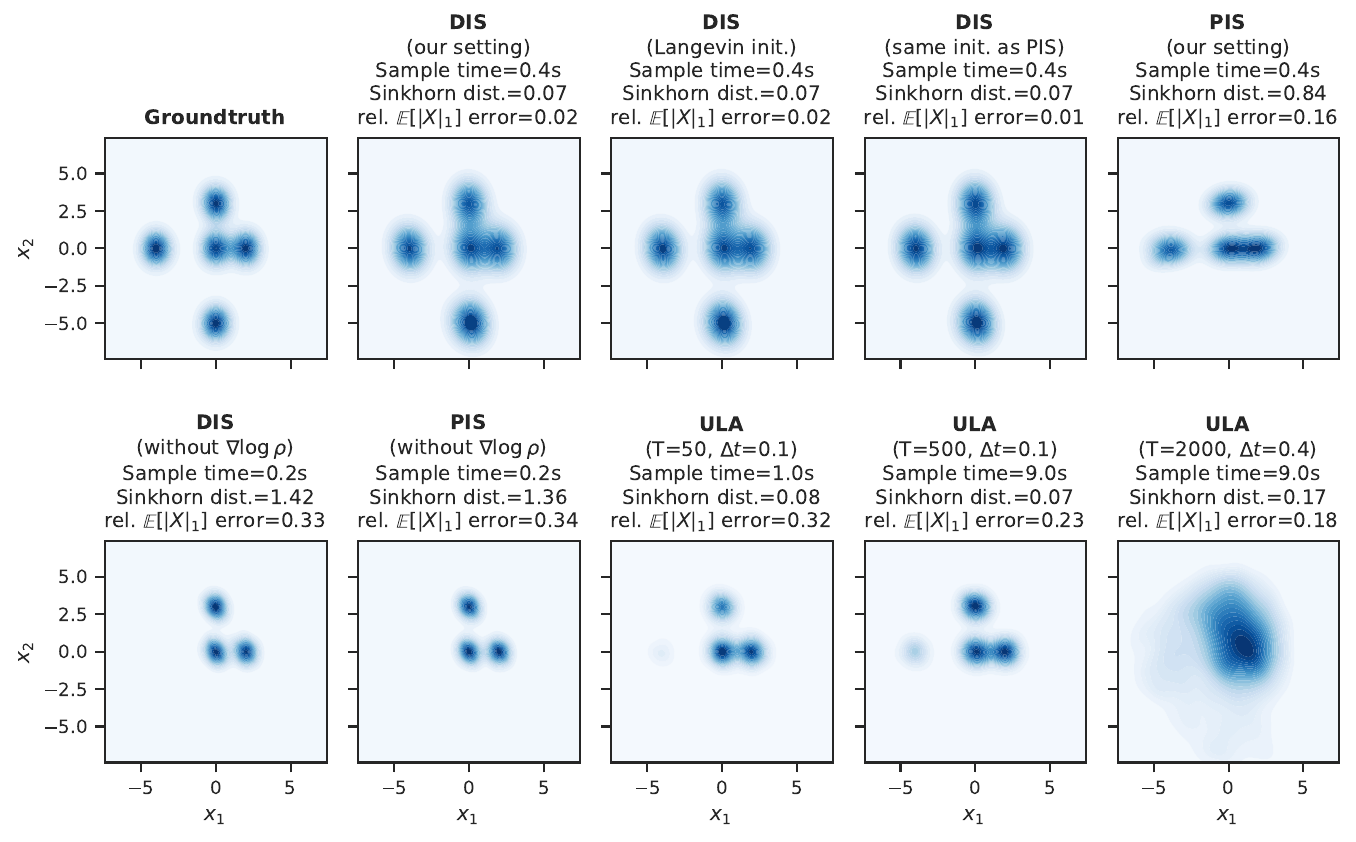}
    \vspace{-0.75em}
    \caption{The same experiment as in~\Cref{fig:funnel} for a GMM example with means at $(0,0)$, $(0,2)$, $(3,0)$, $(-4,0)$, $(0,-5)$ and covariance matrix $\frac{1}{5}\mathrm{I}$.}
    \label{fig:gmm}
\end{figure*}

\begin{figure*}[!t]
    \centering
    \includegraphics[width=0.9\linewidth]{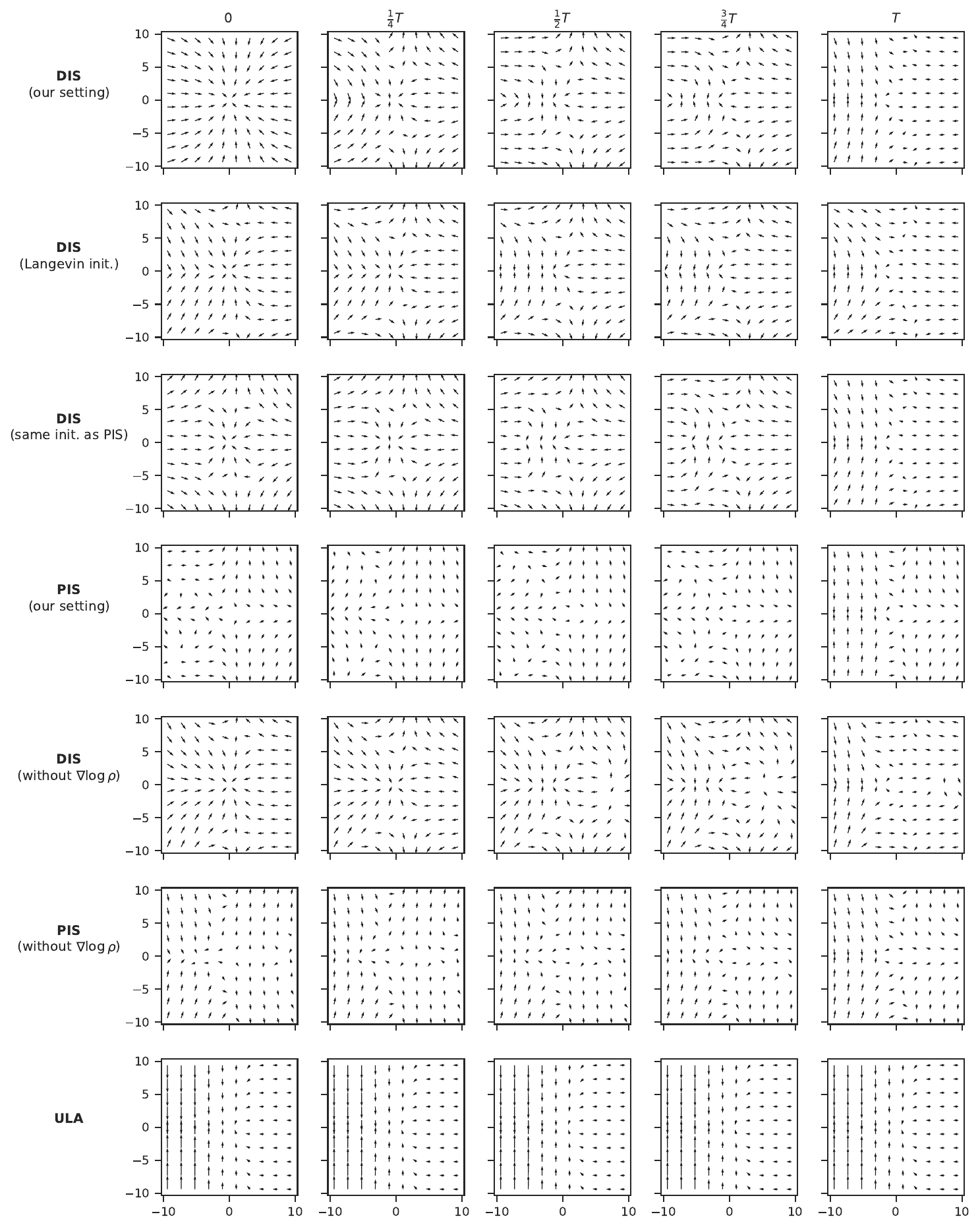}
    \vspace{-0.75em}
    \caption{The SDE drifts of the different methods in~\Cref{fig:funnel} for $x_1,x_2\in [-10,10]$ and $x_3,\dots,x_{10}=0$.}
    \label{fig:funnel_drift}
\end{figure*}

\begin{figure*}[!t]
    \centering
    \includegraphics[width=0.9\linewidth]{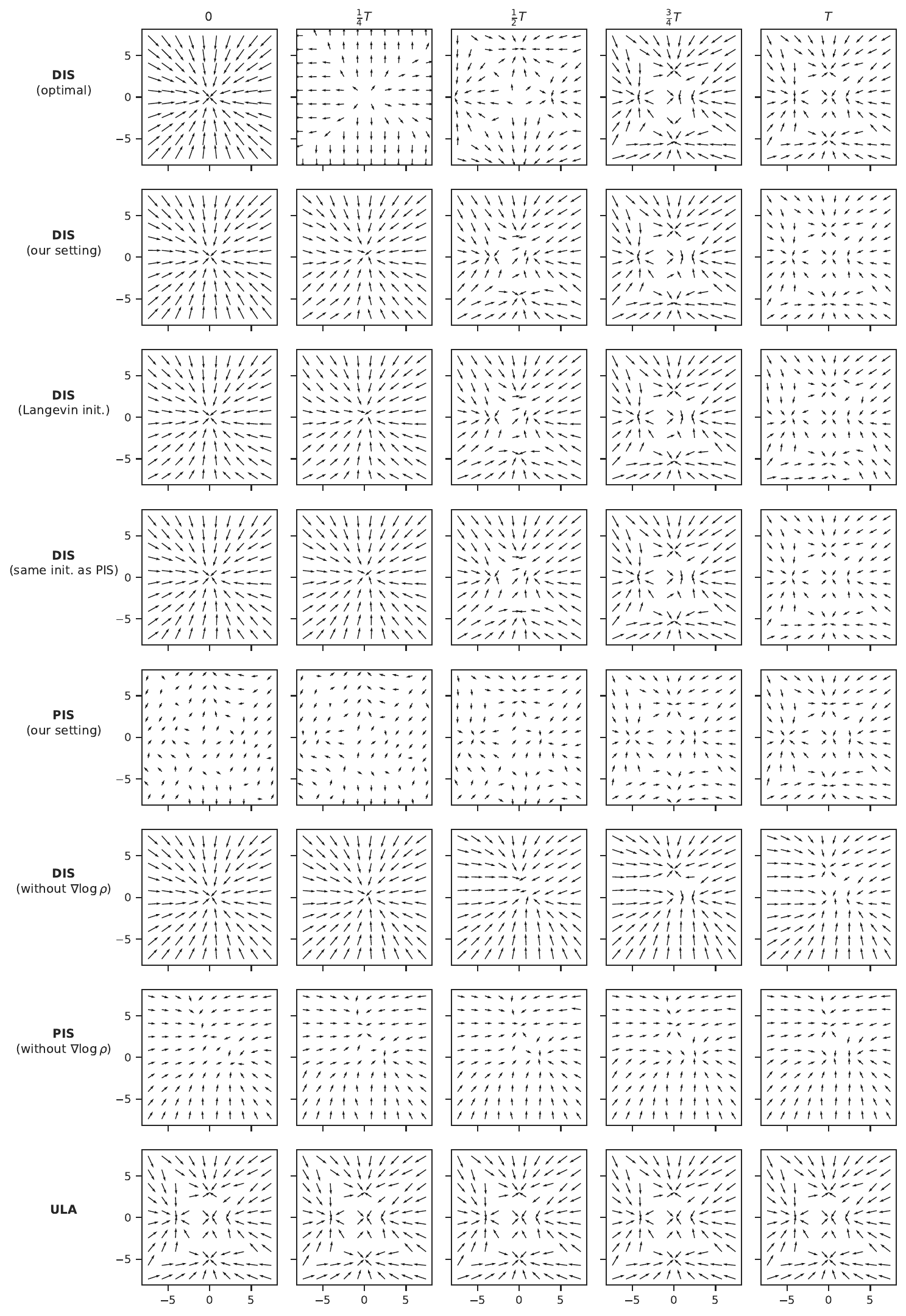}
    \vspace{-0.5em}
    \caption{The same visualization as in~\Cref{fig:funnel_drift} for the GMM example in~\Cref{fig:gmm} and $x_1,x_2\in[-8,8]$. We compute the optimal drift for DIS similar to~\Cref{app: diffusion sampling}.}
    \label{fig:gmm_drift}
\end{figure*}

\end{appendices}
\end{document}